\documentclass[12pt]{article} 
\usepackage[margin=1in]{geometry} 
\usepackage{graphicx} 
\usepackage{hyperref} 
\usepackage{amsmath} 
\usepackage{graphicx}%
\usepackage{multirow}%
\usepackage{amsmath,amssymb,amsfonts}%
\usepackage{verbatim}
\usepackage{amsthm}%
\usepackage{mathrsfs}%
\usepackage{xcolor}
\usepackage[title]{appendix}%
\newtheorem{theorem}{Theorem}[section]
%
\newtheorem{corollary}[theorem]{Corollary}
\newtheorem{lemma}[theorem]{Lemma}

\newtheorem{remark}{Remark}[section]%

\newtheorem{definition}{Definition}[section]

\title{Curse of Dimensionality in Neural Network Optimization}
\usepackage{times}
\usepackage{bbm}
\usepackage{mathtools}
\usepackage{esint}

\newcommand{\acknowledgements}[1]{\section*{Acknowledgements}\small{#1}}
\usepackage{authblk}
\usepackage{footnote}

\author[1]{Sanghoon Na\thanks{First author: shna2020@umd.edu}}
\author[2]{Haizhao Yang\thanks{Corresponding author: hzyang@umd.edu}}
\affil[1]{Department of Mathematics, University of Maryland College Park}
\affil[2]{Department of Mathematics and Computer Science, University of Maryland College Park}

\begin{document}

\maketitle

\begin{abstract}
This paper demonstrates that when a shallow neural network with a Lipschitz continuous activation function is trained using either empirical or population risk to approximate a target function that is $r$ times continuously differentiable on $[0,1]^d$, the population risk may not decay at a rate faster than $t^{-\frac{4r}{d-2r}}$, where $t$ denotes the time parameter of the gradient flow dynamics. This result highlights the presence of the curse of dimensionality in the optimization computation required to achieve a desired accuracy. Instead of analyzing parameter evolution directly, the training dynamics are examined through the evolution of the parameter distribution under the 2-Wasserstein gradient flow. Furthermore, it is established that the curse of dimensionality persists when a locally Lipschitz continuous activation function is employed, where the Lipschitz constant in $[-x,x]$ is bounded by $O(x^\delta)$ for any $x \in \mathbb{R}$. In this scenario, the population risk is shown to decay at a rate no faster than $t^{-\frac{(4+2\delta)r}{d-2r}}$. Understanding how function smoothness influences the curse of dimensionality in neural network optimization theory is an important and underexplored direction that this work aims to address.
\end{abstract}

\begin{keywords}
  Wasserstein Gradient Flow, Curse of Dimensionality, Neural Network Optimization, Smooth Functions, Barron Space
\end{keywords}

\section{Introduction}
The curse of dimensionality refers to the exponential growth of computational complexity or data requirements with respect to the dimension of the computation or input space. This phenomenon arises in various fields, including Nearest Neighbor algorithms~\cite[Chapter 19]{shalev2014understanding}, numerical methods for solving partial differential equations~\cite{ames2014numerical}, and kernel-based methods~\cite{von2004distance}. It is also observed in the theory of artificial neural networks, particularly in approximation theory~\cite{mhaskar1996neural,devore1989optimal,grohs2023lower,yarotsky2017error,yarotsky2018optimal,CiCP-28-5,SHEN2022101,doi:10.1137/20M134695X} and generalization theory~\cite{li2022robust,yu2024generalizability,JMLR:v25:22-0719}. 

The significance of the curse of dimensionality extends beyond infeasible computational complexity and limited resources; it also restricts a model’s ability to learn and generalize, particularly in high-dimensional spaces. Therefore, understanding this phenomenon and developing strategies to overcome it remains a crucial research topic. In neural network approximation and generalization theory, the theoretical analysis and design of neural network architectures or algorithms to mitigate the curse of dimensionality—whether in terms of the required number of network parameters or the necessary amount of data—are active areas of research ~\cite{bach2017breaking,bach2024learning,poggio2017and,suzuki2018adaptivity,montanelli2019deep,cabannes2021overcoming,grohs2022mathematical,berner2020analysis,JMLR:v25:22-0719,chen2023deep,chen2024letdatatalkdataregularized}.

The curse of dimensionality has rarely been explored in the context of neural network optimization theory, particularly concerning the computational expense of gradient descent-based training. This is largely due to the inherently challenging nature of the non-convex optimization problem. Extensive research has been dedicated to analyzing convergence properties under an over-parameterized (i.e., sufficiently wide) regime~\cite{allen2019convergence,chen2022feature,du2018gradient,du2019gradient,li2020learning,liu2022loss,oymak2020toward,soltanolkotabi2018theoretical,zhou2021local,zou2019improved,zou2020gradient}. Most of these studies aim to establish positive results, demonstrating linear convergence to global or local minima of the empirical risk function with high probability, provided that certain assumptions on network width and training data hold. Although a negative result was shown in~\cite{shamir2019exponential} that exponential convergence time can occur, this result is derived from a one-dimensional linear neural network—a highly specific and atypical setting. Furthermore, in that case, the exponential dependence is only related to the depth of the neural network, instead of the dimension of the learning target.

While the curse of dimensionality in neural network optimization remains an open question, an interesting negative result is presented in~\cite{wojtowytsch2020can} for shallow network training without imposing any assumptions on network width. Specifically, it is shown that there exists a Lipschitz continuous target function for which the population risk cannot decay faster than $t^{-\frac{4}{d-2}}$ under the gradient flow of either empirical risk or population risk in the mean-field regime.  This result suggests that, in general, when learning Lipschitz continuous functions using a shallow neural network, gradient flow training requires at least $\Omega((\frac{1}{\epsilon})^{\frac{d-2}{4}})$ units of time to achieve a population risk smaller than $\epsilon > 0$. The space of Lipschitz continuous functions is vast, making it unsurprising that a particularly challenging target function can be found within this space, leading to the curse of dimensionality in optimization. 

A fundamental question, however, is whether this curse persists when considering a more restricted and structured function space. In this paper, the focus is placed on smooth function spaces for two primary reasons: 1) Smooth functions frequently arise as solutions to partial differential equations (PDEs). It has been conjectured that deep learning-based PDE solvers may circumvent the curse of dimensionality associated with high-dimensional PDEs \cite{doi:10.1073/pnas.1718942115,SIRIGNANO20181339,RAISSI2019686,GU2021110444}. 2) The smoothness of a learning target can introduce additional beneficial structures that could potentially mitigate learning difficulties. Therefore, it is crucial to determine whether smoothness is the key property required to overcome the curse of dimensionality. To address this question, the impact of target function smoothness on the curse of dimensionality in neural network optimization is investigated, a topic that has not been extensively explored in the literature. In particular, the results obtained align with findings in neural network approximation theory, where it is well established that, in general, a shallow neural network requires $O(\epsilon^{-\frac{d}{r}})$ neurons to approximate a function in $C^r$ within a $d$-dimensional space~\cite{mhaskar1996neural,yarotsky2017error,yarotsky2018optimal,yarotsky2020phase}. 

The answer to the fundamental question raised above can be formalized as follows.

\begin{theorem}\label{informal-1}
    Let the training samples be independent and identically distributed from the uniform distribution on $[0,1]^d.$ Let $\sigma:\mathbb{R}\to\mathbb{R}$ be a Lipschitz continuous activation function and $r$ be a positive integer with $r<d/2.$ There exists a target function $\phi\in C^r([0,1]^d)$ such that, when a shallow neural network with activation function $\sigma$ is trained in the mean-field regime by the gradient flow of either the population risk or the empirical risk to learn $\phi$, then
    \[
    \limsup_{t \to \infty} \big[ t^{\gamma} \|f_t-\phi\|_{L^2([0,1]^d)}^2 \big] = \infty,
    \]
    holds for all $\gamma > \frac{4r}{d-2r}.$ Here, $f_t$ denotes the shallow neural network at training time $t.$
\end{theorem}
In the worst-case scenario, the $L^2$ population risk cannot decay at a rate faster than $t^{-\frac{4r}{d-2r}}$.  This result suggests that, in general, when learning a $r$ times continuously differentiable function using a shallow neural network, gradient flow training requires at least $\Omega((\frac{1}{\epsilon})^{\frac{d-2r}{4r}})$ units of time to achieve a population risk smaller than $\epsilon>0$. For fixed $\epsilon$ and $r$, this quantity grows exponentially with the dimension $d$, illustrating the curse of dimensionality in neural network training. Note that there is no assumption on the number of training samples and the neural network width, which makes Theorem~\ref{informal-1} holds uniformly. A more formal mathematical statement for Theorem~\ref{informal-1} appears as Theorem~\ref{global}.

Furthermore, a new problem concerning the impact of activation functions on the curse of dimensionality in neural network optimization is addressed. While most commonly used activation functions, such as the rectified linear unit (ReLU), Gaussian-error linear unit (GELU), Sigmoid, Tanh, Swish, and Sinusoid, are Lipschitz continuous, a growing body of research has focused on activation functions that do not possess this property. Examples include the quadratic activation function $\sigma(x) = x^2$, which has been utilized to analyze the optimization landscape and generalization ability of shallow neural networks~\cite{soltanolkotabi2018theoretical,du2018power,sarao2020optimization}, as well as the $\text{ReLU}^{k}$ activation function (or Rectified Power Unit) $\sigma(x) = \max\{0, x\}^k$, which has been applied in neural network approximation theory~\cite{yang2024optimal,siegel2022high,siegel2024sharp,yang2023nearly,yang2025nearlyoptimalapproximationrates} and in the study of partial differential equations~\cite{luo2020two}. Recently, an advanced activation function—comprising a combination of ReLU, the floor function $[x]$, the exponential function $2^x$, and the Heaviside function $\mathbf{1}_{x\geq 0}$—was proposed in~\cite{shen2021deep,shen2021neural} to enhance the approximation power of neural networks. As the study of novel activation functions continues to advance, it is natural to investigate how these functions influence the curse of dimensionality in neural network optimization. This issue has not been addressed in the literature, e.g., only Lipschitz continuous activation functions are considered in~\cite{wojtowytsch2020can}. In this paper, we settle this question for a broad family of locally Lipschitz continuous activation functions that includes several favorable cases, such as the quadratic activation and the $\text{ReLU}^{k}$ activation. The formal statement appears in Theorem \ref{local}.

Our contributions in this paper can be summarized as follows.
\begin{itemize}
    \item We establish in Theorem~\ref{poor} that in general, $C^r([0,1]^d)$ functions with $r<d/2$ are poorly approximated by two-layer neural networks. Specifically, the optimal approximation rate in the $L^2([0,1]^d)$ topology using Barron functions with a Barron norm bounded by $\kappa$ cannot exceed the rate $\kappa^{-\frac{2r}{d-2r}}$ for such functions. Note that our approximation result differs from the majority of neural network approximation theory literature, which typically expresses approximation rates in terms of the number of parameters in the network architecture. As a corollary, it is proven that $C^r([0,1]^d)$ is not contained in the Barron space when $r < d/2$. Although sufficient regularity, specifically $r > d/2 + 1$, is known to guarantee that a function belongs to the Barron space~\cite{barron1993universal,ma2022barron}, no prior results have explored the relationship between function regularity and Barron spaces for lower regularity. Our findings demonstrate that regularity with $r < d/2$ is insufficient for a function to belong to the Barron space.
    \item In Theorem~\ref{global}, we prove that learning functions that suffer from poor approximation, as identified in Theorem~\ref{poor}, requires an exponential training time under gradient flow to achieve a desired accuracy, leading to the curse of dimensionality in optimization. Mathematically, under gradient flow training of two-layer neural networks in the mean field regime, the population risk cannot decay faster than $t^{-\frac{4r}{d-2r}}$, highlighting the curse of dimensionality in neural network optimization. In Theorem~\ref{global}, we analyze the case of Lipschitz continuous activation functions and allow infinite-width shallow network training. If the activation function is continuously differentiable, then the gradient flow is guaranteed to exist. For piecewise differentiable activation functions such as ReLU and others, the existence of gradient flows has been established in~\cite{chizat2018global,wojtowytsch2020convergence}. Note that our focus is not on the existence of gradient flows.
    \item Finally, in Theorem~\ref{local}, we demonstrate that the curse of dimensionality in neural network optimization persists even when using locally Lipschitz continuous activation functions. Specifically, we show that in general, the population risk under gradient flow training of finite-width shallow neural network in the mean-field regime cannot decay faster than $t^{-\frac{(4+2\delta)r}{d-2r}}$ when learning a $C^r([0,1]^d)$ function using locally Lipschitz continuous activation function whose Lipschitz constant in $[-x,x]$ is bounded by $O(x^\delta)$ for any $x \in \mathbb{R}$. Notably, the activation functions $\sigma(x) = x^2$ and $\sigma(x) = \max\{0, x\}^k$ satisfy this condition.  
\end{itemize}

To the best of our knowledge, our work is the first mathematical paper that demonstrates the influence of the target function's regularity on the curse of dimensionality in neural network training. In contrast to most works in neural network optimization, our theorems do not impose any conditions on the neural network width or the sample size. The only assumption that the data is drawn from the uniform distribution in $[0,1]^d$,  a natural choice that lets us identify the population risk with one-half of the square of the $L^2$ norm.

\section{Related Works}
\subsection{Mean-field theory of neural networks and Wasserstein gradient flows}

The study of Wasserstein gradient flows in neural network training is motivated by the observation that, under mean-field scaling, the evolution of neural network parameters under time-accelerated gradient flow can be equivalently described by the evolution of their distribution via the 2-Wasserstein gradient flow. This framework not only characterizes the training dynamics of two-layer neural networks with a finite number of neurons due to the aforementioned equivalence, but also provides the advantage of describing the training process for infinite-width shallow neural networks. The application of Wasserstein gradient flows in the mean-field regime has been successful in analyzing the convergence of neural network training~\cite{chizat2018global,mei2018mean,rotskoff2018neural,lu2020mean,nitanda2021particle}. For a more detailed discussion on Wasserstein gradient flows in neural network training, see~\cite{chizat2018global,chizat2020implicit,wojtowytsch2020convergence}. For a broader perspective on Wasserstein gradient flows and optimal transport, refer to~\cite{villani2009optimal}.

\subsection{Barron spaces}

Barron spaces, introduced in~\cite{ma2022barron}, generalize Barron's seminal work~\cite{barron1993universal} in neural network approximation theory. A function $f: X \to \mathbb{R}$ is said to be a Barron function if it admits an integral representation of the form  
\begin{align}\label{intrep}
    f(x) = \int_{\mathbb{R} \times \mathbb{R}^{d} \times \mathbb{R}} a \sigma(w^T x + b) \pi(da \otimes dw \otimes db), \quad x \in X,
\end{align}  
for some Borel probability measure $\pi$ on $\mathbb{R} \times \mathbb{R}^{d} \times \mathbb{R}$, and if its Barron norm, as defined in~\cite{ma2022barron}, is finite. Here, $\sigma: \mathbb{R} \to \mathbb{R}$ denotes an activation function. Barron functions can be approximated by finite-width two-layer neural networks with a dimension-independent approximation rate in terms of the number of parameters, given various choices of activation functions. For example,~\cite{ma2022barron} demonstrated that for the ReLU activation, these functions exhibit both dimension-independent approximation rates and low statistical complexity. These favorable properties extend to Barron functions with certain other activation functions~\cite{li2020complexity}.

Both finite-width and infinite-width two-layer networks in the mean-field scaling can be expressed in the integral form given in~\eqref{intrep}. This integral representation facilitates the study of parameter evolution under gradient flow by leveraging the 2-Wasserstein gradient flow in the parameter distribution, as discussed in~\cite{wojtowytsch2020can,wojtowytsch2020convergence}. For a more comprehensive discussion on Barron spaces, see~\cite{ma2022barron,li2020complexity,weinan2022representation,wojtowytsch2021kolmogorov}. 

 In this work, we focus on Barron functions defined by~\eqref{intrep} equipped with the finite Barron norms adopted from~\cite{ma2022barron,li2020complexity,weinan2022representation,wojtowytsch2021kolmogorov}. Crucially, these norms are well-defined for only Lipschitz continuous activation functions, and therefore our primary analysis of Barron spaces is restricted to this class. For non-Lipschitz continuous activations, where these norms may be incompatible with our tools using 2-Wasserstein gradient flows, we focus instead on finite-width networks, which remains a realistic and important setting. We defer the detailed discussion to Section \ref{maintheoremproof} and Appendix \ref{localproof}. For further  literature on Barron spaces for general activation functions, including approximation rates and embedding properties, see~\cite{heeringa2024embeddings,siegel2020approximation,siegel2022high,siegel2024sharp}.

\section{Setup}\label{setup}
The main notations of this paper are listed below.
\begin{itemize}
    \item $Q = [0,1]^d$ denote the unit cube in $\mathbb{R}^d$.
    \item $U_Q$ represent the uniform distribution on $Q$.
    \item The set of continuous functions on $Q$ is denoted by $C(Q)$.
    \item $C^r(Q)$ denotes the set of functions that are $r$ times continuously differentiable on $Q$.
    \item For a normed vector space $X$, the norm of an element $x \in X$ is denoted by $\|x\|_X$.
    \item When $X = \mathbb{R}^d$, this refers to the Euclidean norm, which is simply written as $\|x\|$.
    \item For $x\in\mathbb{R}^d,$ the 1-norm for $x$ is denoted by $|x|_1=\sum_{i=1}^d |x_i|.$
    \item $B^X$ denotes the set of elements in $X$ with norm at most $1$, corresponding to the closed unit ball centered at $0$ in $X$.
    \item The open ball of radius $\epsilon$ centered at $x$ is denoted by $B_{\epsilon}(x)$ and when $x=0,$ we simply write as $B_\epsilon$.
    \item $B'_{\epsilon}(x)$ denotes the projection of the ball $B_{\epsilon}(x)$ from $\mathbb{R}^d / \mathbb{Z}^d$ onto $Q$.
\end{itemize}

    To illustrate this notation, consider the following examples: When $d=1$, the set $B'_{1/8}(1/16)$ is given by $B'_{1/8}(1/16) = [0, 3/16) \cup (15/16,1]$. When $d=2$, the set $B'_{1/4}(\frac{1}{2},0)$ is given by $B'_{1/4} \left( \frac{1}{2}, 0 \right) = \left\{ (x,y) \in [0,1]^2 : \left(x - \frac{1}{2} \right)^2 + y^2 \leq \left(\frac{1}{4} \right)^2 \right\} \cup \left\{ (x,y) \in [0,1]^2 : \left(x - \frac{1}{2} \right)^2 + (y - 1)^2 \leq \left(\frac{1}{4} \right)^2 \right\}$. If a constant $\alpha\in\mathbb{R}^n$ is written as $\alpha=\alpha_{\beta_1,\cdots,\beta_m}$ for some parameters $\beta_1,\cdots,\beta_m$, this means $\alpha$ is a constant that depends only on $\beta_1,\cdots,\beta_m$.

Let $f: X \to \mathbb{R}$ be a function defined on a compact set $X \subset \mathbb{R}^d$. Define $D_f$ as the collection of appropriate Borel probability measures $\pi$ in the integral representation~\eqref{intrep} to generate $f$. The Barron space consists of functions that admit the integral representation~\eqref{intrep} with a finite Barron norm. To emphasize the dependence on the activation function $\sigma$, the Barron space is denoted as $B_{\sigma}(X)$. In this paper, we only consider the Barron space where $\sigma$ is a Lipschitz continuous activation function.  If $D_f$ is nonempty, the Barron norm $\|\cdot\|_{B_\sigma}(X)$ of a Barron function $f$ is defined as follows: When $\sigma$ is the ReLU activation function, the norm is given by 
\begin{align}\label{ReLUnorm}
        \|f\|_{B_\sigma(X)}\coloneqq\inf_{\pi\in D_f}\mathbb{E}_\pi[|a|(\|w\|_1+|b|)] < \infty.
\end{align}
Otherwise, for other Lipschitz continuous activation functions, the norm is defined as
\begin{align}\label{othernorm}
    \|f\|_{B_\sigma(X)}\coloneqq\inf_{\pi\in D_f}\mathbb{E}_\pi[|a|(\|w\|_1+|b|+1)] < \infty.
\end{align}
For simplicity, since the primary focus is on the domain $X = Q$, the notation is further simplified by writing $B_\sigma$ henceforth.

If a Borel probability measure $\pi$ generates a function $f$ through the integral representation ($\ref{intrep}$), we denote it as $f_\pi$ to emphasize its dependence of $\pi.$ In this paper, we assume that the data distribution is uniform on $Q$. Then given a target function $f^*$, the population risk $\mathcal{R}_p$ and the empirical risk $\mathcal{R}_n$ are
\begin{align*}
    \mathcal{R}_p(\pi):=\frac{1}{2}\int_{Q}(f_\pi(x)-f^*(x))^2dx, \quad \mathcal{R}_n (\pi):=\frac{1}{2n}\sum_{i=1}^{n}(f_\pi(x_i)-f^*(x_i))^2
\end{align*}
where $x_i$'s are independent and identically distributed training samples drawn from the uniform distribution on $Q$. Note that the population risk can be written as  $\frac{1}{2}\|f_\pi-f^*\|^2_{L^2([0,1]^d)}$. For a Borel probability measure $\pi,$ we denote its second moment as
\begin{align*}
    N(\pi):=\int_{\mathbb{R}\times\mathbb{R}^d\times\mathbb{R}} a^2+\|w\|^2+b^2 \pi(da \otimes dw \otimes db).
\end{align*}

In this paper, we adopt the framework of \cite{wojtowytsch2020can} and investigate  gradient flow training as evolution of the parameter distribution under the 2-Wasserstein gradient flow. In this framework, if training started with the initial parameter distribution $\pi^0,$ the two-layer neural network at time $t$ is described by $f_{\pi^t}$, where $\pi^t$ is the parameter distribution at time $t$ under the 2-Wasserstein gradient flow. In the finite-width training regime with $m$ neurons, we denote the parameter distribution at time $t$ as $\pi_m^t$ to emphasize the number of neurons.

\section{Main Theorems}
Our first result establishes the existence of functions in $C^r(Q)$ that are poorly approximable by shallow neural networks. This result is formally stated in the following theorem.
\begin{theorem}\label{poor}
    Let $\sigma: \mathbb{R} \to \mathbb{R}$ be a Lipschitz continuous activation function, and let $r$ be a positive integer such that $r < d/2$. Then, there exists a function $\phi \in C^r(Q)$ satisfying
    \[
    \limsup_{\kappa \to \infty} \bigg[ \kappa^{\gamma} \inf_{\|f\|_{B_\sigma} \leq \kappa} \|\phi - f\|_{L^p(Q)} \bigg] = \infty,
    \]
    for any $\gamma > \frac{r/d}{1/2 - r/d} = \frac{2r}{d - 2r}$ and any $p \in [2, \infty]$.
\end{theorem}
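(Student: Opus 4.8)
The plan is an entropy‑comparison argument: the unit ball of $C^r([0,1]^d)$ contains exponentially many $L^2$‑separated functions, whereas the Barron ball $V_t:=\{f:\|f\|_{B_\sigma}\le t\}$ is thin in $L^2(U_Q)$, its log‑covering number growing only like $(t/\eta)^2$ up to logarithmic factors; when $r<d/2$ these two rates are incompatible. Since $[0,1]^d$ carries a probability measure, $\|\cdot\|_{L^p([0,1]^d)}\ge\|\cdot\|_{L^2([0,1]^d)}$ for every $p\in[2,\infty]$, so it suffices to produce a single $\phi\in C^r([0,1]^d)$ with $\limsup_{t\to\infty} t^\gamma d_\phi(t)=\infty$ for all $\gamma>\gamma_0:=\frac{2r}{d-2r}$, where $d_g(t):=\inf_{\|f\|_{B_\sigma}\le t}\|g-f\|_{L^2(U_Q)}$; this then gives the stated conclusion for all $p\in[2,\infty]$ simultaneously. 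For the $C^r$ side I would fix a bump $\psi\in C_c^\infty((0,1)^d)$ with $\|\psi\|_{L^2}=1$, set $R:=\|\psi\|_{C^r}$ and $\psi_{N,j}(x):=N^{-r}\psi(Nx-j)$ for $j\in\{0,\dots,N-1\}^d$ (disjoint supports, $\|\psi_{N,j}\|_{C^r}\le R$, $\|\psi_{N,j}\|_{L^2}^2=N^{-2r-d}$), and for $\xi\in\{-1,1\}^{N^d}$ put $g_\xi:=\sum_j\xi_j\psi_{N,j}$, so $\|g_\xi\|_{C^r}\le R$ and $\|g_\xi-g_{\xi'}\|_{L^2}^2=4\,d_H(\xi,\xi')N^{-2r-d}$. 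The Varshamov--Gilbert bound then yields, for each $N$, a family $\mathcal F_N$ of at least $e^{cN^d}$ functions in $\{g:\|g\|_{C^r}\le R\}$ that are pairwise $\ge\tfrac1{\sqrt2}N^{-r}$‑separated in $L^2(U_Q)$.

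For the Barron side, the point valid for any Lipschitz $\sigma$ is that the normalized neurons $h_{w,b}(x):=\sigma(w^Tx+b)/(\|w\|_1+|b|+1)$ are \emph{ridge functions with a fixed Lipschitz profile}: they are uniformly bounded by $C_\sigma:=\max(|\sigma(0)|,\mathrm{Lip}\,\sigma)$, and the class $\mathcal H:=\{\pm h_{w,b}\}$ obeys $\log N(\mathcal H,\eta,L^2(U_Q))\lesssim_{d,\sigma}\eta^{-1}\log(1/\eta)$ (cover the $\ell^1$‑ball of directions, and on each direction cover the profile inside the one‑dimensional Lipschitz ball). Reweighting the representing measure by $|a|(\|w\|_1+|b|+1)$ shows every $f\in V_1$ lies in $\overline{\mathrm{conv}}(\mathcal H\cup\{0\})$, so Maurey's empirical method combined with the Carl/Dudley convex‑hull entropy estimate (a class with $\log N\lesssim\eta^{-\alpha}$, $\alpha<2$, has convex hull with $\log N\lesssim\eta^{-2}$) gives $\log N(V_1,\eta,L^2(U_Q))\lesssim_{d,\sigma}\eta^{-2}(\log(1/\eta))^{O(1)}$, hence $\log N(V_t,\eta,L^2)\lesssim_{d,\sigma}(t/\eta)^2(\log(t/\eta))^{O(1)}$ by homogeneity. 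Now a pigeonhole argument: with $\eta:=\tfrac1{2\sqrt2}N^{-r}$, if $\log N(V_t,\eta/4,L^2)<\log|\mathcal F_N|$ then some $g\in\mathcal F_N$ has $d_g(t)>\eta/4$. Comparing the two bounds, this holds whenever $\eta\asymp t^{-\gamma_0}(\log t)^{-\kappa}$ for a fixed $\kappa=\kappa(d,r,\sigma)$ — and it is precisely the hypothesis $r<d/2$ that makes $2-d/r<0$, so the polynomial factor $\eta^{2-d/r}$ outgrows $t^2$. Thus $\sup_{\|g\|_{C^r}\le R}d_g(t)\gtrsim t^{-\gamma_0}(\log t)^{-\kappa}$ for all large $t$, and by rescaling the $C^r$‑norm, $\sup_{\|g\|_{C^r}\le\eta}d_g(t)\gtrsim_{\eta}t^{-\gamma_0}(\log t)^{-\kappa}$ as well.

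To pass from this worst‑case family bound to a single $\phi$ I would use Baire category in the Banach space $C^r([0,1]^d)$. Note $\|\cdot\|_{B_\sigma}$ is a semi‑norm, so each $V_t$ is convex, symmetric, and satisfies $V_t+V_t\subseteq V_{2t}$. Fix $\gamma>\gamma_0$ and, for $M,T\in\mathbb N$, let $C_{M,T}:=\{g\in C^r:\ t^\gamma d_g(t)\le M\ \text{for all }t\ge T\}$; this is closed, since $g\mapsto d_g(t)$ is $1$‑Lipschitz on $L^2\supseteq C^r$. If $C_{M,T}$ contained a ball $\{g:\|g-g_0\|_{C^r}\le\eta\}$, then for $\|h\|_{C^r}\le\eta$ and $t\ge T$ the inclusions $g_0\pm h\in C_{M,T}$ produce $v_1,v_2\in V_t$ with $\|2h-(v_1-v_2)\|_{L^2}\le4Mt^{-\gamma}$, and since $v_1-v_2\in V_{2t}$ this gives $d_h(t)\le 2Mt^{-\gamma}$ for all such $h$ and $t$ — contradicting the previous paragraph, because $t^{\gamma-\gamma_0}$ outgrows $(\log t)^{\kappa}$. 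Hence each $C_{M,T}$ is nowhere dense, $\bigcup_{M,T}C_{M,T}$ is meager, and its complement $\{g:\limsup_t t^\gamma d_g(t)=\infty\}$ is comeager, in particular nonempty. Intersecting these comeager sets over $\gamma=\gamma_0+1/j$, $j\in\mathbb N$, yields $\phi\in C^r([0,1]^d)$ with $\limsup_t t^\gamma d_\phi(t)=\infty$ for every $\gamma>\gamma_0$, which by the opening reduction is the assertion for every $p\in[2,\infty]$.

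The main obstacle is obtaining the sharp exponent $2$ in the Barron‑ball entropy bound for a \emph{general} Lipschitz activation; a naive Maurey‑plus‑$\varepsilon$‑net estimate only yields a strictly larger exponent, which would give a weaker rate than $\gamma_0$. The two ingredients that rescue the sharp exponent are (i) recognizing that neurons, even high‑frequency ones, are ridge functions with one fixed Lipschitz profile, so their $L^2$‑metric entropy grows like $\eta^{-1}$ rather than $\eta^{-d}$, and (ii) the convex‑hull entropy theorem, which is exactly the device that turns sub‑quadratic entropy of a dictionary into quadratic entropy of its convex hull. The remaining steps — the Varshamov--Gilbert packing, the pigeonhole per‑scale bound, and the Baire‑category upgrade to a single function — are routine once these entropy estimates are in place.
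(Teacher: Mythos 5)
Your proposal takes a genuinely different route from the paper's and, modulo some points that would need tightening, the overall strategy is sound. The paper proves Theorem~\ref{poor} by constructing \emph{numerical integration operators} $A_n(f)=\frac1n\sum_i\fint_{B'_{\epsilon_n}(X_n^i)}f$ (Lemma~\ref{suitablepts}), showing they converge at rate $n^{-1/2}$ on the Barron ball (via a Rademacher complexity bound for Lipschitz neurons) but at most $n^{-r/d}$ on the $C^r$ ball (via an explicit convolution-built fooling function that vanishes on all the sampling balls, Lemma~\ref{numerical_cod}), and then invoking Lemma~\ref{poor2}, a gliding-hump/condensation-of-singularities lemma, to extract a single $\phi\in C^r$ that is slowly approximable at infinitely many scales. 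Your argument replaces the operator construction by a direct \emph{metric-entropy comparison}: a Varshamov--Gilbert packing of the $C^r$ ball using disjoint scaled bumps, against a quadratic covering-number bound for the Barron ball via the convex-hull entropy theorem, followed by a pigeonhole at each scale and a Baire-category upgrade from the worst-case family bound to a single $\phi$. These two routes are dual in a well-understood sense: the paper's Rademacher bound for the Barron ball and your $(t/\eta)^2$ covering-number bound encode the same fact (via Sudakov/Dudley), the paper's fooling function and your bump packing both quantify the size of the $C^r$ ball, and the gliding-hump lemma and the Baire-category argument are the two standard ways to run a condensation-of-singularities proof; both yield the identical exponent $\gamma_0=\frac{2r}{d-2r}$.

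What you flagged as the crux is indeed the delicate point. The claim $\log N(\mathcal H,\eta,L^2(U_Q))\lesssim\eta^{-1}\log(1/\eta)$ is not justified in the proposal: the dictionary $\mathcal H=\{\pm h_{w,b}\}$ is the image of the noncompact parameter set $\mathbb R^{d+1}$, and ``cover the $\ell^1$-ball of directions and then cover the one-dimensional profile'' glosses over the fact that the profile $s\mapsto\sigma(\|w\|s+b)/(\|w\|_1+|b|+1)$ is not fixed but itself a two-parameter family with no a priori compactness as $\|w\|,|b|\to\infty$. This is repairable --- the normalized neurons are uniformly bounded and uniformly Lipschitz in $x$, the parameter-to-function map is uniformly Lipschitz in the normalized parameters, and the degenerate limits exist --- but as written it is an assertion rather than a proof, and it is exactly where a sub-quadratic exponent must be established for the Carl--Kyrezi--Pajor convex-hull entropy theorem to deliver the sharp $\eta^{-2}$ rate. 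The paper sidesteps this entirely by quoting the Rademacher bound of~\cite{wojtowytsch2021kolmogorov} (which hinges on the Contraction Lemma of~\cite[Lemma 26.9]{shalev2014understanding}); if you want a self-contained entropy route, the cleanest fix is probably Maurey's empirical method applied to the $(d+1)$-parameter dictionary once you have carefully argued polylogarithmic metric entropy for $\mathcal H$ itself. One smaller remark: the paper actually needs the lower bound to hold for the \emph{specific} quasi--Monte Carlo points chosen in Lemma~\ref{suitablepts} (hence the balls $B'_{\epsilon_n}(x_i)$ and the explicit fooling function), whereas your packing argument produces a lower bound on $\sup_{\|g\|_{C^r}\le\eta}d_g(t)$ free of any operator; this is a genuine simplification your approach buys, at the cost of the entropy machinery above.
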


 A direct consequence of Theorem~\ref{poor} is the relationship between the smoothness of function spaces and Barron spaces, as stated in the following corollary. This follows from a simple observation: if $C^r(Q)\subset B_\sigma(Q),$ then $\limsup_{\kappa \to \infty} \big[ \kappa^{\gamma} \inf_{\|f\|_{B_\sigma} \leq \kappa} \|\phi - f\|_{L^p(Q)} \big]=0$ for any $\phi \in C^r(Q),$ which contradicts Theorem \ref{poor}. Note that it has been established that when $r>d/2+1,$ $C^r$ functions belong to the Barron space with ReLU activation functions; see \cite[Section 4, point 15]{barron1993universal} and \cite[Corollary B.6]{wojtowytsch2020convergence}.
\begin{corollary}
    Let $\sigma: \mathbb{R} \to \mathbb{R}$ be a Lipschitz continuous activation function, and let $r$ be a positive integer such that $r < d/2$. Then $C^r(Q) \not\subset B_\sigma(Q)$.
\end{corollary}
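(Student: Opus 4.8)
The plan is to obtain the Corollary as an immediate consequence of Theorem~\ref{poor} by contradiction; the only work is to convert the asymptotic lower bound on the Barron approximation rate into a non-membership statement, and for this it is enough to use the exponent $p = 2$ together with a single fixed $\gamma > 2r/(d-2r)$.

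The argument I would give runs as follows. Suppose, for contradiction, that $C^r([0,1]^d) \subset B_\sigma([0,1]^d)$, and let $\phi \in C^r([0,1]^d)$ be the function supplied by Theorem~\ref{poor}. By the assumed inclusion $\phi$ lies in the Barron space, so by definition its Barron norm is finite; write $M := \|\phi\|_{B_\sigma} < \infty$. For every $t \geq M$ the choice $f = \phi$ is admissible in the constrained infimum, since $\|f\|_{B_\sigma} = M \leq t$, whence $\inf_{\|f\|_{B_\sigma} \leq t} \|\phi - f\|_{L^2([0,1]^d)} = 0$ and therefore $t^{\gamma}\inf_{\|f\|_{B_\sigma} \leq t} \|\phi - f\|_{L^2([0,1]^d)} = 0$ for all such $t$. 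Hence $\limsup_{t\to\infty}\bigl[t^{\gamma}\inf_{\|f\|_{B_\sigma}\leq t}\|\phi-f\|_{L^2([0,1]^d)}\bigr] = 0$, which contradicts Theorem~\ref{poor}, where this $\limsup$ equals $+\infty$. Thus no such inclusion is possible, i.e.\ $C^r([0,1]^d)\not\subset B_\sigma([0,1]^d)$.

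At the level of the Corollary there is essentially no obstacle: the step is a one-line unwinding of the definition of the Barron space as the set of functions with finite Barron norm, and all of the difficulty resides in Theorem~\ref{poor} itself. As a consistency check I would compare with the known positive direction, namely that $r > d/2 + 1$ forces membership in the ReLU Barron space (\cite{barron1993universal}, \cite[Corollary B.6]{wojtowytsch2020convergence}), so that the Corollary sharpens the low-regularity side of the picture while leaving the intermediate band $d/2 \le r \le d/2+1$ untouched. If one instead had to establish Theorem~\ref{poor} from scratch, which the Corollary does not require, I would expect the crux to be the explicit construction of a single $\phi \in C^r([0,1]^d)$, plausibly a lacunary sum of rescaled bump functions supported on the shrinking projected balls $B'_\epsilon(\cdot)$ from Section~\ref{setup} with amplitudes tuned to the target rate, combined with a volumetric or metric-entropy lower bound showing that the sublevel set $\{f : \|f\|_{B_\sigma} \le t\}$ is too low-dimensional to approximate $\phi$ in $L^2$ faster than $t^{-2r/(d-2r)}$.
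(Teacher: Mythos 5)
Your proof is correct and is essentially the same argument the paper gives: assume the inclusion, take $\phi$ from Theorem~\ref{poor}, observe that finiteness of $\|\phi\|_{B_\sigma}$ forces the constrained infimum (and hence the $\limsup$) to vanish, contradicting the theorem. The only difference is that you spell out the intermediate step ($f=\phi$ is admissible once $t\ge M$), which the paper leaves implicit.
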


For functions that suffer from poor approximation as stated in Theorem~\ref{poor}, the curse of dimensionality also manifests in the number of training steps required when they are learned by shallow neural networks. This result is formally stated in the following theorem. Note that within the framework of Section~\ref{setup}, the training dynamics of a shallow neural network can be equivalently interpreted as the evolution of parameter measures $\pi^t$.

\begin{theorem}\label{global}
    Let $\sigma:\mathbb{R}\to\mathbb{R}$ be a Lipschitz continuous activation function and $r$ be a positive integer with $r<d/2.$ There exists a target function $\phi\in C^r(Q)$ with $\|\phi\|_{C^r} \leq 1$ such that, if the parameter measures $\pi^t$ with $N(\pi^0)<\infty$ evolve by the 2-Wasserstein gradient flow of either the population or the empirical risk, then they satisfy  
    \[
    \limsup_{t \to \infty} \big[ t^{\gamma} \mathcal{R}_p(\pi^t) \big] = \infty,
    \]
    for all $\gamma >  \frac{4r}{d - 2r}$. Here, the parameter measures $\pi^t$ define integral representations~\ref{intrep} of shallow neural networks $f_{\pi^t}$ with activation $\sigma$ under the training to learn $\phi.$
\end{theorem}

Theorem~\ref{global} holds uniformly in both the network width and the number of training data. This is due to two key reasons: 1) Theorem~\ref{poor} is an approximation result related to the size of Barron norms, not to the number of parameters. 2) The growth of the second moment of the parameter measure is at most sublinear in time, which is stated in Lemma~\ref{Second moment evolution}, and this bound does not involve the sample size.

For a certain class of locally Lipschitz activation functions, similar results hold when training finite-width shallow neural networks. This result is formally stated in the following theorem.
\begin{theorem}\label{local}
    Let $r$ be a positive integer with $r < d/2$, and let $\sigma$ be a locally Lipschitz continuous activation function. Define $L_x$ as the Lipschitz constant of $\sigma$ on the closed interval $[-x, x]$. Assume that $L_x = O(x^\delta)$ for some $\delta \ge0$. Then, for any positive integer $m$, there exists a function $\phi \in C^r(Q)$ such that if the parameter measures $\pi_m^t$, with $m$ neurons, evolve by the 2-Wasserstein gradient flow of either the population or empirical risk, then they satisfy  
    \[
    \limsup_{t \to \infty} \big[ t^{\gamma} \mathcal{R}_p(\pi_m^t) \big] = \infty,
    \]
    for all $\gamma >  \frac{(4 + 2\delta)r}{d - 2r}$. Here, the parameter measures $\pi^t_m$ define integral representations~\ref{intrep} of shallow neural networks $f_{\pi^t_m} $  with activation $\sigma$ and $m$ neurons, under the training to learn $\phi.$ 
\end{theorem}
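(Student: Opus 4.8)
The plan is to reproduce the architecture of the proof of Theorem~\ref{global}, with two substitutions: (i) the a~priori control of the Barron norm along the infinite‑width Wasserstein gradient flow is replaced by an elementary particle‑level estimate valid for $m$ atoms, and (ii) Theorem~\ref{poor} is applied not to the fixed activation $\sigma$ but to the rescaled, compactly truncated $1$‑Lipschitz activations that $\sigma$ induces on $Q$ along the flow. The extra factor $(2+\delta)$ in the exponent will appear exactly from the growth bound $L_x=O(x^{\delta})$ on the Lipschitz constant of $\sigma$. The bad function $\phi$ is fixed once and for all (from the uniform version of Theorem~\ref{poor} described below), so the conclusion holds for every admissible initialization, matching the quantifier order in the statement; note $N(\pi_m^0)<\infty$ automatically since $\pi_m^0$ is supported on $m$ finite vectors.

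First I would establish the a~priori bound $N(\pi_m^t)\le C_{m,\pi_m^0}\,(1+t)$. The $2$‑Wasserstein gradient flow of a measure supported on $m$ atoms is a time reparametrization of the gradient flow of $\theta=(a_i,w_i,b_i)_{i=1}^m\mapsto\mathcal R\!\big(\tfrac1m\sum_i\delta_{\theta_i}\big)$ on $\mathbb R^{m(d+2)}$; hence the risk is nonincreasing, and $\mathcal R(\pi_m^0)<\infty$ because $\sigma$ is continuous (existence of the flow for kinked activations being, as noted in the introduction, not our concern). The energy–dissipation identity gives $\int_0^\infty\sum_i\|\dot\theta_i(s)\|^2\,ds\lesssim_m\mathcal R(\pi_m^0)$, and two applications of Cauchy–Schwarz, in $s$ and in $i$, yield $\sum_i\|\theta_i(t)-\theta_i(0)\|\lesssim_m\sqrt{t\,\mathcal R(\pi_m^0)}$, so that
\[
N(\pi_m^t)\;\le\;\max_i\|\theta_i^t\|^2\;\le\;C_{m,\pi_m^0}\,(1+t).
\]
Since $m$ is fixed, it follows that $M_t:=\max_i(\|w_i^t\|_1+|b_i^t|)$ and $\max_i|a_i^t|$ are both $\lesssim_{m,d,\pi_m^0}(1+t)^{1/2}$.

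Next I would reduce to a $1$‑Lipschitz activation. For $x\in Q$ one has $w_i^{t\,\top}x+b_i^t\in[-M_t,M_t]$, where $\sigma$ is $L_{M_t}$‑Lipschitz with $L_{M_t}=O(M_t^{\delta})$; set $\chi_t(u):=L_{M_t}^{-1}\,\sigma\!\big(\max(-M_t,\min(M_t,u))\big)$, which is $1$‑Lipschitz with $|\chi_t(0)|$ bounded uniformly in $t$, and observe that $f_{\pi_m^t}(x)=\tfrac1m\sum_i\big(a_i^tL_{M_t}\big)\chi_t(w_i^{t\,\top}x+b_i^t)$ for all $x\in Q$. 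Therefore, by~\eqref{othernorm},
\[
\|f_{\pi_m^t}\|_{B_{\chi_t}(Q)}\;\le\;L_{M_t}\cdot\max_i|a_i^t|\cdot(M_t+1)\;\lesssim_{m,d,\sigma}\;N(\pi_m^t)^{1+\delta/2}\;\le\;S(t):=C'(1+t)^{1+\delta/2}.
\]
Now I would invoke the fact that the bad function $\phi\in C^r(Q)$ can be chosen to work simultaneously for the whole family of $1$‑Lipschitz activations with value at the origin bounded by a fixed constant — this is the step I expect to require the most care — which follows from the proof of Theorem~\ref{poor} because the metric‑entropy estimate for the Barron ball $\{\|f\|_{B_\sigma(Q)}\le s\}$ in $L^2(Q)$ depends on the activation only through its Lipschitz constant and its value at $0$ (an $O((s/\epsilon)^2)$‑neuron approximation together with a discretization of the activation on $[-2s,2s]$ has log‑covering number $\lesssim_d(s/\epsilon)^2\log(s/\epsilon)$, uniformly), so its comparison with the entropy $\asymp\epsilon^{-d/r}$ of the $C^r$‑ball — which dominates precisely when $r<d/2$ — goes through verbatim and yields one $\phi$ with $\limsup_{S\to\infty}S^{\gamma}g(S)=\infty$ for every $\gamma>\tfrac{2r}{d-2r}$, where $g(S):=\inf_{\chi}\inf_{\|f\|_{B_\chi(Q)}\le S}\|\phi-f\|_{L^2}$ and the outer infimum runs over that family. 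Fixing $\gamma'>\tfrac{(4+2\delta)r}{d-2r}$ and choosing $\gamma\in\big(\tfrac{2r}{d-2r},\tfrac{\gamma'}{2+\delta}\big)$, pick $S_j\to\infty$ with $S_j^{\gamma}g(S_j)\to\infty$ and $t_j$ with $S(t_j)=S_j$, so $t_j\asymp S_j^{2/(2+\delta)}$; since $f_{\pi_m^{t_j}}$ lies in the $\chi_{t_j}$‑Barron ball of radius $S(t_j)=S_j$,
\[
t_j^{\gamma'}\,\mathcal R_p(\pi_m^{t_j})\;\ge\;\tfrac12\,t_j^{\gamma'}\,g(S_j)^2\;\gtrsim\;\big(S_j^{\gamma}g(S_j)\big)^2\,S_j^{\,2\gamma'/(2+\delta)-2\gamma}\;\longrightarrow\;\infty
\]
as $j\to\infty$, because $\tfrac{\gamma'}{2+\delta}-\gamma>0$. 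This is the assertion for the population risk; the empirical‑risk case is identical, with $\mathcal R_n(\pi_m^0)$ replacing $\mathcal R(\pi_m^0)$ in the a~priori bound, since the lower bound on $\mathcal R_p$ only used the size of $f_{\pi_m^t}$.
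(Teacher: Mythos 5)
Your overall strategy is genuinely different from the paper's. Where you rescale the activation to a time-dependent $1$-Lipschitz $\chi_t$ and push everything through a ``uniform over activations'' version of Theorem~\ref{poor}, the paper never leaves the fixed activation $\sigma$: it instead restricts attention to the class $F_{m,D}$ of width-$m$ networks with $N(\pi_m)\le D$, bounds $\mathrm{Rad}(F_{m,D},\cdot)$ directly via the Contraction Lemma on the bounded interval $[-\sqrt{mD(d+1)},\sqrt{mD(d+1)}]$ (Lemma~\ref{locallipchitzrad}, Corollary~\ref{Loc_rademacher_control}), and then feeds this into Lemma~\ref{poor1} with $X_k=F_{m,t_k}$, yielding Lemma~\ref{loclipclaim} before concluding as in Theorem~\ref{global}. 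Leveraging the fixed width $m$ is precisely what makes the Rademacher estimate elementary; your route trades this away and has to pay for it elsewhere.

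The payment is the step you yourself flag: you need one $\phi\in C^r(Q)$, with
\[
\limsup_{S\to\infty}\Big[S^{\gamma}\inf_{\chi}\inf_{\|f\|_{B_\chi(Q)}\le S}\|\phi-f\|_{L^2}\Big]=\infty \quad\text{for all }\gamma>\tfrac{2r}{d-2r},
\]
where the inner infimum ranges over \emph{all} $1$-Lipschitz activations with bounded value at $0$, and this is where the gap is genuine. First, the metric-entropy heuristic you offer is not what the paper proves; Theorem~\ref{poor} is proven via the operator/gliding-hump construction of Lemma~\ref{poor2} (itself resting on Lemma~\ref{poor1}), and that construction needs an explicit choice of evaluation points $\{X_n^i\}$ from Lemma~\ref{suitablepts} for which a Rademacher-type bound holds on the relevant ball. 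For your argument to close, that bound must hold \emph{simultaneously} for $\bigcup_\chi B^{B_\chi(Q)}$, i.e.\ the full convex hull of bounded ridge functions $a\chi(w^\top x+b)$ with $\chi$ ranging over all $1$-Lipschitz maps — a substantially richer set than a single $B^{B_\sigma}$, whose empirical Rademacher complexity cannot be handled by a single application of the Contraction Lemma and must be re-derived (e.g.\ by chaining over the activation class). Second, even granting a uniform rate bound $\inf_\chi\inf_{\|f\|_{B_\chi}\le S}\|\phi_S-f\|_{L^2}\gtrsim S^{-2r/(d-2r)}$ for a target $\phi_S$ depending on $S$, this does not by itself produce one fixed $\phi$ achieving the $\limsup$; that is exactly what the super-exponential time scales and the Hahn--Banach/sign-choice construction in Lemma~\ref{poor1} are for, and you would have to re-run that machinery with the time-dependent family $\chi_t$ woven in. Neither of these is a matter of routine detail-filling, so as written the proof has a hole precisely at its central claim, whereas the paper's $F_{m,D}$-based route (Lemmas~\ref{locallipchitzrad}--\ref{loclipclaim}) avoids both difficulties.

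Two smaller remarks. Your a priori bound $N(\pi_m^t)\lesssim 1+t$ via energy dissipation and two Cauchy--Schwarz steps is fine but is exactly the content of Lemma~\ref{Second moment evolution}, which the paper cites directly. And your normalization $\chi_t(u)=L_{M_t}^{-1}\sigma(\max(-M_t,\min(M_t,u)))$ needs a lower bound on $L_{M_t}$ to keep $|\chi_t(0)|$ uniformly bounded; this holds for large $t$ (assuming $\sigma$ nonconstant) but should be stated, since $L_{M_t}$ only comes with an upper bound $O(M_t^\delta)$ in the hypotheses.
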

Theorem~\ref{local} states that once we fix positive integers $r$ and $m$, then for any dimension $d>2r$, there exists $\phi \in C^r(Q)$ such that $\Omega((\frac{1}{\epsilon})^{\frac{d-2r}{(4+2\delta)r}})$ of units of time may be insufficient to achieve the population risk less than $\epsilon>0$ through gradient flow training via shallow neural network with $m$ neurons. This demonstrates the curse of dimensionality in finite-width shallow neural network training. Although $\phi$ depends on the width $m$, Theorem~\ref{local} holds uniformly in the number of training data. This uniformity follows from Lemma~\ref{Second moment evolution}, which is independent of the sample size.

It is noteworthy that when $\delta = 0$, the activation function $\sigma$ is globally Lipschitz continuous. In this case, Theorem~\ref{local} corresponds to the finite-width shallow neural network training result established in Theorem~\ref{global}.

\section{Key Lemmas}
\subsection{Growth of second moments under the 2-Wasserstein gradient flow}
An important lemma is introduced to demonstrate the sublinear growth of second moments under the 2-Wasserstein gradient flow. Consider the function $f_\pi(x) = \int_{\Theta} \phi(\theta, x) \pi(d\theta)$, expressed as an integral representation of parametrized functions $\{\phi(\theta, x)\}_{\theta \in \Theta}$. Let $f^*$ be the target function to be learned, and define the risk functional as  
\begin{align}\label{risk}
    \mathcal{R}(\pi) = \frac{1}{2} \int_{\mathbb{R}^d} (f_\pi(x) - f^*(x))^2 \mathbb{P}(dx),
\end{align}
for some data distribution $\mathbb{P}$ on $\mathbb{R}^d$. For instance, when $\mathbb{P}$ is the uniform distribution on $Q$, the risk functional \eqref{risk} corresponds to the population risk. Alternatively, if $\mathbb{P}$ is the empirical measure associated with a finite set of training samples $\{x_i\}_{i=1}^{n} \subset Q$, i.e., $\mathbb{P} = \frac{1}{n} \sum_{i=1}^{n} \delta_{x_i}$, then \eqref{risk} represents the empirical risk. In either case, if the parameter distribution $\pi^t$ evolves according to the 2-Wasserstein gradient flow of the risk functional $\mathcal{R}$, then the second moment $N(\pi^t)$ exhibits at most sublinear growth over time. This result is formally stated in the following lemma.

\begin{lemma}[{\cite[Lemma 3.3]{wojtowytsch2020convergence}}]\label{Second moment evolution}
    If $\pi^t$ evolves according to the 2-Wasserstein gradient flow of $\mathcal{R}$ and satisfies $N(\pi^0) < \infty$, then  
    \begin{align}
        N(\pi^t) \leq 2 [N(\pi^0) + \mathcal{R}(\pi^0) t]. 
    \end{align}
\end{lemma}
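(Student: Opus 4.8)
The plan is to reduce everything to a single geometric fact about the metric space $(\mathcal{P}_2, W_2)$ of probability measures with finite second moment, where $W_2$ is the $2$-Wasserstein distance. The key observation is that the second moment is itself a squared Wasserstein distance: writing $\delta_0$ for the Dirac mass at the origin of $\mathbb{R}\times\mathbb{R}^d\times\mathbb{R}$, one has $N(\pi) = \int \|\theta\|^2\,\pi(d\theta) = W_2(\pi,\delta_0)^2$, since the only coupling of $\pi$ and $\delta_0$ is the product. Thus it suffices to control $W_2(\pi^t,\delta_0)$, and by the triangle inequality $W_2(\pi^t,\delta_0)\le W_2(\pi^0,\delta_0)+W_2(\pi^t,\pi^0)$, so it suffices to control the distance $W_2(\pi^t,\pi^0)$ traveled along the flow, which is at most the length of the curve $s\mapsto \pi^s$ on $[0,t]$.

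The length is controlled by the energy dissipation structure of the gradient flow, which is the only property of $\mathcal{R}$ we use (so the argument is insensitive to whether $\mathbb{P}$ is uniform or empirical). Along the $2$-Wasserstein gradient flow of $\mathcal{R}$ the curve $t\mapsto\pi^t$ is a curve of maximal slope: its metric derivative $|\pi'|(t)$ equals $\|v_t\|_{L^2(\pi^t)}$, where $v_t = -\nabla_\theta \tfrac{\delta\mathcal{R}}{\delta\pi}(\pi^t)$ is the driving velocity field, and $\tfrac{d}{dt}\mathcal{R}(\pi^t) = -\|v_t\|_{L^2(\pi^t)}^2 = -|\pi'|(t)^2$ for a.e. $t$; this is precisely the setting of \cite{chizat2018global,wojtowytsch2020convergence}. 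Integrating the dissipation identity gives the a priori estimate $\int_0^t |\pi'|(s)^2\,ds = \mathcal{R}(\pi^0)-\mathcal{R}(\pi^t)\le \mathcal{R}(\pi^0)$. By Cauchy–Schwarz the length satisfies $\mathrm{Length}(\pi|_{[0,t]}) = \int_0^t |\pi'|(s)\,ds \le \sqrt{t}\,\big(\int_0^t|\pi'|(s)^2\,ds\big)^{1/2}\le \sqrt{t\,\mathcal{R}(\pi^0)}$. Combining with the triangle-inequality reduction above yields $\sqrt{N(\pi^t)} \le \sqrt{N(\pi^0)} + \sqrt{t\,\mathcal{R}(\pi^0)}$, and squaring together with $2\sqrt{ab}\le a+b$ gives $N(\pi^t)\le 2\big[N(\pi^0)+\mathcal{R}(\pi^0)\,t\big]$, the first claim. (Equivalently, one can run the flow-map version: with $\pi^t=(T_t)_\#\pi^0$ and $\dot T_t = v_t\circ T_t$, $\tfrac{d}{dt}N(\pi^t)=2\int\langle T_t(\theta),v_t(T_t(\theta))\rangle\pi^0(d\theta)\le 2\sqrt{N(\pi^t)}\,\|v_t\|_{L^2(\pi^t)}$, so $\tfrac{d}{dt}\sqrt{N(\pi^t)}\le |\pi'|(t)$, and integrate.)

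For the limit, note that the full dissipation integral converges: $\int_0^\infty |\pi'|(s)^2\,ds \le \mathcal{R}(\pi^0)<\infty$. From $\sqrt{N(\pi^t)}\le \sqrt{N(\pi^0)}+\int_0^t|\pi'|(s)\,ds$ it is enough to show $t^{-1/2}\int_0^t|\pi'|(s)\,ds\to 0$. Fix $\varepsilon>0$ and choose $T$ with $\int_T^\infty|\pi'|(s)^2\,ds\le\varepsilon^2$; then for $t>T$, splitting the integral and applying Cauchy–Schwarz to the tail gives $\int_0^t|\pi'|\,ds \le \int_0^T|\pi'|\,ds + \sqrt{t}\,\varepsilon$, so $\limsup_{t\to\infty} t^{-1/2}\int_0^t|\pi'|\,ds\le\varepsilon$; letting $\varepsilon\downarrow 0$ gives the claim, hence $N(\pi^t)/t\to 0$. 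The main obstacle, and the only nontrivial input, is the rigorous justification of the energy dissipation identity and the identification $|\pi'|(t)=\|v_t\|_{L^2(\pi^t)}$ when $\sigma$ is merely Lipschitz, so that $\theta\mapsto\phi(\theta,x)$ is Lipschitz but not necessarily $C^1$ and $\tfrac{\delta\mathcal{R}}{\delta\pi}$ must be handled through subdifferentials; this is supplied by the metric gradient-flow framework developed in \cite{chizat2018global,wojtowytsch2020convergence}, which we invoke, and is the reason the statement is attributed to \cite[Lemma 3.3]{wojtowytsch2020convergence}.
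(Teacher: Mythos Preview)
The paper does not supply its own proof of this lemma: it is stated with a direct citation to \cite[Lemma~3.3]{wojtowytsch2020convergence} and used as a black box. Your argument is correct and is essentially the standard proof one finds in that reference, namely combining the energy--dissipation identity $\tfrac{d}{dt}\mathcal{R}(\pi^t)=-\|v_t\|_{L^2(\pi^t)}^2$ with the observation $N(\pi)=W_2(\pi,\delta_0)^2$ and a Cauchy--Schwarz bound on the curve length; the tail argument for $N(\pi^t)/t\to 0$ is likewise the expected one. There is nothing to compare against in this paper beyond the citation, and you have correctly identified (and deferred to the cited work for) the one genuinely delicate point, the justification of the dissipation identity when $\sigma$ is only Lipschitz.
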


\subsection{Barron norms and second moments}\label{barron and second moments}
Let $\sigma: \mathbb{R} \to \mathbb{R}$ be an $L$-Lipschitz continuous activation function, and let $X \subset \mathbb{R}^d$ be a compact domain. Then, for any Barron function in $B_\sigma(X)$, bounds on its Barron norm can be established.

\begin{lemma}\label{BarronLipschitz}
    Any function $f \in B_\sigma(X)$ is Lipschitz continuous, with its Lipschitz constant bounded above by $L\|f\|_{B_\sigma(X)}$.
\end{lemma}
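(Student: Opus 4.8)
The plan is to work directly from the integral representation \eqref{intrep} and push the $L$-Lipschitz property of $\sigma$ through the integral sign. Fix $x,y\in X$ and any Borel probability measure $\pi\in D_f$ whose associated expectation in \eqref{ReLUnorm} or \eqref{othernorm} is finite; such measures exist precisely because $\|f\|_{B_\sigma(X)}<\infty$. A preliminary point I would record is that $f_\pi$ is genuinely well defined by an absolutely convergent integral: since $\sigma$ is $L$-Lipschitz, $|\sigma(w^Tx+b)|\le|\sigma(0)|+L(\|w\|_1\|x\|_\infty+|b|)$, and $\|x\|_\infty$ is bounded on the compact set $X$, so the integrand $a\,\sigma(w^Tx+b)$ is dominated by a constant multiple of $|a|(\|w\|_1+|b|+1)$, which is $\pi$-integrable. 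This legitimizes all the manipulations below and lets us write $f(x)-f(y)$ under a single integral.

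The key estimate is then elementary. Writing
\[
f(x)-f(y)=\int_{\mathbb{R}\times\mathbb{R}^d\times\mathbb{R}} a\bigl(\sigma(w^Tx+b)-\sigma(w^Ty+b)\bigr)\,\pi(da\otimes dw\otimes db),
\]
moving the absolute value inside, and using $L$-Lipschitz continuity together with H\"older's inequality ($\ell^1$--$\ell^\infty$ duality, and $\|\cdot\|_\infty\le\|\cdot\|$) to bound $|\sigma(w^Tx+b)-\sigma(w^Ty+b)|\le L|w^T(x-y)|\le L\|w\|_1\|x-y\|$, one gets
\[
|f(x)-f(y)|\le L\|x-y\|\int |a|\,\|w\|_1\,\pi(da\otimes dw\otimes db)\le L\|x-y\|\,\mathbb{E}_\pi\bigl[|a|(\|w\|_1+|b|+1)\bigr],
\]
and the analogous inequality with $\mathbb{E}_\pi[|a|(\|w\|_1+|b|)]$ in the ReLU case.

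Finally, since this bound holds for every admissible $\pi\in D_f$, I would take the infimum over such $\pi$ on the right-hand side, which by the definitions \eqref{ReLUnorm}--\eqref{othernorm} is exactly $\|f\|_{B_\sigma(X)}$; this yields $|f(x)-f(y)|\le L\|f\|_{B_\sigma(X)}\|x-y\|$ for all $x,y\in X$, which is the assertion. I do not expect a genuine obstacle here: the only points requiring care are the absolute convergence of the integral defining $f_\pi$ (so the difference may be written under one integral) and the bookkeeping of the two cases in the Barron norm, both routine. The slack between $\|w\|_1$ and $\|w\|_1+|b|$ (or $+|b|+1$) is harmless, as only an upper bound is needed.
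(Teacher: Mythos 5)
Your proof is correct and follows essentially the same route as the paper: push the $L$-Lipschitz bound on $\sigma$ through the integral, estimate $|w^T(x-y)|\le\|w\|_1\|x-y\|$, drop to the Barron-norm integrand, and take the infimum over $\pi\in D_f$. The only cosmetic differences are the choice of intermediate norm inequalities (you use $\ell^1$--$\ell^\infty$ H\"older with $\|\cdot\|_\infty\le\|\cdot\|_2$, the paper uses Cauchy--Schwarz with $\|\cdot\|_2\le\|\cdot\|_1$) and your extra care about absolute convergence and the two Barron-norm conventions, both of which are harmless additions.
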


The proof is straightforward and is provided in Appendix~\ref{Barronnorm}. This result yields a lower bound on $\|f\|_{B_\sigma(X)}$. The following lemma provides an upper bound using the second moments of $D_f$. This result plays a crucial role in the proofs of Theorems~\ref{global} and~\ref{local}. For its proof, see Appendix~\ref{Barronnorm}.

\begin{lemma}\label{Control of Barron norm}
    Let $\pi$ be a Borel probability measure on $\mathbb{R} \times \mathbb{R}^d \times \mathbb{R}$ such that $N(\pi) < \infty$. Then, the integral representation~\eqref{intrep} defines a Barron function with its Barron norm bounded above by $\|f\|_{B_\sigma(X)} \leq \left(\frac{\sqrt{d}}{2} + 1 \right) N(\pi) + \frac{1}{2}$.
\end{lemma}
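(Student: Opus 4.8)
The plan is to verify that $\pi$ itself is an admissible measure in $D_{f_\pi}$ and then to estimate the expectation defining the Barron norm directly, splitting it into three pieces and controlling each by $N(\pi)$ via Cauchy--Schwarz and weighted arithmetic--geometric-mean inequalities.

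First I would check that $f_\pi$ is well defined, so that $\pi \in D_{f_\pi}$ and in particular $D_{f_\pi} \neq \emptyset$. Since $\sigma$ is $L$-Lipschitz, $|\sigma(w^T x + b)| \leq |\sigma(0)| + L(\|w\|\,\|x\| + |b|)$, and for $x$ in the compact set $X$ one has $\|x\| \leq C_X$ for some constant $C_X$; hence $|a\,\sigma(w^T x + b)| \leq |\sigma(0)|\,|a| + L C_X\, |a|\|w\| + L\, |a||b|$. Each term on the right is $\pi$-integrable, because $\mathbb{E}_\pi|a| \leq (\mathbb{E}_\pi a^2)^{1/2} < \infty$ while $\mathbb{E}_\pi[|a|\|w\|]$ and $\mathbb{E}_\pi[|a||b|]$ are each at most $\tfrac12 N(\pi) < \infty$. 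Therefore the integral in \eqref{intrep} converges for every $x \in X$, $\pi$ is admissible, and $f_\pi \in B_\sigma(X)$ once a finite bound on the infimum is exhibited.

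Next I would bound the expectation. Consider first the non-ReLU case, where $\|f_\pi\|_{B_\sigma(X)} = \inf_{\pi' \in D_{f_\pi}} \mathbb{E}_{\pi'}[|a|(\|w\|_1 + |b| + 1)]$; since $\pi$ is one such measure it suffices to bound $\mathbb{E}_\pi[|a|(\|w\|_1 + |b| + 1)] = \mathbb{E}_\pi[|a|\|w\|_1] + \mathbb{E}_\pi[|a||b|] + \mathbb{E}_\pi[|a|]$. For the first term use $\|w\|_1 \leq \sqrt{d}\,\|w\|$ followed by $|a|\|w\| \leq \tfrac12(a^2 + \|w\|^2)$, giving $\mathbb{E}_\pi[|a|\|w\|_1] \leq \tfrac{\sqrt d}{2}\,\mathbb{E}_\pi[a^2 + \|w\|^2] \leq \tfrac{\sqrt d}{2}N(\pi)$. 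For the second term, $|a||b| \leq \tfrac12(a^2+b^2)$ yields $\mathbb{E}_\pi[|a||b|] \leq \tfrac12 N(\pi)$. For the third, $|a| \leq \tfrac12(a^2+1)$ yields $\mathbb{E}_\pi[|a|] \leq \tfrac12(\mathbb{E}_\pi a^2 + 1) \leq \tfrac12 N(\pi) + \tfrac12$. Summing gives $\|f_\pi\|_{B_\sigma(X)} \leq \bigl(\tfrac{\sqrt d}{2} + 1\bigr) N(\pi) + \tfrac12$. In the ReLU case the norm is the infimum of $\mathbb{E}_{\pi'}[|a|(\|w\|_1 + |b|)]$, which is dominated by the previous quantity, so the same bound holds.

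There is no serious obstacle here; the only points requiring care are (i) applying $\|w\|_1 \leq \sqrt d\,\|w\|$ \emph{before} Young's inequality so that the coefficient comes out as $\tfrac{\sqrt d}{2}$ rather than $\tfrac d2$, and (ii) handling the additive $1$ inside the non-ReLU norm through $|a| \leq \tfrac12(a^2+1)$ so that it contributes exactly the stated additive constant $\tfrac12$. One should also note that the well-definedness step uses the $L$-Lipschitz hypothesis on $\sigma$ in force throughout this subsection.
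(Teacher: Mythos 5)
Your proof is correct and follows essentially the same route as the paper: bound $\|w\|_1 \le \sqrt{d}\|w\|_2$, apply Young's inequality termwise to get the $(\sqrt{d}/2+1)N(\pi)+1/2$ bound, and use the Lipschitz hypothesis on $\sigma$ together with $N(\pi)<\infty$ to verify integrability so that $\pi \in D_{f_\pi}$. The only cosmetic differences are the order (you check well-definedness first, the paper does it after the estimate) and that you explicitly note the ReLU norm is dominated by the general one, which the paper leaves implicit.
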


\subsection{Slow approximation property across infinitely many time scales}
The following sequence plays a crucial role in constructing an element in a Banach space that exhibits poor approximation properties under appropriate time scales.

\begin{definition}\label{superex}
    A sequence $\{n_k\} \subset \mathbb{N}$ is said to be a \textit{super-exponentially increasing sequence} if it is strictly increasing with $n_1 \geq 2$ and satisfies  
    \begin{align}
        \sum_{l>k} \frac{1}{n_l} \leq \frac{2}{n_k^{k+1}}
    \end{align}
    for all $k \in \mathbb{N}$.
\end{definition}
A super-exponentially increasing sequence $\{n_k\}$ satisfies the inequality  
\begin{align*}
    \sum_{i\ge1} \frac{1}{n_i} = \frac{1}{n_1} + \sum_{l>1} \frac{1}{n_l} \leq \frac{1}{n_1} + \frac{2}{n_1^2} \leq \frac{1}{2} + \frac{2}{2^2} = 1.
\end{align*}
This implies that $\lim_{k\rightarrow\infty} n_k = \infty$. An example of a super-exponentially increasing sequence is given by $n_k = 2^{k^k}$. To verify this, observe that  
\begin{align*}
        \sum_{l> k}\frac{1}{n_l}&\le \sum_{j\ge1}2^{-k^k(k+j)^j}=\sum_{j\ge1} (\frac{1}{n_k})^{-(k+j)^j} \le \sum_{j\ge1} (\frac{1}{n_k^{k+1}})^j=\frac{1}{n_k^{k+1}}\times\frac{1}{1-\frac{1}{n_k^{k+1}}}\le \frac{2}{n_k^{k+1}}
    \end{align*}
Thus, the required condition holds for all $k \in \mathbb{N}$.

A technical lemma is introduced to demonstrate that if a sequence of linear operators exhibits different behavior in a Banach space $Y$ and a sequence of subsets $\{X_k\}_{k \geq 1}$, then there exists an element in the unit ball of $Y$ that is poorly approximated by the elements of $X_k$ under certain time scales. The proof is provided in Appendix~\ref{poor1proof}.

\begin{lemma}\label{poor1}
    Let $Y, Z, W$ be normed linear spaces such that $Y$ is a Banach space with a continuous embedding $Y \hookrightarrow Z$. Suppose there exist linear operators $A_n, A \in L(Z, W)$ satisfying  
    \[
        \|A_n - A\|_{L(Y, W)} \geq c_Y n^{-\beta}, \quad \|A_n - A\|_{L(Z, W)} \leq C_Z,
    \]
    for some $0 < \beta < \alpha$ and positive constants $c_Y, C_Z$ that do not depend on $n$. Moreover, suppose there exist a super-exponentially increasing sequence $\{n_k\}$, a sequence $\{m_k\} \subset \mathbb{N}$, and a sequence of subsets $\{X_k\}_{k\ge1} \subset Z$ such that $m_k = n_k^{[\sqrt{k}]}$ and  
    \begin{align}
        \sup_{x\in X_k} \|(A_{m_k} - A)(x)\|_W \leq \frac{c_Y m_k^{-\beta}}{8 n_k} = \frac{c_Y}{8} m_k^{-\beta - \frac{1}{[\sqrt{k}]}}
    \end{align}
    for all $k \geq 1$. Then, there exists an element $y \in B^Y$ such that for every $\gamma > \frac{\beta}{\alpha - \beta}$,  
    \begin{align*}
        \limsup_{k \to \infty} \bigg[ \bigg( \frac{m_k^{\alpha - \beta}}{n_k} \bigg)^\gamma \inf_{x \in X_k} \|x - y\|_Z \bigg] = \infty.
    \end{align*}
    That is, under the time scales $t_k = \big( \frac{m_k^{\alpha - \beta}}{n_k} \big)^\gamma$, the element $y$ is poorly approximated by the elements of $X_k$.
\end{lemma}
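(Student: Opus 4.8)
The plan is a condensation-of-singularities (``gliding hump'') construction: produce a single $y\in B^Y$ that, on infinitely many of the scales $k$, has a component the slab $X_k$ cannot cheaply reproduce. Write $\kappa$ for the norm of the embedding $Y\hookrightarrow Z$. For each $k$, the hypothesis $\|A_{m_k}-A\|_{L(Y,W)}\ge c_Y m_k^{-\beta}$ lets me choose, from the definition of the operator norm, a vector $u_k\in B^Y$ with $\|(A_{m_k}-A)u_k\|_W\ge\tfrac12 c_Y m_k^{-\beta}$ (and then $\|u_k\|_Z\le\kappa$). I then set $y=\sum_{k\ge1}\tfrac1{n_k}u_k$, which converges in the Banach space $Y$ since $\sum_k\tfrac1{n_k}\le1$ by super-exponentiality (Definition~\ref{superex}), so $\|y\|_Y\le1$ and $y\in B^Y$. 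Because $m_k=n_k^{[\sqrt{k}]}$ we have $\tfrac1{n_k}=m_k^{-1/[\sqrt{k}]}$, so the isolated ``$k$-th contribution'' $\tfrac1{n_k}\|(A_{m_k}-A)u_k\|_W\ge\tfrac12 c_Y m_k^{-\beta-1/[\sqrt{k}]}$ is of exactly the order of the threshold $\tfrac{c_Y m_k^{-\beta}}{8n_k}=\tfrac{c_Y}{8}m_k^{-\beta-1/[\sqrt{k}]}$ appearing in the hypothesis on $X_k$.

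The heart of the matter is the lower bound $\|(A_{m_k}-A)y\|_W\ge\tfrac14 c_Y m_k^{-\beta-1/[\sqrt{k}]}$ for all large $k$, i.e.\ that the other terms $\sum_{i\ne k}\tfrac1{n_i}u_i$ cannot cancel the $k$-th one. Split the remainder into a \emph{future} part $\sum_{i>k}\tfrac1{n_i}u_i$ and a \emph{past} part $v_{k-1}:=\sum_{i<k}\tfrac1{n_i}u_i$. For the future part the crude estimate $\|A_{m_k}-A\|_{L(Z,W)}\le C_Z$ suffices: $\|(A_{m_k}-A)(\sum_{i>k}\tfrac1{n_i}u_i)\|_W\le C_Z\kappa\sum_{i>k}\tfrac1{n_i}\le 2C_Z\kappa\,n_k^{-(k+1)}$, which is negligible compared with $m_k^{-\beta-1/[\sqrt{k}]}=n_k^{-([\sqrt{k}]\beta+1)}$ once $k>\beta^2$ — this is exactly where the super-exponential growth of $\{n_k\}$ is used. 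The \emph{past} part is the real obstacle: the same crude bound only gives $C_Z\kappa\sum_{i<k}\tfrac1{n_i}$, which does not decay in $k$, so one must instead exploit that $A_{m_k}-A$ acts negligibly on the already-fixed, ``coarse'' combination $v_{k-1}$; this is forced by the enormous scale separation $m_k=n_k^{[\sqrt{k}]}$ (the scale $m_k$ dwarfs every earlier scale $m_i$, $i<k$) together with the particular way the witnesses $u_i$ are selected, and it is the step I expect to require the most care.

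Granting the bound $\|(A_{m_k}-A)y\|_W\ge\tfrac14 c_Y m_k^{-\beta-1/[\sqrt{k}]}$, combine it with the hypothesis on $X_k$: for any $x\in X_k$, $\|x-y\|_Z\ge\tfrac1{C_Z}\bigl(\|(A_{m_k}-A)y\|_W-\|(A_{m_k}-A)x\|_W\bigr)\ge\tfrac1{C_Z}\bigl(\tfrac14 c_Y-\tfrac18 c_Y\bigr)m_k^{-\beta-1/[\sqrt{k}]}$, so $\inf_{x\in X_k}\|x-y\|_Z\ge\tfrac{c_Y}{8C_Z}m_k^{-\beta-1/[\sqrt{k}]}$ for all large $k$. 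Since $\tfrac{m_k^{\alpha-\beta}}{n_k}=m_k^{\alpha-\beta-1/[\sqrt{k}]}$,
\[
\Bigl(\tfrac{m_k^{\alpha-\beta}}{n_k}\Bigr)^{\gamma}\inf_{x\in X_k}\|x-y\|_Z\ \ge\ \tfrac{c_Y}{8C_Z}\,m_k^{\,\gamma(\alpha-\beta)-\beta-(\gamma+1)/[\sqrt{k}]},
\]
and as $k\to\infty$ the exponent tends to $\gamma(\alpha-\beta)-\beta$, which is strictly positive precisely when $\gamma>\tfrac{\beta}{\alpha-\beta}$, while $m_k\to\infty$; hence the right-hand side diverges and the $\limsup$ of the left-hand side is $+\infty$. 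Everything outside the past-term estimate — extracting norming vectors, completeness of $Y$, the super-exponential tail bound, and the exponent bookkeeping — is routine.
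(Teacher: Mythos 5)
There is a genuine gap, and you locate it yourself: the handling of the ``past'' terms. Unfortunately the mechanism you hope will close it --- that ``$A_{m_k}-A$ acts negligibly on the already-fixed, coarse combination $v_{k-1}$'' because of the scale separation $m_k=n_k^{[\sqrt{k}]}$ --- is not available. Nothing in the hypotheses says $A_{m_k}-A$ is small on the particular vectors $u_i$, $i<k$; those were chosen to make $\|(A_{m_i}-A)u_i\|_W$ large, which tells you nothing about $\|(A_{m_k}-A)u_i\|_W$. The only bound you have is the crude one, $C_Z\kappa\sum_{i<k}\frac{1}{n_i}$, which, as you correctly note, does not decay. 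So with the unsigned choice $y=\sum_k \frac{1}{n_k}u_k$, the past part can in principle cancel the $k$-th contribution in $\|(A_{m_k}-A)y\|_W$ and the lower bound collapses.

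The paper avoids this by not bounding the past part at all in norm: it controls its \emph{sign}. After extracting the norming vectors $y_{m_k}\in B^Y$, Hahn--Banach gives unit functionals $w_{m_k}^*\in W^*$ with $w_{m_k}^*\circ(A_{m_k}-A)(y_{m_k})=\|(A_{m_k}-A)(y_{m_k})\|_W$; set $L_k:=w_{m_k}^*\circ(A_{m_k}-A)\in Y^*$. Then build $y=\sum_i \frac{\epsilon_i}{n_i}y_{m_i}$ with $\epsilon_i\in\{-1,1\}$ chosen \emph{inductively} so that
\[
\epsilon_k\,L_k\!\left(\sum_{i<k}\frac{\epsilon_i}{n_i}y_{m_i}\right)\ \ge\ 0 .
\]
This is possible precisely because the functional $L_k$ and the past partial sum are both determined before $\epsilon_k$ is chosen. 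With this sign, the past term's contribution to $\epsilon_k L_k(y)$ is nonnegative, so it cannot cancel anything; only the future tail $\sum_{l>k}\frac{1}{n_l}$ must be beaten, which the super-exponential growth handles exactly as in your estimate. One then gets $|L_k(y)|\ge \frac{1}{n_k}\big(\frac{c_Y}{2}m_k^{-\beta}-C^Y n_k\sum_{l>k}\frac{1}{n_l}\big)$, and the rest of your bookkeeping (subtracting the $X_k$ hypothesis, dividing by $C_Z$, multiplying by $(m_k^{\alpha-\beta}/n_k)^\gamma$, and letting the exponent $[\sqrt{k}]((\alpha-\beta)\gamma-\beta)-\gamma-1\to\infty$) goes through essentially as you wrote it. The missing idea is thus the Hahn--Banach linearisation plus the inductive sign choice; without it the construction does not close.

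Two smaller points: your final assembly works with $|L_k(y)|$ rather than $\|(A_{m_k}-A)y\|_W$ directly (you need $|L_k(x)|\le\|(A_{m_k}-A)x\|_W\le$ the $X_k$ bound, and $|L_k(y-x)|\le C_Z\|y-x\|_Z$), but that is cosmetic once the sign trick is in place. Also, the divergence at the end should be argued from $n_k\to\infty$ and the exponent in $n_k$ tending to $+\infty$, not merely from $m_k\to\infty$ with a vanishing correction $-(\gamma+1)/[\sqrt{k}]$; the paper's exponent bookkeeping is in terms of $n_k$ and is cleaner.
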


\begin{remark}
    There are multiple choices for the sequence $\{m_k\}_{k \geq 1}$, and the choice $m_k = n_k^{[\sqrt{k}]}$ serves as a straightforward example that simplifies the proof. The construction of $y$ is highly dependent on the sequences $\{n_k\}_{k \geq 1}$ and $\{m_k\}_{k \geq 1}$, implying that the existence of such an element $y$ may not be unique. Furthermore, it is easily verified that $\lim_{k \to \infty} \bigg(\frac{m_k^{\alpha - \beta}}{n_k}\bigg)^\gamma = \infty$.
\end{remark}

Using Lemma~\ref{poor1}, the following result can be established, addressing the case where a sequence of linear operators exhibits opposite behavior between two normed linear spaces. This result provides an improvement over~\cite[Lemma 2.3]{wojtowytsch2021kolmogorov}, which required $0 < \beta < \alpha/2$. The following lemma extends this condition to allow $0 < \beta < \alpha$.

\begin{lemma}\label{poor2}
    Let $X, Y, Z, W$ be normed linear spaces such that $X \subset Z$, and let $Y$ be a Banach space with a continuous embedding $Y \hookrightarrow Z$. Suppose there exist linear operators $A_n, A \in L(Z, W)$ satisfying  
    \[
    \|A_n - A\|_{L(X, W)} \leq C_X n^{-\alpha}, \quad  
    \|A_n - A\|_{L(Y, W)} \geq c_Y n^{-\beta}, \quad  
    \|A_n - A\|_{L(Z, W)} \leq C_Z,
    \]
    for some $0 < \beta < \alpha$ and positive constants $C_X, c_Y, C_Z$ that do not depend on $n$. Then, there exists an element $y \in B^Y$ such that for every $\gamma > \frac{\beta}{\alpha - \beta}$,  
    \[
    \limsup_{\kappa \to \infty} \left( \kappa^{\gamma} \inf_{\|x\|_X \leq \kappa} \|x - y\|_Z \right) = \infty.
    \]
\end{lemma}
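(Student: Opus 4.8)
The plan is to obtain Lemma~\ref{poor2} as a direct consequence of Lemma~\ref{poor1}; the only real idea is to take the abstract sets $X_k$ in Lemma~\ref{poor1} to be norm balls of $X$ whose radii are tuned to the time scales $m_k^{\alpha-\beta}/n_k$. Concretely, I would fix any super-exponentially increasing sequence $\{n_k\}$ (e.g. $n_k = 2^{k^k}$, already checked to satisfy Definition~\ref{superex}), set $m_k = n_k^{[\sqrt{k}]}$ as in Lemma~\ref{poor1}, and define
\[
t_k := \frac{c_Y}{8\,C_X}\cdot\frac{m_k^{\alpha-\beta}}{n_k}, \qquad X_k := \{\, x \in X : \|x\|_X \le t_k \,\} \subset Z.
\]

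The only hypothesis of Lemma~\ref{poor1} not already assumed in Lemma~\ref{poor2} is the bound on $\sup_{x\in X_k}\|(A_{m_k}-A)(x)\|_W$, and this is exactly where the extra estimate $\|A_n-A\|_{L(X,W)} \le C_X n^{-\alpha}$ enters. Indeed, for every $x \in X_k$,
\[
\|(A_{m_k}-A)(x)\|_W \le \|A_{m_k}-A\|_{L(X,W)}\,\|x\|_X \le C_X m_k^{-\alpha}\, t_k = \frac{c_Y}{8}\cdot\frac{m_k^{-\beta}}{n_k} = \frac{c_Y m_k^{-\beta}}{8\,n_k},
\]
which (using $n_k = m_k^{1/[\sqrt{k}]}$ for the exponent bookkeeping) is precisely the inequality required in Lemma~\ref{poor1}. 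Together with the assumed bounds $\|A_n-A\|_{L(Y,W)} \ge c_Y n^{-\beta}$ and $\|A_n-A\|_{L(Z,W)} \le C_Z$, Lemma~\ref{poor1} then supplies an element $y \in B^Y$ such that, for every $\gamma > \frac{\beta}{\alpha-\beta}$,
\[
\limsup_{k\to\infty}\Big[\Big(\frac{m_k^{\alpha-\beta}}{n_k}\Big)^{\gamma}\,\inf_{x\in X_k}\|x-y\|_Z\Big] = \infty.
\]

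It remains to convert this discrete statement into the claimed continuous one. Since $\frac{m_k^{\alpha-\beta}}{n_k} = \frac{8C_X}{c_Y}\,t_k$ and, by construction, $\inf_{x\in X_k}\|x-y\|_Z = \inf_{\|x\|_X \le t_k}\|x-y\|_Z$, absorbing the fixed constant $(8C_X/c_Y)^{\gamma}$ yields $\limsup_{k\to\infty}\big[t_k^{\gamma}\inf_{\|x\|_X\le t_k}\|x-y\|_Z\big] = \infty$. Finally $t_k\to\infty$, because $m_k^{\alpha-\beta}/n_k = n_k^{[\sqrt{k}](\alpha-\beta)-1}$ with exponent tending to $+\infty$; hence there are arbitrarily large values $t = t_k$ along which $t^{\gamma}\inf_{\|x\|_X\le t}\|x-y\|_Z$ is unbounded, which forces $\limsup_{t\to\infty}\big(t^{\gamma}\inf_{\|x\|_X\le t}\|x-y\|_Z\big) = \infty$, as required.

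I do not anticipate a genuine obstacle here: all of the substance is packed into Lemma~\ref{poor1} (in particular the amplification gained from the choice $m_k = n_k^{[\sqrt{k}]}$, which is what relaxes the old condition $\beta<\alpha/2$ to $\beta<\alpha$), and the present argument is a bookkeeping reduction. The two points that need care are calibrating the constant $c_Y/(8C_X)$ in the radius $t_k$ so that the $X_k$-estimate matches the threshold of Lemma~\ref{poor1} exactly, and verifying $t_k\to\infty$ so that passing from the subsequence $\{t_k\}$ to all $t$ is legitimate.
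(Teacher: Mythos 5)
Your proposal is correct and follows essentially the same route as the paper's proof: you choose the same radius $t_k = \frac{c_Y}{8C_X}\frac{m_k^{\alpha-\beta}}{n_k}$, take $X_k = t_k B^X$, verify the required bound on $\sup_{x\in X_k}\|(A_{m_k}-A)(x)\|_W$ from the $L(X,W)$ estimate, and invoke Lemma~\ref{poor1} before passing from the subsequence $\{t_k\}$ to all $t$ via $t_k\to\infty$. No differences of substance.
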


\begin{proof}
    Choose any super-exponentially increasing sequence $\{n_k\}_{k\ge1}\subset \mathbb{N}$ and $m_k=n_k^{[\sqrt{k}]}.$ Now let $X_k=t_kB^X$ for $k\ge1,$ where $t_k=\frac{c_Y m_k^{\alpha-\beta}}{8C_X n_k}.$ Then, we have
\begin{align*}
        \sup_{x\in X_k} \|(A_{m_k}-A)(x)\|_W=t_k\|A_{m_k}-A\|_{L(X,W)}\le t_kC_Xm_k^{-\alpha}=\frac{c_Ym_k^{-\beta}}{8n_k}.
\end{align*}
Now from Lemma \ref{poor1}, there exists $y\in B^Y$ such that 
\begin{align*}
        \limsup_{k\rightarrow\infty}\big[ \big(\frac{m_k^{\alpha-\beta}}{n_k}\big)^\gamma \inf_{\|x\|_X\le t_k} \|x-y\|_Z \big]=\infty
    \end{align*}
holds for every $\gamma>\frac{\beta}{\alpha-\beta}$. Since $\frac{m_k^{\alpha-\beta}}{n_k}=\frac{8C_X}{c_Y}t_k,$ this implies
    \begin{align*}
    \limsup_{k\rightarrow\infty}\big(t_k^\gamma \inf_{\|x\|_X\le t_k} \|x-y\|_Z \big)=\infty
    \end{align*}
for every $\gamma>\frac{\beta}{\alpha-\beta}$. Note that since $\alpha>\beta$ and $\lim_{k\rightarrow\infty} n_k=\infty,$ we have
\begin{align*}
        \lim_{k\rightarrow \infty} t_k= \lim_{k\rightarrow \infty}\frac{c_Y m_k^{\alpha-\beta}}{8C_X n_k} = \lim_{k\rightarrow \infty}\frac{c_Y}{8C_X}n_k^{(\alpha-\beta)[\sqrt{k}]-1} = \infty.
\end{align*}
Therefore, we conclude
\begin{align*}
        \limsup_{\kappa \rightarrow \infty} \left( \kappa^{\gamma}\inf_{\|x\|_X\le \kappa} \|x-y\|_{Z} \right) \ge \limsup_{k\rightarrow\infty}\big(t_k^\gamma \inf_{\|x\|_X\le t_k} \|x-y\|_Z \big)=\infty
\end{align*}
for every $\gamma>\frac{\beta}{\alpha-\beta}$.
\end{proof}

\subsection{Low complexity estimates}
To apply Lemmas~\ref{poor1} and~\ref{poor2}, it is necessary to construct appropriate linear operators $\{A_n\}_{n \geq 1}$. Since the focus is on approximation rates in the $L^2$ topology, the space $Z$ in both Lemma~\ref{poor1} and Lemma~\ref{poor2} must be chosen as $L^2(Q)$. In the space $L^2(Q)$, there are functions with undefined point evaluation at certain points. Therefore, following the approach in~\cite{wojtowytsch2021kolmogorov}, we aim to define the linear operators $\{A_n\}_{n \geq 1}$ and $A$ as  
\begin{align}\label{intoperator}
    A_n(f) = \frac{1}{n} \sum_{i=1}^{n} \fint_{B'_{\epsilon_n}(X_n^i)} f(x)dx,  
    \quad A(f) = \int_Q f(x)dx,
\end{align}
where $\{X^i_n\}_{i=1}^{n} \subset Q$ represents an appropriate set of points and $\epsilon_n > 0$ is a suitable radius. The integration over the projected ball $B'_{\epsilon_n}(X_n^i)$ is employed to mitigate boundary effects on the domain $Q$, as also noted in~\cite{wojtowytsch2021kolmogorov}. To establish the validity of this approach, we introduce the following lemma, which demonstrates the existence of suitable points $\{X^i_n\}_{i=1}^{n} \subset Q$.

\begin{lemma}\label{suitablepts}
    Let $\sigma$ be an $L$-Lipschitz continuous activation function. Then for any $n \in \mathbb{N}$ and any constant $0 < \gamma_d \ll 1$, which is independent of $n$, there exist $n$ points $\{X^1_n, \cdots, X^n_n\} \subset Q$ satisfying  
    \begin{align*}
        \sup_{\phi \in B^X} \bigg\{ \frac{1}{n} \sum_{i=1}^{n} \fint_{B'_{\epsilon_n}(X^i_n)} \phi \, dx - \int_{Q} \phi \, dx \bigg\} &\leq 6L\sqrt{\frac{2\log(2d)}{n}}, \\
        \sup_{\phi \in B^Z} \bigg\{ \frac{1}{n} \sum_{i=1}^{n} \fint_{B'_{\epsilon_n}(X^i_n)} \phi \, dx - \int_{Q} \phi \, dx \bigg\} &\leq 3C,
    \end{align*}
    for $X = B_\sigma$, $Z = L^2(Q)$, $\epsilon_n = \gamma_d n^{-1/d}$, and $C=C_{d,\gamma_d}$.
\end{lemma}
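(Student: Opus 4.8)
\emph{Proof proposal.} The plan is the probabilistic method: draw $X^1_n,\dots,X^n_n$ independently and uniformly from $Q$ and show that, with positive probability, \emph{both} displayed inequalities hold, so that a configuration realizing both exists by a union bound over the two complementary events. Write $A_n(\phi)=\frac1n\sum_{i=1}^n\fint_{B'_{\epsilon_n}(X^i_n)}\phi\,dx$ and $A(\phi)=\int_Q\phi\,dx$, so the two left-hand sides are $\sup_{\phi\in B^{X}}(A_n-A)(\phi)$ and $\sup_{\phi\in B^{Z}}(A_n-A)(\phi)$ (both are genuine operator norms since $B^X,B^Z$ are symmetric). Since $\epsilon_n=\gamma_d n^{-1/d}<\tfrac12$, I will use throughout three elementary consequences of translation invariance on the torus $\mathbb R^d/\mathbb Z^d$: $|B'_{\epsilon_n}(x)|=\omega_d\epsilon_n^d=\omega_d\gamma_d^d/n$ for every $x$ ($\omega_d$ the volume of the Euclidean unit ball); $\int_Q \mathbf 1_{B'_{\epsilon_n}(x)}(z)\,dx=\omega_d\epsilon_n^d$ for every $z$; and if $X\sim\mathrm{Unif}(Q)$ and $Y\mid X\sim\mathrm{Unif}(B'_{\epsilon_n}(X))$ then $Y\sim\mathrm{Unif}(Q)$.

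For the $Z=L^2([0,1]^d)$ bound, $A_n-A$ is a bounded linear functional on $L^2(Q)$ with Riesz representer $g:=\frac{1}{n\,\omega_d\epsilon_n^d}\sum_{i=1}^n\mathbf 1_{B'_{\epsilon_n}(X^i_n)}-1$, so the left-hand side equals $\|g\|_{L^2(Q)}$. Expanding $\mathbb E\|g\|_{L^2}^2$, the diagonal terms contribute $\frac{1}{n\,\omega_d\epsilon_n^d}$ and the cross terms use $\mathbb E\,|B'_{\epsilon_n}(X^i_n)\cap B'_{\epsilon_n}(X^j_n)|=(\omega_d\epsilon_n^d)^2$ for $i\neq j$, giving $\mathbb E\|g\|_{L^2}^2=\frac{1}{\omega_d\gamma_d^d}-\frac1n\le\frac{1}{\omega_d\gamma_d^d}$. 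Replacing one sample point changes $\|g\|_{L^2}$ by at most $\sqrt{2/(n\,\omega_d\gamma_d^d)}$, so McDiarmid's inequality gives $\|g\|_{L^2}\le\mathbb E\|g\|_{L^2}+2(\omega_d\gamma_d^d)^{-1/2}\le 3C$ with probability at least $1-e^{-4}$, where $C=C_{d,\gamma_d}:=(\omega_d\gamma_d^d)^{-1/2}$.

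For the $X=B_\sigma$ bound, first note that $A_n-A$ annihilates constants, so replacing $\sigma$ by $\sigma-\sigma(0)$ alters neither side; hence I may assume $\sigma(0)=0$, which gives $\|\sigma(w^{T}x+b)\|_{L^\infty(Q)}\le L(\|w\|_1+|b|)$ and $\|\phi\|_{L^\infty(Q)}\le L$ for $\phi\in B^{X}$. Writing $\phi$ through its integral representation and pulling $|a|$ out against the Barron norm gives $\sup_{\phi\in B^{X}}(A_n-A)(\phi)\le\sup_{w,b}\frac{|(A_n-A)(\sigma(w^{T}\cdot+b))|}{\|w\|_1+|b|+1}$ (the denominator being the one appearing in the definition of $\|\cdot\|_{B_\sigma}$). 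Taking expectations over the sample, a symmetrization step bounds $\mathbb E\big[\sup_{\phi\in B^{X}}(A_n-A)(\phi)\big]$ by a Rademacher average, and the ball-average is removed by a \emph{jittering} argument: since $\fint_{B'_{\epsilon_n}(X^i_n)}\phi=\mathbb E[\phi(Y^i_n)\mid X^i_n]$ with $Y^i_n\sim\mathrm{Unif}(B'_{\epsilon_n}(X^i_n))$, Jensen's inequality shows the Rademacher average does not increase upon replacing each ball-average by point evaluation at $Y^i_n$, and the $Y^i_n$ are again i.i.d.\ uniform on $Q$. What remains is exactly the Rademacher complexity of the unit Barron ball under the uniform distribution, estimated in the standard way: reduce to the atoms $\pm\sigma(w^{T}x+b)/(\|w\|_1+|b|+1)$, apply the Ledoux--Talagrand contraction principle (valid because $\sigma$ is $L$-Lipschitz with $\sigma(0)=0$) to pass to linear functionals of $(x,1)$ with $\ell_1$-bounded coefficients, and apply Massart's finite-class lemma over the $2d$ coordinate directions, which produces the factor $\sqrt{2\log(2d)}$. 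This yields $\mathbb E\big[\sup_{\phi\in B^{X}}(A_n-A)(\phi)\big]\le cL\sqrt{2\log(2d)/n}$ for an absolute constant $c$; since replacing one sample point changes this supremum by at most $2L/n$, McDiarmid again upgrades the bound to $\sup_{\phi\in B^{X}}(A_n-A)(\phi)\le 6L\sqrt{2\log(2d)/n}$ with high probability.

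Combining the two high-probability events by a union bound, both inequalities hold with positive probability, so a suitable $\{X^1_n,\dots,X^n_n\}\subset Q$ exists. I expect the main obstacle to be the $X=B_\sigma$ estimate: the jittering reduction must be set up so that the mollification --- which is genuinely needed for the $L^2$ half, since $L^2$ functions lack point values --- costs nothing on the Barron side; and the Rademacher estimate for the Barron ball with a merely Lipschitz (non-homogeneous) activation requires care in the contraction step, because the normalization $1/(\|w\|_1+|b|+1)$ rescales the argument of $\sigma$, so the passage to a linear class (immediate for positively homogeneous $\sigma$ such as ReLU) needs an extra argument in general, and constants must be tracked to land on exactly $6L\sqrt{2\log(2d)/n}$.
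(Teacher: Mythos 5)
Your plan reproduces, in detail, the argument that the paper dispatches with a one-line citation to \cite{wojtowytsch2021kolmogorov}: draw $n$ i.i.d.\ uniform points, control the $L^2$ operator norm of $A_n-A$ via its Riesz representer, control the Barron supremum via the jittering reduction (the paper's Lemma~\ref{ball integral control}), symmetrization, contraction, and Massart, and conclude by a union bound with McDiarmid on both events. The $L^2$ half is essentially correct: the variance computation $\mathbb{E}\|g\|_{L^2}^2=\frac{1}{\omega_d\gamma_d^d}-\frac1n$ is right, and the bounded-difference concentration closes it (the bounded difference is $2/\sqrt{n\,\omega_d\gamma_d^d}$ rather than $\sqrt{2/(n\,\omega_d\gamma_d^d)}$, but you flag that constants must be tracked). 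The overall probabilistic-method structure, and especially the observation that one needs \emph{simultaneous} control of both supremums via a union bound, matches what the cited lemmas provide.

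The genuine gap is exactly the one you name in your closing paragraph, and it is more serious than an ``obstacle to track constants''. In the Barron estimate you reduce to bounding $\mathbb{E}_\zeta\sup_{w,b}\frac{|\sum_i\zeta_i\sigma(w^\top Y_i+b)|}{\|w\|_1+|b|+1}$ and then invoke the Ledoux--Talagrand contraction principle ``because $\sigma$ is $L$-Lipschitz with $\sigma(0)=0$''. But contraction requires a \emph{single fixed} Lipschitz map acting coordinatewise on a set $T\subset\mathbb{R}^n$. Here, after normalizing $(\tilde w,\tilde b)=(w,b)/s$ with $s=\|w\|_1+|b|+1$ so that $\|\tilde w\|_1+|\tilde b|\leq 1$, the quantity becomes $\sup_{s\geq1,\,\|\tilde w\|_1+|\tilde b|\leq1}|\sum_i\zeta_i\,\sigma_s(\tilde w^\top Y_i+\tilde b)|$ with $\sigma_s(u):=\sigma(su)/s$. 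Each $\sigma_s$ is $L$-Lipschitz and vanishes at $0$, but $\sigma_s$ varies with the supremum variable $s$; the coordinatewise comparison in the proof of the contraction lemma ($\phi(t_1)-\phi(t_1')\leq L|t_1-t_1'|$) fails when the two terms use $\sigma_s$ and $\sigma_{s'}$ for different $s,s'$. For positively homogeneous $\sigma$ (ReLU, ReLU$^k$) one has $\sigma_s=\sigma$ and the step is immediate; for a general Lipschitz $\sigma$ it is not, and no argument is supplied. Note the paper itself does not resolve this either --- it cites \cite[Lemma A.10]{wojtowytsch2021kolmogorov}, which is proved for ReLU --- so you have correctly located the point at which the statement as written requires an extra idea (e.g.\ an explicit chaining/dyadic decomposition over $s$, or a reduction that avoids the normalization by $s$), rather than a direct appeal to the contraction lemma.
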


\begin{proof}
    This result follows directly from~\cite[Lemma 3.3]{wojtowytsch2021kolmogorov} and~\cite[Lemma A.10]{wojtowytsch2021kolmogorov}. Any $\gamma_d$ which satisfies
    \begin{align*}
        \gamma_d\times\fint_{B_1}|x|dx=\frac{cd}{d+1}\frac{1}{[(d+1)w_d]^\frac{1}{d}}
    \end{align*}
    for some absolute constant $c\in(0,1)$ is an appropriate choice, which is described in the proof of ~\cite[Lemma 3.3]{wojtowytsch2021kolmogorov}. Here, $w_d$ denotes the volume of the unit ball in $\mathbb{R}^d.$
\end{proof}

\subsection{Curse of dimensionality in numerical integration}

The final step in applying Lemmas~\ref{poor1} and~\ref{poor2} is to determine an appropriate choice of $\gamma_d$ such that the linear integral operators~\eqref{intoperator}, constructed using the points from Lemma~\ref{suitablepts}, exhibit different behavior in $C^r(Q)$. Intuitively, for properly scaled values of $\epsilon_n$, the integral operators~\eqref{intoperator} should provide a good numerical approximation. Thus, it is natural to approach this problem using techniques from multivariate numerical integration, particularly in the context of the curse of dimensionality. For a detailed discussion on the curse of dimensionality in multivariate numerical integration, see~\cite{hinrichs2014curse,hinrichs2017product}.

The following lemma demonstrates that the worst-case error in approximating integration using the operators in~\eqref{intoperator} suffers from the curse of dimensionality in $C^r(Q)$. In the proof, a function in $C^r(Q)$ is constructed to vanish in every $B'_{\epsilon_n}(X^i_n)$. This approach is inspired by~\cite{hinrichs2014curse,hinrichs2017product}, where a \textit{fooling function} is obtained by applying a sequence of convolutions between a Lipschitz continuous function and scaled indicator functions of a ball. After constructing this function, an appropriate choice of $\epsilon_n$ is determined to control the $C^r$ norm and ensure proper integration over $Q$. The lemma is stated below, with its proof provided in Appendix~\ref{proof-numcod}.

\begin{lemma}\label{numerical_cod}
    Let $C^r(Q)$ denote the space of all $r$-times continuously differentiable functions on $Q$, equipped with the norm  
    \[
    \|f\|_{C^r} := \max_{|\beta|_1 \leq r} \|D^{\beta} f\|_{\infty},
    \]
    where $D^\beta$ represents the partial derivative of order $\beta \in \mathbb{N}_0^d$. Then, there exists a positive constant $\tau=\tau_{r,d}$ such that for any $\epsilon_n = \theta n^{-1/d}$ with $\theta=\theta_{d,r} \in (0, \tau]$ and any $n$ points $\{x_1, \cdots, x_n\} \subset Q$, one can construct a function $\psi \in C^r(Q)$ satisfying  
    \begin{align*}
        \|\psi\|_{C^r} \leq 1, \quad \int_Q \psi(x)dx \geq K_{\theta,d,r} n^{-r/d}, \quad \psi|_{B'_{\epsilon_n}(x_i)} = 0,
    \end{align*}
    for some positive constant $K_{\theta,d,r}$. Consequently, this implies that  
    \begin{align*}
        \sup_{\|g\|_{C^r} \leq 1} \bigg\{ \frac{1}{n} \sum_{i=1}^{n} \fint_{B'_{\epsilon_n}(x_i)} g \,dx - \int_{Q} g \,dx \bigg\} \geq K_{\theta,d,r} n^{-r/d}.
    \end{align*}
\end{lemma}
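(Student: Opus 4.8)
The plan is to build a \emph{fooling function} $\psi$ by the mollification-on-the-torus technique of Hinrichs et al.: a nonnegative function supported away from all the balls $B'_{\epsilon_n}(x_i)$, with a controlled $C^r$ norm and a non-negligible integral. The guiding heuristic is a scaling count. A function that is roughly constant on the bulk of $Q$ but drops to $0$ on $\bigcup_i B'_{\epsilon_n}(x_i)$ must vary over length scale $\epsilon_n \asymp n^{-1/d}$, so each of its $r$ derivatives costs a factor $\asymp \epsilon_n^{-1}\asymp n^{1/d}$; to keep $\|\psi\|_{C^r}\le 1$ the amplitude of $\psi$ must be $\lesssim \epsilon_n^{r}\asymp n^{-r/d}$, and since the balls occupy only a fixed fraction of $Q$ when $\theta$ is small, the integral of such a $\psi$ is $\gtrsim n^{-r/d}$. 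Making these two scalings meet is exactly what produces the rate $n^{-r/d}$ together with the constants $\tau_{r,d}$ and $K_{\theta,d,r}$.

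Concretely, I would pass to the torus $\mathbb{T}^d = \mathbb{R}^d/\mathbb{Z}^d$ (this is the point of the $B'$-notation: a function smooth on $\mathbb{T}^d$ restricts to a $C^r$ function on the fundamental domain $Q$ with no boundary artefacts). Set $\delta = \epsilon_n/(r+1)$, let $\chi = \mathbf{1}_{B_\delta}/|B_\delta|$ be the normalized indicator of the $\delta$-ball, and let $A = \{x\in\mathbb{T}^d : \operatorname{dist}(x,x_i)\ge 2\epsilon_n \text{ for all } i\}$ be the ``safe region''. Since $\bigcup_i B_{2\epsilon_n}(x_i)$ has measure at most $n\,w_d(2\epsilon_n)^d = w_d(2\theta)^d$, choosing $\tau_{r,d}$ with $w_d(2\tau_{r,d})^d\le 1/2$ forces $|A|\ge 1/2$ whenever $\theta\le\tau_{r,d}$. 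Then define
\[
  \psi \;=\; a\,\bigl(\mathbf{1}_A * \underbrace{\chi * \cdots * \chi}_{r+1}\bigr),
\]
with amplitude $a>0$ fixed below. Each convolution with $\chi$ enlarges the support by at most $\delta$, so $\operatorname{supp}\psi \subset \{\operatorname{dist}(\cdot,x_i)\ge 2\epsilon_n-(r+1)\delta = \epsilon_n\}$ (shrink $\delta$ marginally if the strict inequality for closed balls is wanted), whence $\psi$ vanishes on every $B'_{\epsilon_n}(x_i)$; and since $\psi\ge 0$ and $\int\chi=1$, we get $\int_Q\psi = \int_{\mathbb{T}^d}\psi = a\,|A|\ge a/2$.

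For the $C^r$ bound, for a multi-index $\beta$ with $|\beta|_1 = k\le r$ I would place the $k$ derivatives on $k$ distinct copies of $\chi$ among the $r+1$ available (enough factors, with at least one ``clean'' $\chi$ left so that the order-$r$ derivative is genuinely continuous, i.e. $\psi\in C^r$ and not merely $C^{r-1}$). Each $\partial_j\chi$ is a finite measure on $\partial B_\delta$ with total variation $\le \mathcal{H}^{d-1}(\partial B_\delta)/|B_\delta| = d/\delta$, so by $\|f*\mu\|_\infty\le\|f\|_\infty\|\mu\|_{TV}$ and $\|f*g\|_\infty\le\|f\|_\infty\|g\|_{L^1}$,
\[
  \|D^\beta\psi\|_\infty \;\le\; a\,\|\mathbf{1}_A\|_\infty\,(d/\delta)^k \;\le\; a\,(d/\delta)^r \;=\; a\,\Bigl(\tfrac{d(r+1)}{\theta}\Bigr)^r n^{r/d}.
\]
Choosing $a = \bigl(\tfrac{\theta}{d(r+1)}\bigr)^r n^{-r/d}$ yields $\|\psi\|_{C^r}\le 1$ and $\int_Q\psi \ge \tfrac12\bigl(\tfrac{\theta}{d(r+1)}\bigr)^r n^{-r/d} =: K_{\theta,d,r}\,n^{-r/d}$. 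The displayed consequence is then immediate by testing the supremum at $g=-\psi$ (which satisfies $\|g\|_{C^r}\le 1$ and $\fint_{B'_{\epsilon_n}(x_i)}g = 0$): $\frac1n\sum_i\fint_{B'_{\epsilon_n}(x_i)}g - \int_Q g = \int_Q\psi \ge K_{\theta,d,r}n^{-r/d}$.

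The main obstacle is the bookkeeping that makes all the scalings line up at once: the number of convolutions must be large enough that an iterated convolution of an indicator is truly $C^r$ (indicators gain only one degree of smoothness per convolution, and continuity of the order-$r$ derivative requires one extra clean factor), yet the enlargement $B_{\epsilon_n}(x_i)\rightsquigarrow B_{2\epsilon_n}(x_i)$ must stay small enough that the dead region occupies only a fixed fraction of $Q$ — this is what pins down $\tau_{r,d}$ — while the precise $(d/\delta)^r$ cost of the derivatives must be tracked so that $\tau_{r,d}$, the amplitude $a$, and $K_{\theta,d,r}$ come out consistently. The regularity of $\psi$ up to $\partial Q$ is handled cleanly, and for free, by working on $\mathbb{T}^d$ throughout.
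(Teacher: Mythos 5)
Your proposal is correct and follows the same Hinrichs-style fooling-function strategy as the paper's proof: build a nonnegative function vanishing near the sample points by iterated convolution with normalized indicators of balls of radius $\asymp\epsilon_n$, control the $C^r$ norm by the fact that each convolutional factor absorbs one derivative at the cost of a factor $\asymp 1/\epsilon_n$, and ensure the integral is $\gtrsim 1$ (before normalization) by choosing $\theta$ small so the dead region occupies at most half the cube. The main presentational differences from the paper are worth noting. You work directly on the torus $\mathbb{T}^d$, which absorbs all the boundary wrap-around automatically; the paper instead works on $\mathbb{R}^d$ and replaces each $x_i$ by a set $Y_i$ of $3^d$ integer translates so that the $B'$-balls are covered by ordinary Euclidean balls, a slightly clunkier but equivalent device. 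You also start the convolution chain from the indicator $\mathbf{1}_A$ and use $r+1$ mollification steps, while the paper starts from the explicit Lipschitz cutoff $h_{\rho'}(x)=\min\{1,\mathrm{dist}(x,P_{\rho'})/\rho'\}$ and then uses only $r$ steps; the two are interchangeable since one convolution promotes $L^\infty$ to Lipschitz. Finally, your derivative bound uses the total variation $\|\partial_j\chi\|_{TV}\le d/\delta$, while the paper uses a sharper slicing identity giving $2\omega_{d-1}/(\omega_d\delta)$ per derivative; this only changes the constant $K_{\theta,d,r}$, not the rate. The rest — normalization to $\|\psi\|_{C^r}\le 1$, the lower bound on $\int_Q\psi$, and testing the supremum at $g=-\psi$ — is identical.
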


\section{Proofs of the Main Theorems}\label{maintheoremproof}
In this section, we present the proofs of Theorems~\ref{poor} and~\ref{global}. While these proofs rely on Lemma~\ref{poor2}, the proof of Theorem~\ref{local} instead utilizes Lemma~\ref{poor1}. This distinction arises for two reasons: 1) When $\sigma$ is not a Lipschitz continuous function, the Contraction Lemma~\cite[Lemma 26.9]{shalev2014understanding} cannot be applied to bound the Rademacher complexity of the unit ball in $B_\sigma$. 2) If the activation is not Lipschitz continuous, bounding the Barron norm using the second moments (as in Lemma \ref{Control of Barron norm}) may be infeasible. We provide a more detailed discussion in Appendix~\ref{localproof}. However, if the analysis is restricted to finite-width training, which represents the realistic scenario, arguments similar to those presented in this section can be employed to establish Theorem~\ref{local}. The proof of Theorem~\ref{local} is provided in Appendix~\ref{localproof}.

\subsection{Proof of Theorem \ref{poor}}
\begin{proof}
    Define $X=B_\sigma(Q), Y=C^r(Q)$ equipped with the norm defined in Lemma \ref{numerical_cod}, $Z=L^2(Q),$ and $W=\mathbb{R}.$ Since $Q$ is a compact set, It is clear that $X,Y \hookrightarrow Z. $ Now we find linear operators $A_n,A$ that satisfies conditions in Lemma \ref{poor2}. First, choose $\gamma=\gamma_{d,r}$ as $0<\gamma \ll1$ and $\gamma\le\tau$, where $\tau$ is the constant in Lemma \ref{numerical_cod}. Set $\epsilon_n=\gamma n^{-1/d}.$ For any $n\in\mathbb{N}$, there exists $n$ points $\{X^1_n,\cdots,X^n_n\} \subset Q$ that satisfies Lemma \ref{suitablepts}. Then with Lemma \ref{numerical_cod}, we can conclude that these points satisfy inequalities:
    \begin{align*}
        &\sup_{\phi \in B^X} \big\{\frac{1}{n} \sum_{i=1}^{n} \fint_{B'_{\epsilon_n}(X^i_n)} \phi dx-\int_{Q} \phi dx \big\} \le  6L\sqrt{\frac{2\log(2d)}{n}},\\
        &\sup_{\phi \in B^Y} \big\{\frac{1}{n} \sum_{i=1}^{n} \fint_{B'_{\epsilon_n}(X^i_n)} \phi dx-\int_{Q} \phi dx \big\} \ge K_{\gamma,d,r} n^{-r/d}, \\
        &\sup_{\phi \in B^Z} \big\{\frac{1}{n} \sum_{i=1}^{n} \fint_{B'_{\epsilon_n}(X^i_n)} \phi dx-\int_{Q} \phi dx \big\} \le 3C_{d,\gamma}.    
    \end{align*}
    Define linear operators $A_n,A:Z\rightarrow \mathbb{R}$ as
    \begin{align}\label{operators}
        A_n(\phi)=\frac{1}{n} \sum_{i=1}^{n} \fint_{B'_{\epsilon_n}(X^i_n)} \phi dx, \quad A(\phi)=\int_{Q} \phi dx
    \end{align}
    Then $A_n,A$ satisfy the conditions in Lemma \ref{poor2}. Hence by Lemma \ref{poor2}, there exist a function $f \in C^r(Q)$ with $\|f\|_{C^r}\le 1$ such that
    \begin{align*}
        \limsup_{\kappa\rightarrow\infty}\big(\kappa^\gamma \inf_{\|\phi\|_{B_\sigma}\le \kappa} \|\phi-f\|_{L^2(Q)}\big)=\infty
    \end{align*}
    holds for every $\gamma > \frac{r/d}{1/2-r/d}=\frac{2r}{d-2r}.$ Note that since $Q$ is compact with Lebesgue measure $1$,  $\|g\|_{L^2(Q)} \le \|g\|_{L^p(Q)} $ holds for all continuous function $g$ on $Q$ and for all $p\in[2,\infty].$ Therefore, we can conclude
    \begin{align*}
        \limsup_{\kappa\rightarrow\infty}\big(\kappa^\gamma \inf_{\|\phi\|_{B_\sigma}\le \kappa} \|\phi-f\|_{L^p(Q)}\big)=\infty
    \end{align*}
    holds for every $\gamma > \frac{r/d}{1/2-r/d}=\frac{2r}{d-2r}$ and $p\in[2,\infty]$.
\end{proof}

\subsection{Proof of Theorem \ref{global}}

\begin{figure}[ht!]
    \centering
    \includegraphics[width=0.8\textwidth]{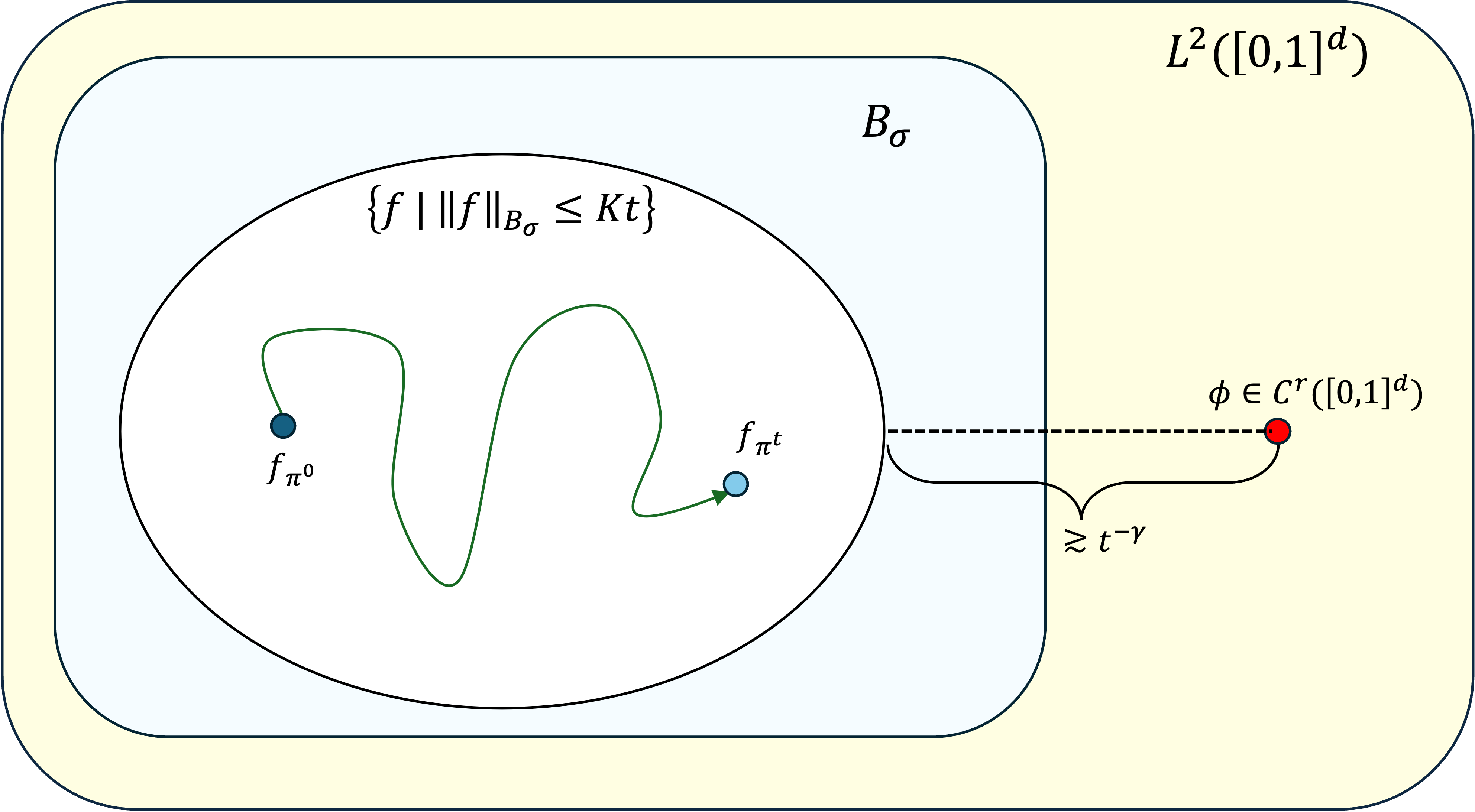}
    \caption{Geometric description of the proof of Theorem~\ref{global}. The green curve with arrow illustrates the sublinear growth of the Barron norm, which follows from Lemma~\ref{Second moment evolution} and Lemma~\ref{Control of Barron norm}. The shallow neural network at the initialization is denoted as $f_{\pi^t}$, represented as a circle filled with dark blue. The shallow neural network training time $t$ is denoted as $f_{\pi^t}$, represented as a circle filled with sky-blue. The black dotted line represents Theorem~\ref{poor}, the existence of $C^r(Q)$ function with slow approximation property.}
    \label{fig 1}
\end{figure}

\begin{proof}
    Choose any $\phi\in C^r(Q)$ that satisfies Theorem \ref{poor}. Let $\pi^0$ be the Borel probability measure at the training initialization with $N(\pi^0)<\infty,$ and let $\pi^t$ be the evolution of $\pi^0$ by the Wasserstein gradient flow at time $t>0$. By Lemma \ref{Second moment evolution}, $N(\pi^t)<\infty.$ and therefore $f_{\pi^t}\in B_\sigma$ holds from Lemma \ref{Control of Barron norm}. Moreover, Lemma \ref{Control of Barron norm} and Lemma \ref{Second moment evolution} give us estimation of the Barron norm of $f_{\pi^t}$ as
    \begin{align*}
        \|f_{\pi^t}\|_{B_\sigma} \le (\frac{\sqrt{d}}{2}+1)N(\pi^t)+\frac{1}{2}\le(\sqrt{d}+2)(N(\pi^0)+R(\pi^0)t)+\frac{1}{2}.
    \end{align*}
    Hence, there exists a positive constant $K=K_{\pi^0,\phi,d}$  such that $\|f_{\pi^t}\|_{B_\sigma}\le Kt$ holds for $t\ge1$. Note that the population risk at time $t$, $\mathcal{R}_p(\pi^t)$, is equal to $\frac{1}{2}\|\phi-f_{\pi^t}\|_{L^2(Q)}^2.$ Now by Theorem \ref{poor},
    \allowdisplaybreaks
    \begin{align*}
        \limsup_{t\rightarrow\infty} \big[ t^{\gamma} \mathcal{R}_p(\pi^t) \big]&=\frac{1}{2}\limsup_{t\rightarrow\infty}\big(t^\gamma \|\phi-f_{\pi^t}\|_{L^2(Q)}^2\big)\ge \frac{1}{2}\limsup_{t\rightarrow\infty}\big(t^\gamma \inf_{\|f\|_{B_\sigma}\le Kt} \|\phi-f\|_{L^2(Q)}^2\big)\\
        &=\frac{1}{2}(\frac{1}{K})^{\gamma}\times\limsup_{t\rightarrow\infty}\big(t^\gamma \inf_{\|f\|_{B_\sigma}\le t} \|\phi-f\|_{L^2(Q)}^2\big)=\infty
    \end{align*} holds for any $\gamma>2\times\frac{r/d}{1/2-r/d}=\frac{4r}{d-2r}.$
\end{proof}

\section{Conclusion}
In this paper, we investigate the curse of dimensionality in the optimization of shallow neural networks. Utilizing theories from Wasserstein gradient flows, Barron spaces, and multivariate numerical integration, we demonstrate that the population risk can decrease at an extremely slow rate, potentially requiring exponential training time to achieve a small error, even for smooth functions. Furthermore, we establish that the curse of dimensionality persists when the activation function is locally Lipschitz continuous. As a supplementary result, we show that a function with smoothness $r < d/2$ cannot be guaranteed to belong to the Barron space.

While our result is the first to analyze the impact of target function's regularity on the curse of dimensionality in neural network training, there are several open questions remaining. Here, we list some of them.

\textbf{Explicit construction :} Theorem~\ref{global} and Theorem~\ref{local} present the existence of $C^r(Q)$ functions suffering from the curse of dimensionality in training, but the proofs rely on probabilistic argument. Therefore, it would be interesting to exhibit an explicit examples of such functions and to characterize them structurally.

\textbf{Loss function :} Our analysis considers training with the $L^2$ loss function to learn target functions with specific regularity. On the other hand, classification tasks typically employ the cross-entropy function for neural network training to learn target functions that differ in nature from those in our analysis. It is therefore worth investigating whether the curse of dimensionality in neural network optimization-the requirement for exponential training time under gradient flow-occurs in this setting. The next question is designing a loss function that encodes a priori information about the target function-for example, physical constraints coming from a PDE. Such an information-rich loss function could potentially help avoid the curse of dimensionality in training.

\textbf{Accelerated gradient descent :} In this paper, we focus on the Wasserstein gradient flow, which captures gradient descent training and stochastic gradient descent training with small step sizes. Exploring whether the curse persists—or can be mitigated—when accelerated methods (e.g., Nesterov or heavy-ball dynamics) are employed is an appealing direction. Developing new acceleration methods to circumvent the curse of dimensionality is also a valuable research direction.

\acknowledgements{The authors were partially supported by the US National Science Foundation under awards DMS-2244988, IIS-2520978, GEO/RISE-5239902, the Office of Naval Research Award N00014- 23-1-2007, DOE (ASCR) Award DE-SC0026052, and the DARPA D24AP00325-00. Approved for public release; distribution is unlimited.}    

\bibliography{references.bib}
\bibliographystyle{plain}

\appendix


\section{Proofs of Subsection \ref{barron and second moments}}\label{Barronnorm}
\subsection{Proof of Lemma \ref{BarronLipschitz}}
\begin{proof}Choose a Borel probability measure $\pi\in D_f$. Then for any $x,y \in X,$ we have
\allowdisplaybreaks
    \begin{align*}
        |f(x)-f(y)|&=\left|  \int_{\mathbb{R}\times\mathbb{R}^{d}\times\mathbb{R}}a(\sigma(w^Tx+b)-\sigma(w^Ty+b))\pi(da\otimes dw\otimes db) \right| \\
        &\le \int_{\mathbb{R}\times\mathbb{R}^{d}\times\mathbb{R}}|a||\sigma(w^Tx+b)-\sigma(w^Ty+b)|\pi(da\otimes dw\otimes db) \\
        &\le \int_{\mathbb{R}\times\mathbb{R}^{d}\times\mathbb{R}}|a|\times L|(w^Tx+b)-(w^Ty+b)|\pi(da\otimes dw\otimes db) \\
        &\le L\int_{\mathbb{R}\times\mathbb{R}^{d}\times\mathbb{R}}|a|\times \|w\|\times\|x-y\|\pi(da\otimes dw\otimes db) \\
        &\le L\|x-y\| \int_{\mathbb{R}\times\mathbb{R}^{d}\times\mathbb{R}}|a|\times \|w\|_1\pi(da\otimes dw\otimes db)\\
        &\le L\|x-y\| \int_{\mathbb{R}\times\mathbb{R}^{d}\times\mathbb{R}}|a|(\|w\|_1+|b|)\pi(da\otimes dw\otimes db)\\
        &=L\times\mathbb{E}_\pi [|a|(\|w\|_1+|b|)]\times\|x-y\|.
    \end{align*}
    Now take infimum over all the set $D_f,$ and we can conclude that $f$ is Lipschitz continuous on $X$ with its Lipschitz constant bounded above by $L\|f\|_{B_\sigma(X)}.$
\end{proof}
\subsection{Proof of Lemma \ref{Control of Barron norm}}
\begin{proof}From Cauchy-Schwarz inequality, $\|w\|_1\le\sqrt{d}\|w\|_2$ holds. Since $\pi$ is a probability measure, $\mathbb{E}_\pi[1]=1$ holds. Therefore we get
\allowdisplaybreaks
    \begin{align*}
        \mathbb{E}_\pi[|a|(\|w\|_1+|b|+1)]&\le \sqrt{d}\times\mathbb{E}_\pi[|a|\|w\|_2]+\mathbb{E}_\pi[|a||b|]+\mathbb{E}_\pi[|a|]\\
        &\le \sqrt{d}\times\mathbb{E}_\pi[\frac{1}{2}a^2+\frac{1}{2}\|w\|_2^2]+\mathbb{E}_\pi[\frac{1}{2}a^2+\frac{1}{2}b^2]+\mathbb{E}_\pi[\frac{1}{2}a^2+\frac{1}{2}] \\
        &\le(\frac{\sqrt{d}}{2}+1)\times\mathbb{E}_\pi[a^2+\|w\|_2^2+b^2]+\frac{1}{2}=(\frac{\sqrt{d}}{2}+1)N(\pi)+\frac{1}{2}. 
    \end{align*}
    Write $K=\max_{x\in X}\|x\|. $ Note that $|\sigma(w^Tx+b)|\le|\sigma(w^Tx+b)-\sigma(0)|+|\sigma(0)|\le L|w^Tx+b|+|\sigma(0)|$ holds due to Lipchitz continuity of $\sigma.$ Then we get
    \allowdisplaybreaks
    \begin{align*}
    &\int_{\mathbb{R}\times\mathbb{R}^{d}\times\mathbb{R}}|a\sigma(w^Tx+b)|\pi(da\otimes dw\otimes db)\\
    &\le \int_{\mathbb{R}\times\mathbb{R}^{d}\times\mathbb{R}}|a|(L|w^Tx+b|+|\sigma(0)|)\pi(da\otimes dw\otimes db)\\
    &\le \max\{L,|\sigma(0)|\}\int_{\mathbb{R}\times\mathbb{R}^{d}\times\mathbb{R}}|a|(|w^Tx|+|b|+1)\pi(da\otimes dw\otimes db)\\
    &\le \max\{L,|\sigma(0)|\}\times\int_{\mathbb{R}\times\mathbb{R}^{d}\times\mathbb{R}}|a|(\|w\|\|x\|+|b|+1)\pi(da\otimes dw\otimes db) \\
    &\le \max\{L,|\sigma(0)|\}\times\max\{K,1\}\times\int_{\mathbb{R}\times\mathbb{R}^{d}\times\mathbb{R}}|a|(\|w\|+|b|+1)\pi(da\otimes dw\otimes db) \\
    &\le \max\{L,|\sigma(0)|\}\times\max\{K,1\}\times\int_{\mathbb{R}\times\mathbb{R}^{d}\times\mathbb{R}}|a|(\|w\|_1+|b|+1)\pi(da\otimes dw\otimes db) \\
    &=\max\{L,|\sigma(0)|\}\times\max\{K,1\}\times \mathbb{E}_\pi[|a|(\|w\|_1+|b|+1)] \\
    &\le \max\{L,|\sigma(0)|\}\times\max\{K,1\}\times\{(\frac{\sqrt{d}}{2}+1)N(\pi)+\frac{1}{2}\} < \infty.
    \end{align*}    
    Hence, integral representation $f(x)=\int_{\mathbb{R}\times\mathbb{R}^{d}\times\mathbb{R}}a\sigma(w^Tx+b)\pi(da\otimes dw\otimes db)$ is well defined for all $x\in X.$ Now the definition of Barron norm gives us
    \begin{align*}
        \|f\|_{B_\sigma(X)}\le \mathbb{E}_\pi[|a|(\|w\|_1+|b|+1]\le(\frac{\sqrt{d}}{2}+1)N(\pi)+\frac{1}{2}.
    \end{align*}
\end{proof}

\section{Proof of Lemma \ref{poor1}}\label{poor1proof}
\begin{proof}We construct such $y$ in the following way.
    
    Since $\|A_n-A\|_{L(Y,W)}=\sup_{\|x\|_Y=1} \|(A_n-A)(x)\|_W \ge c_Y n^{-\beta},$ there exists a sequence $(y_n)_{n\ge 1}$ such that $\|y_n\|_Y=1$ and $\|(A_n-A)(y_n)\|_W \ge \frac{c_Y}{2}n^{-\beta} $ for all $n\ge 1.$ By the Hahn-Banach theorem, there exists a sequence $(w_n^{*})_{n\ge1} \subset W^{*}$ such that $\|w_n^{*}\|_{W^{*}}=1$ and $w_n^{*}\circ(A_n-A)(y_n)=\|(A_n-A)(y_n)\|_W \ge \frac{c_Y}{2}n^{-\beta}$ for all $n\ge1.$
    
    Define a sequence $(\epsilon_k)_{k\ge1}$ where $\epsilon_1=1$ and each $\epsilon_k\in \{-1,1\}$ is chosen inductively such that
    $$ \epsilon_k\times w_{m_k}^{*}\circ(A_{m_k}-A)(\sum_{i=1}^{k-1}\frac{\epsilon_i}{n_i}y_{m_i}) \ge 0,$$
    for all $k\ge2.$ One can easily check that $\big(\sum_{i=1}^{k} \frac{\epsilon_i}{n_i}y_{m_i} \big)_{k\ge1}$ form a Cauchy sequence, and since $Y$ is a Banach space, the infinite sum $y\coloneqq \sum_{i=1}^{\infty} \frac{\epsilon_i}{n_i}y_{m_i} \in Y$ is well defined.

    To shorten notation, define $L_k= w_{m_k}^{*}\circ(A_{m_k}-A).$ Using $Y \hookrightarrow Z, \|A_n-A\|_{L(Z,W)} \le C_{Z},$ and $\|w_n^{*}\|_{W^*}=1,$ one can easily verify that $L_k \in Y^{*}$ for all $k\ge1.$ 
    
    When $\epsilon_k=1,$ we have
    \allowdisplaybreaks
    \begin{align*}
        L_ky&=L_k(\sum_{i=1}^{k-1}\frac{\epsilon_i}{n_i}y_{m_i})+L_k(\frac{\epsilon_k}{n_k}y_{m_k})+L_k(\sum_{l>k}\frac{\epsilon_l}{n_l}y_{m_l}) \\
        &\ge 0+\frac{1}{n_k}L_k(y_{m_k})-C^{Y}\sum_{l>k+1}\frac{1}{n_l} \\
        &= \frac{1}{n_k}\times w_{m_k}^{*}\circ(A_{m_k}-A)(y_{m_k})-C^{Y}\sum_{l>k}\frac{1}{n_l} \\
        &= \frac{1}{n_k}\|(A_{m_k}-A)(y_{m_k})\|_W-C^{Y}\sum_{l>k1}\frac{1}{n_l} \\
        &\ge \frac{1}{n_k}\big(\frac{c_Y}{2}m_k^{-\beta}-C^Yn_k\sum_{l>k}\frac{1}{n_l}\big).
    \end{align*}

    Similarly, when $\epsilon_k=-1,$ we have
    \allowdisplaybreaks
    \begin{align*}
        L_ky&=L_k(\sum_{i=1}^{k-1}\frac{\epsilon_i}{n_i}y_{m_i})+L_k(\frac{\epsilon_k}{n_k}y_{m_k})+L_k(\sum_{l>k}\frac{\epsilon_l}{n_l}y_{m_l}) \\
        &\le 0-\frac{1}{n_k}L_k(y_{m_k})+C^Y\sum_{l>k}\frac{1}{n_l} \\
        &\le -\frac{1}{n_k}\big(\frac{c_Y}{2}m_k^{-\beta}-C^Yn_k\sum_{l>k}\frac{1}{n_l}\big).
    \end{align*}
    Therefore, we have $\frac{1}{n_k}\big(\frac{c_Y}{2}m_k^{-\beta}-C^Yn_k\sum_{l>k}\frac{1}{n_l}\big) \le |L_ky|$ for all $k\ge1.$

    Choose $x_k\in X_k$ as
    $$ \|y-x_k\|_Z \le \inf_{x\in X_k} \|y-x\|_Z+\frac{c_Ym_k^{-\beta{}}}{8C_Zn_k}.$$
    Then,
    \allowdisplaybreaks
    \begin{align*}
        \frac{1}{n_k} \big(\frac{c_Y}{2}m_k^{-\beta}-C^Yn_k\sum_{l=k+1}^{\infty}\frac{1}{n_l}\big) &\le |L_k(y)| \le |L_k(y-x_k)|+|L_k(x_k)| \\
        &\le C_Z\|y-x_k\|_{Z}+\|(A_{m_k}-A)(x_k)\|_{W} \\
        &\le C_Z\big(\inf_{x\in X_{k}} \|y-x\|_{Z}+\frac{c_Ym_k^{-\beta{}}}{8C_Zn_k} \big)+\frac{c_Y m_k^{-\beta}}{8n_k}\\
        &= C_Z\inf_{x\in X_{k}} \|y-x\|_{Z}+\frac{c_Y m_k^{-\beta}}{4n_k},
    \end{align*}
    therefore we get
    \begin{align*}
        \frac{1}{C_Z n_k}\big( \frac{c_Y}{4}m_{k}^{-\beta}- C^Yn_k\sum_{l=k+1}^{\infty}\frac{1}{n_l}  \big) \le \inf_{x\in X_{k}} \|y-x\|_{Z}.
    \end{align*}
Since $\{n_k\}_{k\ge1}$ is a super-exponentially increasing sequence, $m_k=n_k^{[\sqrt{k}]}$ implies $\lim_{k\rightarrow\infty}\frac{n_k^k}{m_k^{\beta}}=\infty$. Hence, there exists $K_0>0$ such that for any $k\ge K_0$ we have
    \begin{align*}
        C^Yn_k\sum_{l=k+1}^{\infty}\frac{1}{n_l} \le C^Yn_k\frac{2}{n_k^{k+1}}=\frac{2C^Y}{n_k^k}\le \frac{c_Y}{8n_k^{\beta[\sqrt{k}]}}=\frac{c_Y}{8}m_k^{-\beta}.
    \end{align*}
    Then for $k\ge K_0,$
    \begin{align*}
        \frac{c_Y}{8C_Z}n_k^{-1-\beta[\sqrt{k}]}=\frac{c_Y m_k^{-\beta}}{8C_Z n_k}\le\frac{1}{C_Z n_k}\big( \frac{c_Y}{4}m_{k}^{-\beta}- C^Yn_k\sum_{l=k+1}^{\infty}\frac{1}{n_l}  \big) \le \inf_{x\in X_{k}} \|y-x\|_{Z}.
    \end{align*}
    Now if we multiply $\big(\frac{m_k^{\alpha-\beta}}{n_k}\big)^\gamma$ on both sides, we get
    \begin{align*}
        \big(\frac{m_k^{\alpha-\beta}}{n_k}\big)^\gamma\times\frac{c_Y}{8C_Z}n_k^{-1-\beta[\sqrt{k}]}=\frac{c_Y}{8C_Z}n_k^{[\sqrt{k}]\times((\alpha-\beta)\gamma-\beta)-\gamma-1} \le \big(\frac{m_k^{\alpha-\beta}}{n_k}\big)^\gamma \inf_{x\in X_{t_k}} \|y-x\|_{Z}.
    \end{align*}
    Note that $\lim_{k\rightarrow\infty}n_k=\infty$ and $\lim_{k\rightarrow\infty} \big\{[\sqrt{k}]\times((\alpha-\beta)\gamma-\beta)-\gamma-1\big\}=\infty$ if $\gamma>\frac{\beta}{\alpha-\beta}.$ Therefore for every $\gamma > \frac{\beta}{\alpha-\beta},$ we can conclude
    \begin{align*}
        \limsup_{k\rightarrow\infty}\big[ \big(\frac{m_k^{\alpha-\beta}}{n_k}\big)^\gamma \inf_{x\in X_k} \|x-y\|_Z \big] \ge \frac{c_Y}{8C_Z}\limsup_{k\rightarrow\infty}n_k^{[\sqrt{k}]\times((\alpha-\beta)\gamma-\beta)-\gamma-1} 
       =\infty.
    \end{align*}
\end{proof}
\section{Proof of Lemma \ref{numerical_cod}}\label{proof-numcod}
We begin with a lemma stating that a convolution of a Lipschitz function and an indicator function of a ball is a continuously differentiable function.
\begin{lemma}\label{convolution}
    Let $f:\mathbb{R^d}\to\mathbb{R}$ be a $K$-Lipschitz continuous function. Let $\mathbbm{1}_{B_r}$ be the indicator function of the ball $B_r$ with center 0 and radius $r>0.$ Then, $f\ast\mathbbm{1}_{B_r}$ is $C^1$ function with its derivatives bounded by $K\times |B_r|,$ where $|B_r|$ is the volume of $B_r$.
\end{lemma}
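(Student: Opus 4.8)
The plan is to work throughout with the equivalent expression
\[
g(x) := (f \ast \mathbbm{1}_{B_r})(x) = \int_{B_r} f(x-y)\,dy = \int_{B_r(x)} f(z)\,dz ,
\]
which is well defined and finite for every $x$ since $f$ is continuous and $\mathbbm{1}_{B_r}$ is integrable with compact support. The one delicate point is that $f$, being merely $K$-Lipschitz, need not be $C^1$: by Rademacher's theorem it is differentiable only almost everywhere and its partial derivatives $\partial_i f$ are only bounded measurable functions with $|\partial_i f| \le K$ almost everywhere. Hence differentiation under the integral sign cannot be invoked verbatim, and continuity of the resulting derivative is not automatic; these are the two things that need care.

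First I would establish the existence of the partial derivatives together with the bound. Fix $x \in \mathbb{R}^d$ and a coordinate direction $e_i$. For $h \neq 0$,
\[
\frac{g(x + h e_i) - g(x)}{h} = \int_{B_r} \frac{f(x + h e_i - y) - f(x - y)}{h}\,dy ,
\]
and the integrand is bounded in absolute value by $K$ uniformly in $h$ and $y$, while by Rademacher's theorem it converges as $h \to 0$ to $\partial_i f(x - y)$ for almost every $y \in B_r$. Dominated convergence then yields
\[
\partial_i g(x) = \int_{B_r} \partial_i f(x - y)\,dy ,
\]
and in particular $|\partial_i g(x)| \le \int_{B_r} |\partial_i f(x-y)|\,dy \le K\,|B_r|$, which is the asserted derivative bound.

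It remains to show that each $\partial_i g$ is continuous, whence $g \in C^1$. From the formula above and the change of variables $z = x - y$,
\[
\big|\partial_i g(x) - \partial_i g(x')\big| = \bigg| \int_{B_r(x)} \partial_i f(z)\,dz - \int_{B_r(x')} \partial_i f(z)\,dz \bigg| \le \int_{B_r(x)\,\triangle\,B_r(x')} |\partial_i f(z)|\,dz \le K\,\big|B_r(x)\,\triangle\,B_r(x')\big| ,
\]
and the Lebesgue measure of the symmetric difference of two balls of equal radius tends to $0$ as their centers converge (equivalently, this is the continuity of translation of $\mathbbm{1}_{B_r}$ in $L^1$). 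Thus each $\partial_i g$ is (uniformly) continuous, so $g = f \ast \mathbbm{1}_{B_r}$ is a $C^1$ function whose first-order partial derivatives are all bounded by $K|B_r|$.

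The only genuine obstacle is the low regularity of $f$, which forces one to replace pointwise derivative values of $f$ by the almost-everywhere statement of Rademacher's theorem and to obtain continuity of $\partial_i g$ from an $L^1$-type estimate rather than from differentiation under the integral. An equivalent route avoiding Rademacher's theorem is to mollify: set $f_\varepsilon = f \ast \eta_\varepsilon$, observe that $f_\varepsilon$ is smooth and still $K$-Lipschitz, so that $g_\varepsilon := f_\varepsilon \ast \mathbbm{1}_{B_r}$ is smooth with $\|\partial_i g_\varepsilon\|_\infty \le K|B_r|$, and then pass to the limit $\varepsilon \to 0$ using that $g_\varepsilon \to g$ and $\partial_i g_\varepsilon \to (\partial_i f)\ast \mathbbm{1}_{B_r}$ locally uniformly, together with the standard criterion that a locally uniform limit of $C^1$ functions with locally uniformly convergent derivatives is itself $C^1$.
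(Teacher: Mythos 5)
Your proof is correct and follows essentially the same route as the paper: apply Rademacher's theorem to get a.e.\ differentiability of $f$ with $|\partial_i f|\le K$, use dominated convergence on the difference quotients to obtain $\partial_i(f\ast\mathbbm{1}_{B_r})=\partial_if\ast\mathbbm{1}_{B_r}$, and then verify continuity of the derivative. The one place you diverge is in the continuity step, where the paper invokes dominated convergence a second time with the dominating function $2K\,\mathbbm{1}_{B_{r+1}(x)}$, while you bound $|\partial_i g(x)-\partial_i g(x')|\le K\,|B_r(x)\,\triangle\,B_r(x')|$ directly and appeal to $L^1$-continuity of translation; your estimate is a touch cleaner and in fact gives uniform continuity, but the underlying argument is the same.
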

\begin{proof}
     Fix an index $i \in \{1,\cdots,d\},$ and let $e_i$ be the unit vector in $\mathbb{R}^d$ where all the entries are $0$ except the $i$th coordinate, which is $1$. By Rademacher's Theorem, $f$ is differentiable almost everywhere. Write $\partial_if$ as the derivative of $f$ with respect to $x_i.$ From the definition of the Lipschitz constant and the derivative, it is obvious that $\|\partial_if\|_{\infty}\le K.$ Therefore for any fixed $x\in\mathbb{R}^d$, we have
     \begin{align*}
         &\lim_{h\rightarrow0}\frac{f(x+he_i-y)-f(x-y)}{h}=\partial_if(x-y),\\
         &\Big|\frac{f(x+he_i-y)-f(x-y)}{h}-\partial_if(x-y)\Big|\le 2K
     \end{align*}
     almost everywhere in $y\in\mathbb{R}^d$. Hence, by Lebesgue's Dominated Convergence Theorem, for any $x\in\mathbb{R}^d$ we get
     \allowdisplaybreaks
     \begin{align*}
         &\lim_{h\to0}\frac{f\ast\mathbbm{1}_{B_r}(x+he_i)-f\ast\mathbbm{1}_{B_r}(x)}{h}-\partial_if\ast\mathbbm{1}_{B_r}(x)\\=&\lim_{h\to 0}\int_{B_r}\frac{f(x+he_i-y)-f(x-y)}{h}-\partial_if(x-y) dy \\
         =&\int_{B_r} \Big(\lim_{h\to0} \frac{f(x+he_i-y)-f(x-y)}{h}-\partial_if(x-y)\Big)dy= 0.
     \end{align*}
     Hence, $f\ast\mathbbm{1}_{B_r}$ is differentiable and its partial derivatives satisfy
     \begin{equation*}
         \partial_i (f\ast\mathbbm{1}_{B_r})=\partial_if\ast\mathbbm{1}_{B_r}.
     \end{equation*}
     Since $\|\partial_if\|_\infty \le K,$ it is easy to check $|\partial_i f\ast\mathbbm{1}_{B_r} (x)| \le K\times|B_r|.$ Now, it remains to check that $\partial_if\ast\mathbbm{1}_{B_r}$ is continuous. From the definition of convolution, 
     \begin{equation*}
         \partial_if\ast\mathbbm{1}_{B_r}(x+z)-\partial_if\ast\mathbbm{1}_{B_r}(x)=\int_{\mathbb{R}^d} (\mathbbm{1}_{B_r}(x+z-y)-\mathbbm{1}_{B_r}(x-y))\partial_if(y)dy.
     \end{equation*}
     Note that if $\|z\|<<1$, then $$|\mathbbm{1}_{B_r}(x+z-y)-\mathbbm{1}_{B_r}(x-y)| \le \mathbbm{1}_{B_r(x+z)}(y)+\mathbbm{1}_{B_r(x)}(y)\le 2\times \mathbbm{1}_{B_{r+1}(x)}(y)$$holds. Since $\|\partial_i f\|_\infty \le K$, $|(\mathbbm{1}_{B_r}(x+z-y)-\mathbbm{1}_{B_r}(x-y))\partial_if(y)|$ is bounded by $2K\times \mathbbm{1}_{B_{r+1}(x)}(y)$, which is an integrable function in $\mathbb{R}^d.$ Also note that 
     \begin{align*}
         \lim_{\|z\|\to 0} \mathbbm{1}_{B_r}(x+z-y)-\mathbbm{1}_{B_r}(x-y) =0.
     \end{align*}
     Hence, by Lebesgue's Dominated Convergence Theorem,
     \begin{equation*}
         \lim_{\|z\|\to0}\partial_if\ast\mathbbm{1}_{B_r}(x+z)-\partial_if\ast\mathbbm{1}_{B_r}(x) = 0.
     \end{equation*}
     Therefore, the derivatives of $f\ast \mathbbm{1}_{B_r}$ are continuous.
\end{proof}

Now we present the proof of Lemma \ref{numerical_cod}. 

\begin{proof}
Let $\{x_1,\cdots,x_n\}$ be the given $n$ points in the unit cube $Q.$ For each point $x_i,$ there is a set $Y_i$ of $3^d$ points in $\mathbb{R}^d$ such that for any $y \in Y_i, |(x_i)_j-y_j| \in \{0,1\}$ for all $j\in\{1,\cdots,d\}$ where $(x_i)_j$ and $y_j$ denote the $j$-th coordinate of $x_i$ and $y$ respectively. Define $S=Y_1\cup\cdots\cup Y_n$ and write  $S=\{s_1,\cdots,s_m\}.$ It is clear that $m\le 3^dn$. Define
$$ P_{\rho'} = \bigcup_{i=1}^{m} B_{\rho'}(s_i),$$ for some $\rho'>0$, to be specified later. For $A \subset \mathbb{R}^d$, we define the distance $d(x,A)$ between a point $x$ and a set $A$ as
$$ d(x,A):= \inf_{y \in A} \|x-y\|.$$
Let $h_{\rho'}(x)=\inf\left\{ 1, \frac{d(x,P_{\rho'})}{\rho'} \right\}.$ From the construction, $\|h_{\rho'}\|_\infty =1.$ It is straightforward to check that $h_{\rho'}$ is Lipschitz continuous function with its Lipschitz constant less or equal to $1/\rho'.$  Now, consider the normalized indicator function
$$ g_{\rho}(x)=\frac{\mathbbm{1}_{B_{\rho/r}}(x)}{|B_{\rho/r}|},$$
and define
$$ f:=h_{\rho'}*\underbrace{g_{\rho}*\cdots*g_{\rho}}_{r- \text{fold}}=h_{\rho'}*_{r}g_{\rho}.$$
where $|B_{\rho/r}|$ is the volume of ball $B_{\rho/r}$ with center origin and radius $\rho/r$ for some $\rho>0$, which is also specified later. Since $h_{\rho'}$ is Lipschitz continuous function, one can check that $f\in C^r$ using Lemma \ref{convolution} and \cite[Theorem 1-(5)]{hinrichs2014curse}. Note that the support of the $r$-fold convolution of the function $g_{\rho}$ is the $r-$fold Minkowski sum of the balls $B_{\rho/r},$ hence it is $B_{\rho}.$ For simplicity, let us write $g$ as the $r$-fold convolution of the function $g_{\rho}$. 

Assume $\rho' > 2\rho$ for a moment and choose  $x \in B_{\rho}(s_i).$  Recall $f(x)=\int_{\mathbb{R}^d} h_{\rho'}(x-t)g(t)dt.$ If $\|t\| > \rho,$ then since the support of $g$ is $B_{\rho},$ we have $g(t)=0,$ hence $h_{\rho'}(x-t)g(t)=0.$ If $\|t\| \le \rho,$ we have
$$ \|(x-t)-s_i\| \le \|x-s_i\| + \|t\| \le \rho + \rho < \rho'.$$
This implies $x-t \in B_{\rho'}(s_i),$ and therefore $x-t \in P_{\rho'},$ which makes $d(x-t,P_{\rho'})=0.$ From the definition of $h_{\rho'},$ we get $h_{\rho'}(x-t)=0.$ Hence, we always obtain $h_{\rho'}(x-t)g(t)=0,$ which implies $f |_{B_{\rho}(s_i)}=0$. This hold for any $s_i,$ and therefore $f$ vanishes in $\bigcup_{i=1}^{m} B_\rho(s_i).$ We will later choose $\rho'$ and $\rho$ based on this observation.

Now, observe the integration of $f.$ It is obvious that $f \ge 0.$ From the definition of $h_{\rho'},$ we have $h_{\rho'}(x)=1$ if $d(x,P_{\rho'})\ge \rho'.$ Also note that $\|g_\rho\|_1=\int_{\mathbb{R}^d} g_{\rho}(x)dx=1,$ and thus we get $\int_{\mathbb{R}^d}g(x)dx=1.$ Since the support of $g$ is $B_\rho,$ we have $\int_{B_\rho}g(x)dx=1.$ Note that for any $x\notin \bigcup_{i=1}^{m}B_{2\rho'+\rho}(s_i)$ and $z\in B_{\rho'}(s_i),$
\begin{align*}
    \|(x-t)-z\|&\ge \|x-z\|-\|t\| \ge \|x-s_i\|-\|z-s_i\|-\rho \\
    &\ge (2\rho'+\rho)-\rho'-\rho=\rho'
\end{align*}
holds for any $t\in B_\rho.$ Therefore, if $x\notin \bigcup_{i=1}^{m}B_{2\rho'+\rho}(s_i)$, then $d(x,P_{\rho'})\ge \rho'$ and $h_{\rho'}(x-t)=1$ for any $t\in B_\rho.$ Hence, we obtain $$f(x)=\int_{\mathbb{R}^d}h_{\rho'}(x-t)g(t)dt=\int_{B_\rho}h_{\rho'}(x-t)g(t)dt=\int_{B_\rho}1\times g(t)dt=1$$ for  $x\notin \bigcup_{i=1}^{m}B_{2\rho'+\rho}(s_i)$. Using this, the integration of $f$ in $Q$ can be bounded as
\begin{align}\label{integration lowebound}
    \int_{Q}f(x)dx &\ge 1\times |Q\backslash \bigcup_{i=1}^{m}B_{2\rho'+\rho}(s_i)| \ge 1-|\bigcup_{i=1}^{m}B_{2\rho'+\rho}(s_i)| \nonumber\\
    & \ge 1-m\times |B_{2\rho'+\rho}| \nonumber \\
    & \ge 1-n\times 3^d \times \omega_d (2\rho'+\rho)^d
\end{align}
where $\omega_d$ is the volume of a unit ball in $\mathbb{R}^d.$

Next, we check the $C^r$ norm of $f.$ Let $e_j=(0,\cdots,0,1,0,\cdots,0)$ be the $j$th unit vector in $\mathbb{R}^d,$ and denote the volume of a bounded Lebesgue measurable set $X\in\mathbb{R}^d$ as $\text{vol}_d (X)$. Using the same argument in \cite[Section 3]{hinrichs2014curse} and \cite[Section 2]{hinrichs2017product}, for a Lipschitz continuous and bounded function $h(x)$ in $\mathbb{R}^d,$ we get
\allowdisplaybreaks
\begin{align*}
    |D^{e_j}[h*g_{\rho}](x)|&=\frac{1}{\text{vol}_{d}(B_{\rho/r})} \left|\int_{B_{\rho/r}\cap e^{\perp}_{j}}[h(x+s+h_{\text{max}}(s)e_j)-h(x+s-h_{\text{max}(s)}e_j)]ds\right| \nonumber \\
    &\le \frac{2 \text{vol}_{d-1}(B_{\rho/r}\cap e^{\perp}_{j})}{\text{vol}_{d}(B_{\rho/r})} \|h\|_{\infty} \le \frac{2 \text{vol}_{d-1}(B_{\rho/r})}{\text{vol}_{d}(B_{\rho/r})} \|h\|_{\infty} \nonumber\\
    &= \frac{2\omega_{d-1}(\rho/r)^{d-1}}{\omega_{d}(\rho/r)^{d}} \|h\|_{\infty} = \frac{2\omega_{d-1}r}{\omega_{d}\rho}\|h\|_{\infty} \nonumber\\
    &=\frac{k_d r}{\rho}\|h\|_{\infty},
\end{align*}
where
\begin{enumerate}
    \item $e^{\perp}_j$ is the hyperplane orthogonal to $e_j,$
    \item $h_{\text{max}}(s)=\max\{h\ge0 : s+he_j \in B_{\rho/r}\},$
    \item $k_d=\frac{2\omega_{d-1}}{\omega_{d}}>0$ constant depending only on the dimension $d.$ 
\end{enumerate}
This means that we can bound the sup-norm of a derivative as
\begin{align}\label{derivativebound}
    \|D^{e_j} [h*g_\rho]\|_\infty \le \frac{k_dr}{\rho}\|h\|_\infty.
\end{align}
Choose any $\beta \in \mathbb{N}_0^d$ with $|\beta|_1=l\le r.$ Using (\ref{derivativebound}) recursively, we get
\begin{align*}
\|D^{\beta}f\|_\infty&=\|D^\beta (h_{\rho'}*\underbrace{g_{\rho}*\cdots*g_{\rho}}_{r- \text{fold}}) \|_\infty \le (\frac{k_dr}{\rho})^l \|h_{\rho'}*\underbrace{g_{\rho}*\cdots*g_{\rho}}_{(r-l)-\text{fold}}\|_\infty  \\
&\le (\frac{k_dr}{\rho})^l \|h_{\rho'}\|_\infty \times(\|g_\rho\|_1)^{r-l} =(\frac{k_dr}{\rho})^l
\end{align*}
where the last inequality used Young's inequality $r-l$ times. Hence, we have $$\max_{|\beta|_1=l} \|D^\beta f(x)\|_\infty \le (\frac{k_dr}{\rho})^l. $$ If we choose $\rho\le k_dr,$ then we get a $C^r$ norm bound
\begin{align}\label{C^r norm bound}
    \|f\|_{C^r} = \max_{|\beta|_1\le r} \|D^\beta f(x)\|_\infty \le (\frac{k_dr}{\rho})^r.
\end{align}

The final step is normalizing and choosing appropriate $\epsilon_n.$ Set $\rho'=3\epsilon_n, \rho=\epsilon_n,$ and $$F(x)=\frac{f(x)}{(\frac{k_d r}{\epsilon_n})^r}.$$ From the analysis above, $F$ is $C^r(\mathbb{R}^d)$ and $\|F\|_{C^r} \le 1$ if we set $\epsilon_n\le k_dr. $ If we integrate $F$ in the unit cube $Q,$ we get
\allowdisplaybreaks
\begin{align*}
    \int_Q F(x)dx &= (\frac{\epsilon_n}{k_d r})^r \int_Q f(x)dx \\
    & \ge (\frac{\epsilon_n}{k_d r})^r \left(1-n\times 3^d \times \omega_d (2\rho'+\rho)^d\right) \\
    & = (\frac{\epsilon_n}{k_d r})^r \left(1-n\times 3^d \times \omega_d(7\epsilon_n)^d\right) \\
    & = (\frac{\epsilon_n}{k_d r})^r \left(1-n\times \omega_d(21\epsilon_n)^d\right)
\end{align*}
where the inequality used is (\ref{integration lowebound}). Define
\begin{align}\label{Tau}
    \tau:=\min\big\{\frac{1}{21}\big(\frac{1}{2w_d}\big)^{1/d},k_d\big\}.
\end{align}
Now, choose $\epsilon_n = \theta n^{-1/d},$ where $\theta \in (0,\tau]$ is a positive constant depending only on $d.$ Then, it is easy to check 
\begin{enumerate}
    \item $\rho=\epsilon_n\le \tau n^{-1/d} \le \tau\le k_d \le k_dr$,
    \item $ 1-n\times \omega_d(21\epsilon_n)^d = 1-\omega_d\times(21\tau)^d \ge 1/2. $
\end{enumerate}
Then we obtain 
\begin{align*}
    \int_Q F(x)dx\ge \frac{1}{2}\times(\frac{\epsilon_n}{k_dr})^r=\frac{\theta^r}{2k_d^rr^r}n^{-r/d}, \quad \|F\|_{C^r}=(\frac{\epsilon_n}{k_dr})^r\|f\|_{C^r}\le1.
\end{align*}

Let $\psi$ be the restriction of $F$ onto the unit cube $Q$. Denote $K_{\theta,d,r}= \frac{\theta^r}{2k_d^rr^r}$. Then $\|\psi\|_{C^r}\le1$ and $\int_Q \psi(x)dx=\int_Q F(x)dx \ge K_{\theta,d,r}n^{-r/d}$ are obvious. From the previous analysis, we showed that $f$ vanishes in $\bigcup_{i=1}^{m}B_{\epsilon_n}(s_i)$. Therefore, $F$ also vanishes in $\bigcup_{i=1}^{m}B_{\epsilon_n}(s_i)$. By recalling the definition of $s_i$'s, it easy to check $F(x)=0$ when $x \in \bigcup_{i=1}^{n} B'_{\epsilon_n}(x_i).$ Hence, the restriction $\psi$ also satisfies 
$\psi|_{B'_{\epsilon_n}(x_i)}=0$ and therefore $ \fint_{B'_{\epsilon_n}(x_i)} \psi(x)dx = 0$ holds. Finally, we get
\allowdisplaybreaks
\begin{align*}
    &\sup_{\|g\|_{C^r}\le 1} \big\{\frac{1}{n} \sum_{i=1}^{n} \fint_{B'_{\epsilon_n}(x_i)} g dx-\int_{Q} g dx \big\} \\&\ge\frac{1}{n} \sum_{i=1}^{n} \fint_{B'_{\epsilon_n}(x_i)} (-\psi) dx-\int_{Q} (-\psi) dx = \int_{Q} \psi dx\\& \ge K_{\theta,d,r} n^{-r/d}.
\end{align*}
\end{proof}

\section{Proof of Theorem \ref{local}}\label{localproof}

Unlike Lipschitz continuous activation functions, the Rademacher complexity of the unit ball in the Barron space cannot be controlled in the same manner as in~\cite[Lemma A.10]{wojtowytsch2021kolmogorov}, where the Contraction Lemma~\cite[Lemma 26.9]{shalev2014understanding} plays a crucial role in the proof. Fortunately, for a locally Lipschitz continuous activation function $\sigma$ whose Lipschitz constant in $[-x, x]$ is bounded by $O(x^\delta)$, a similar approach to the proof of Theorem~\ref{global} can be employed.

However, the analysis does not extend to infinite-width neural network training for two primary reasons. First, unlike Lemma~\ref{Control of Barron norm}, probability measures with finite second moments do not necessarily guarantee a well-defined integral representation. For instance, consider the case where $d=1$, $\sigma(x) = \max\{0, x\}^2$, and the probability measure $\pi$ is defined as $\pi(a=n, b=0, w=n) = \frac{1}{K n^4}, \quad \text{for all } n \in \mathbb{N}$, where $K = \sum_{i=1}^{\infty} \frac{1}{i^4} < \infty$. It is straightforward to verify that $N(\pi) < \infty$, yet the integral representation~\eqref{intrep} is undefined. While one might consider other notions of Barron spaces~\cite{heeringa2024embeddings,siegel2022high,siegel2020approximation,siegel2024sharp}, such settings make the control of Barron norms via $N(\pi^t)$-as in Lemma \ref{Control of Barron norm}-intractable. Consequently, we cannot track the Barron norm of infinite-width neural networks under gradient flow, precluding the application of the same analysis.

Second, probability measures that define infinite-width neural networks do not impose uniform bounds on the parameters, making it difficult to control the Rademacher complexity. Nevertheless, when focusing on finite-width neural networks, the setting relevant to practical implementations, the curse of dimensionality can still be established. Since the set of shallow networks with $m$ neurons does not constitute a normed vector space, the proof relies on Lemma~\ref{poor1} instead of Lemma~\ref{poor2}.

In this section, we consider probability distributions of the form
\begin{align}\label{finitedist}
    \frac{1}{m}\sum_{i=1}^{m}\delta_{(a_i,w_i,b_i)}
\end{align}
which correspond to a two-layer neural network $f(x)=\frac{1}{m}\sum_{i=1}^{m}a_i\sigma(w_i^Tx+b_i)$ with $m$ neurons in the mean-field setting. All distribution $\pi_m$ discussed in this section adhere to the form given in (\ref{finitedist}). Note that if $\pi_m^t$ evolves by the 2-Wasserstein gradient flow of the risk function (\ref{risk}), then $\pi_m^t$ remains in the form of (\ref{finitedist}) at all times. This follows from the the observations in~\cite[Proposition B.1]{chizat2018global} and~\cite[Lemma 3]{wojtowytsch2020can}.

We begin with some definitions. Define two sets $F_{m,D}$ and $S_D$ as
\begin{align*}
    &F_{m,D}:=\{f_{\pi_m} : \pi_{m}=\frac{1}{m}\sum_{i=1}^{m}\delta_{(a_i,w_i,b_i)}, N(\pi_m)\le D\},\\
    &S_D:=\{a\sigma(w^Tx+b) : a^2+\|w\|_2^2+b^2 \le D\}.
\end{align*}
From the definition, it is obvious that any element in $F_{m,D}$ can be expressed as a convex combination of single neurons of the form $a\sigma(w^Tx+b)$ in $S_{mD}$. Hence, $F_{m,D}$ is a subset of the convex hull of $S_{mD}$, which implies for any finite set $S \in \mathbb{R}^d,$
\begin{align*}
    \mathrm{Rad}(F_{m,D},S) \le \mathrm{Rad}(\text{conv}(S_{mD}),S).
\end{align*}
holds, where $\mathrm{Rad}(F,S)$ denotes Rademacher complexity of $F$ with respect to $S$. For further details on Rademacher complexities, see~\cite[Chapter 26]{shalev2014understanding}.

Let $\sigma$ be a locally Lipschitz continuous function and denote $L_k=\sup_{x\neq y, x,y\in [-k,k]} \frac{|\sigma(x)-\sigma(y)|}{|x-y|}$ the Lipschitz constant of $\sigma$ in the domain $[-k,k].$
\begin{lemma}\label{lipschitz}
    If $\|w\|^2+b^2 \le mD,$ then $|\sigma(w^Tx+b)-\sigma(w^Ty+b)|\le L_{\sqrt{mD(d+1)}}\|w^T(x-y)\|$ holds for any $x,y\in Q.$
\end{lemma}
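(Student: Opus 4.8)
The plan is to reduce the claim to the one-dimensional local Lipschitz property of $\sigma$ by showing that, under the hypothesis $\|w\|^2 + b^2 \le mD$, every pre-activation $w^Tz + b$ with $z \in [0,1]^d$ lies in the interval $[-\sqrt{mD(d+1)},\sqrt{mD(d+1)}]$. Once both $w^Tx+b$ and $w^Ty+b$ are known to lie in this interval, the estimate is immediate from the very definition of $L_{\sqrt{mD(d+1)}}$ as the Lipschitz constant of $\sigma$ restricted to $[-\sqrt{mD(d+1)},\sqrt{mD(d+1)}]$.

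First I would establish the interval containment. For any $z \in [0,1]^d$, the triangle inequality and Cauchy--Schwarz give $|w^Tz + b| \le |w^Tz| + |b| \le \|w\|_2\,\|z\|_2 + |b|$, and since $z \in [0,1]^d$ we have $\|z\|_2 \le \sqrt{d}$, so $|w^Tz+b| \le \sqrt{d}\,\|w\|_2 + |b|$. Applying Cauchy--Schwarz once more, this time to the pair of vectors $(\sqrt{d},1)$ and $(\|w\|_2,|b|)$ in $\mathbb{R}^2$, yields
\[
\sqrt{d}\,\|w\|_2 + |b| \;\le\; \sqrt{d+1}\,\sqrt{\|w\|_2^2 + b^2} \;\le\; \sqrt{(d+1)mD}.
\]
In particular both $w^Tx+b$ and $w^Ty+b$ belong to $[-\sqrt{mD(d+1)},\sqrt{mD(d+1)}]$, since $x,y \in [0,1]^d$.

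It then follows directly from the definition of $L_{\sqrt{mD(d+1)}}$ that
\[
|\sigma(w^Tx+b) - \sigma(w^Ty+b)| \;\le\; L_{\sqrt{mD(d+1)}}\,\bigl|(w^Tx+b) - (w^Ty+b)\bigr| \;=\; L_{\sqrt{mD(d+1)}}\,|w^T(x-y)|,
\]
which is exactly the asserted bound. There is essentially no obstacle in this argument; the only point that requires a little care is bookkeeping of the norms --- using the Euclidean norm $\|w\|$ throughout and invoking Cauchy--Schwarz twice (first to separate $w$ from $z$, then to combine $\|w\|$ and $|b|$ into $\sqrt{\|w\|^2+b^2}$) --- so that the resulting radius is precisely $\sqrt{mD(d+1)}$ and the subscript appearing on $L$ is the correct one.
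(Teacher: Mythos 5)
Your proof is correct and follows essentially the same route as the paper: establish the containment $|w^Tz+b| \le \sqrt{mD(d+1)}$ for all $z \in [0,1]^d$ and then invoke the definition of $L_{\sqrt{mD(d+1)}}$. The only cosmetic difference is that you reach the bound via two applications of Cauchy--Schwarz (first separating $w$ from $z$, then combining $\|w\|$ and $|b|$), whereas the paper obtains it in one step by applying Cauchy--Schwarz to the augmented vectors $(w,b)$ and $(z,1)$; both yield the identical radius.
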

\begin{proof}
    For any $x \in Q, |w^Tx+b|\le \sqrt{(\|w\|_2^2+b^2)(d+1)}\le \sqrt{mD(d+1)}$ holds. Then the inequality holds due to the definition of Lipschitz constant.
\end{proof}

Now we give an estimate on the empirical Rademacher complexity of $S_{mD}.$
\begin{lemma}\label{locallipchitzrad}
    Let $\{X_1,\cdots,X_n\} \subset Q$. Then we have
    \begin{align}
        \mathrm{Rad}(S_{mD},\{X_1,\cdots,X_n\}) \le \frac{L_{\sqrt{mD(d+1)}}\times mD\sqrt{d+1}}{2\sqrt{n}}.
    \end{align}
\end{lemma}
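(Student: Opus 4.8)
The plan is to bound the Rademacher complexity of $S_{mD}$ directly using the standard dilation and composition properties, treating the single-neuron class as a composition of an affine map with the activation $\sigma$ restricted to a bounded interval. First I would note that every element of $S_{mD}$ has the form $a\,\sigma(w^Tx+b)$ with $a^2+\|w\|_2^2+b^2 \le mD$; in particular $|a|\le\sqrt{mD}$ and, by the estimate in Lemma~\ref{lipschitz}, for all $X_j\in[0,1]^d$ the pre-activation $w^TX_j+b$ lies in $[-\sqrt{mD(d+1)},\sqrt{mD(d+1)}]$, on which $\sigma$ is $L_{\sqrt{mD(d+1)}}$-Lipschitz. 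So the only values of $\sigma$ that matter are those on this compact interval, and there $\sigma$ behaves like a genuine Lipschitz function.

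The key steps, in order: (1) Pull the scalar $|a|\le\sqrt{mD}$ out of the Rademacher average using homogeneity of $\mathrm{Rad}$ under scalar multiplication, so that $\mathrm{Rad}(S_{mD},S)\le \sqrt{mD}\cdot\mathrm{Rad}(\mathcal{G},S)$ where $\mathcal{G}=\{x\mapsto\sigma(w^Tx+b): \|w\|_2^2+b^2\le mD\}$. (2) Apply the Contraction Lemma~\cite[Lemma 26.9]{shalev2014understanding} to peel off $\sigma$ at the cost of its Lipschitz constant $L_{\sqrt{mD(d+1)}}$ on the relevant interval — this is legitimate precisely because, evaluated at the sample points, the arguments stay inside $[-\sqrt{mD(d+1)},\sqrt{mD(d+1)}]$ — reducing to $\mathrm{Rad}(\mathcal{A},S)$ for the affine class $\mathcal{A}=\{x\mapsto w^Tx+b:\|w\|_2^2+b^2\le mD\}$. (3) Bound the Rademacher complexity of the affine (linear) class by the usual Cauchy–Schwarz / Khintchine argument: writing $\tilde X_j=(X_j,1)\in\mathbb{R}^{d+1}$ with $\|\tilde X_j\|_2\le\sqrt{d+1}$ and $\|(w,b)\|_2\le\sqrt{mD}$, one gets $\mathrm{Rad}(\mathcal{A},S)\le \frac{\sqrt{mD}}{n}\mathbb{E}_\epsilon\|\sum_j\epsilon_j\tilde X_j\|_2 \le \frac{\sqrt{mD}}{n}\sqrt{\sum_j\|\tilde X_j\|_2^2}\le \frac{\sqrt{mD}\sqrt{n(d+1)}}{n}=\sqrt{mD}\sqrt{\tfrac{d+1}{n}}$. (4) Multiply the three factors: $\mathrm{Rad}(S_{mD},S)\le \sqrt{mD}\cdot L_{\sqrt{mD(d+1)}}\cdot \sqrt{mD}\sqrt{\tfrac{d+1}{n}} = \frac{L_{\sqrt{mD(d+1)}}\, mD\sqrt{d+1}}{\sqrt{n}}$, and absorb the factor $\tfrac12$ from whichever normalization convention of $\mathrm{Rad}$ (or of the Contraction Lemma) the paper uses to land exactly on the claimed bound.

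I expect the main obstacle to be technical bookkeeping rather than a conceptual difficulty: one must be careful that the Contraction Lemma is applied with the correct Lipschitz constant, namely the one valid on the compact interval that actually contains all pre-activations $w^TX_j+b$ at the sample points, not a global Lipschitz constant (which may not exist). This is exactly why Lemma~\ref{lipschitz} is invoked first to certify the inclusion $w^TX_j+b\in[-\sqrt{mD(d+1)},\sqrt{mD(d+1)}]$. A secondary point of care is the normalization of $\mathrm{Rad}$ and the precise constant in the Contraction Lemma as stated in~\cite{shalev2014understanding}, which governs whether the final constant is $\tfrac12$ or $1$; I would check this against the statement to confirm the factor $\tfrac{1}{2\sqrt n}$ in the claim.
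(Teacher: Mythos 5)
Your plan shares the paper's two main ingredients --- the Contraction Lemma with the local Lipschitz constant certified by Lemma~\ref{lipschitz}, and a Cauchy--Schwarz bound for the remaining linear class --- but your step~(1), peeling off the outer coefficient $a$ via ``homogeneity of $\mathrm{Rad}$ under scalar multiplication,'' is not a valid Rademacher manipulation. Homogeneity applies to a \emph{fixed} scalar: $\mathrm{Rad}(cF)=|c|\,\mathrm{Rad}(F)$. Here $a$ is a parameter ranging over $[-\sqrt{mD},\sqrt{mD}]$, so for each fixed $\zeta$ and $g$ the inner supremum over $a$ evaluates to $\sqrt{mD}\cdot\tfrac1n\big|\sum_i\zeta_i g(X_i)\big|$, an absolute value. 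Since $\mathcal{G}=\{\sigma(w^Tx+b)\}$ is not a symmetric class, $\mathbb{E}\sup_g\big|\tfrac1n\sum_i\zeta_i g(X_i)\big|\ge\mathrm{Rad}(\mathcal{G})$, which is the \emph{wrong direction} of inequality for your claim $\mathrm{Rad}(S_{mD})\le\sqrt{mD}\,\mathrm{Rad}(\mathcal{G})$. One can patch this (e.g.\ pass to $\mathcal{G}\cup(-\mathcal{G})$) at the cost of an extra factor of $2$, but that only moves you further from the target constant.

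This points at the second gap: the $\tfrac12$ in the claimed bound is \emph{not} a normalization artifact to be ``absorbed.'' It comes from AM--GM applied to the \emph{joint} constraint: the paper carries $a$ through the contraction step unchanged, writes $a(w^TX_i+b)=\langle a(w^T,b)^T,(X_i^T,1)^T\rangle$, and then bounds
\[
\|a(w^T,b)^T\|_2 \;=\; |a|\sqrt{\|w\|_2^2+b^2}\;\le\;\tfrac12\big(a^2+\|w\|_2^2+b^2\big)\;\le\;\tfrac{mD}{2}.
\]
Your decoupled bounds $|a|\le\sqrt{mD}$ and $\|(w,b)\|_2\le\sqrt{mD}$ give only $mD$ and cannot recover the $\tfrac12$. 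So to land on the stated constant, you must keep $a$ and $(w,b)$ coupled: apply contraction directly to $a\sigma(w^TX_i+b)\to a(w^TX_i+b)$ (with the local Lipschitz constant from Lemma~\ref{lipschitz}), then use Cauchy--Schwarz with the combined vector $a(w^T,b)^T$ and the AM--GM bound above, and finally Jensen plus $\|(X_i^T,1)^T\|_2\le\sqrt{d+1}$ on $\mathbb{E}_\zeta\|\sum_i\zeta_i(X_i^T,1)^T\|_2$.
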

\begin{proof}
    Using Lemma \ref{lipschitz} and the Contraction Lemma \cite[Lemma 26.9]{shalev2014understanding}, we get
    \allowdisplaybreaks
    \begin{align*}
        \mathrm{Rad}(S_{mD},&\{X_1,\cdots,X_n\}) =\mathbb{E}_{\zeta}\Big[\sup_{(a,b,c):a^2+\|w\|_2^2+b^2 \le mD} \frac{1}{n}\sum_{i=1}^{n} \zeta_i a\sigma(w^T X_i+b) \Big] \\
        &\le L_{\sqrt{mD(d+1)}}\times \mathbb{E}_{\zeta}\Big[\sup_{(a,b,c):a^2+\|w\|_2^2+b^2 \le mD} \frac{1}{n}\sum_{i=1}^{n} \zeta_i a(w^T X_i+b)\Big] \\
        &=\frac{L_{\sqrt{mD(d+1)}}}{n}\times\mathbb{E}_{\zeta}\Big[ \sup_{(a,b,c):a^2+\|w\|_2^2+b^2 \le mD} \sum_{i=1}^{n} \zeta_i \langle a(w^T,b)^T, (X_i^T,1)^T \rangle\Big] \\
        &= \frac{L_{\sqrt{mD(d+1)}}}{n}\times\mathbb{E}_{\zeta}\Big[ \sup_{(a,b,c):a^2+\|w\|_2^2+b^2 \le mD}  \langle a(w^T,b)^T, \sum_{i=1}^{n} \zeta_i(X_i^T,1)^T \rangle\Big] \\
        &\le \frac{L_{\sqrt{mD(d+1)}}}{n}\times\mathbb{E}_{\zeta}\Big[ \sup_{(a,b,c):a^2+\|w\|_2^2+b^2 \le mD}  \|a(w^T,b)^T\|_2\times\|\sum_{i=1}^{n} \zeta_i(X_i^T,1)^T \|_2\Big]\\
        &\le \frac{L_{\sqrt{mD(d+1)}}}{n}\times\mathbb{E}_{\zeta}\Big[ \frac{mD}{2}\times\|\sum_{i=1}^{n} \zeta_i(X_i^T,1)^T \|_2\Big]\\ 
        &\le \frac{L_{\sqrt{mD(d+1)}}mD}{2n}\times \Big(\mathbb{E}_\zeta \big[|\sum_{i=1}^{n} \zeta_i(X_i^T,1)^T \|_2]^2 \big]\Big)^{1/2} \\
        &\le \frac{L_{\sqrt{mD(d+1)}}mD}{2n}\times \big( n\times \max_{i} \|(X_i^T,1)^T\|_2^2 \big)^{1/2} \\
        &\le \frac{L_{\sqrt{mD(d+1)}}mD}{2n}\times \sqrt{n\times (d+1)} \\
        &=\frac{L_{\sqrt{mD(d+1)}}\times mD\sqrt{d+1}}{2\sqrt{n}}.
    \end{align*}
\end{proof}

\begin{corollary}\label{Loc_rademacher_control}
    Let $\{X_1,\cdots,X_n\} \subset Q$. Then, $$\mathrm{Rad}(F_{m,D},\{X_1,\cdots,X_n\}) \le \frac{L_{\sqrt{mD(d+1)}}\times mD\sqrt{d+1}}{2\sqrt{n}}.$$
\end{corollary}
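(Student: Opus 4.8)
The plan is to deduce this immediately from Lemma~\ref{locallipchitzrad} together with the two structural facts already recorded in the text just above the statement: first, that $F_{m,D}$ is contained in the convex hull of $S_{mD}$, and second, that the Rademacher complexity of a set coincides with that of its convex hull.

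Concretely, I would proceed as follows. Take any $f_{\pi_m}\in F_{m,D}$ with $\pi_m=\frac{1}{m}\sum_{i=1}^m\delta_{(a_i,w_i,b_i)}$ and $N(\pi_m)\le D$. Then $f_{\pi_m}(x)=\frac{1}{m}\sum_{i=1}^m a_i\sigma(w_i^Tx+b_i)$ is a convex combination (with equal weights $1/m$) of the single-neuron functions $x\mapsto a_i\sigma(w_i^Tx+b_i)$, and since $N(\pi_m)=\frac{1}{m}\sum_{i=1}^m(a_i^2+\|w_i\|_2^2+b_i^2)\le D$ each summand satisfies $a_i^2+\|w_i\|_2^2+b_i^2\le mD$, hence belongs to $S_{mD}$. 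Therefore $F_{m,D}\subseteq\mathrm{conv}(S_{mD})$, and by monotonicity of Rademacher complexity under set inclusion,
\[
\mathrm{Rad}(F_{m,D},\{X_1,\dots,X_n\})\le \mathrm{Rad}(\mathrm{conv}(S_{mD}),\{X_1,\dots,X_n\}).
\]
Next I would invoke the standard fact that passing to the convex hull does not change Rademacher complexity, i.e. \cite[Lemma 26.7]{shalev2014understanding}, to get $\mathrm{Rad}(\mathrm{conv}(S_{mD}),\{X_1,\dots,X_n\})=\mathrm{Rad}(S_{mD},\{X_1,\dots,X_n\})$. Finally, applying Lemma~\ref{locallipchitzrad} to the right-hand side yields the bound $\frac{L_{\sqrt{mD(d+1)}}\,mD\sqrt{d+1}}{2\sqrt{n}}$, which is exactly the claimed inequality.

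There is no real obstacle here: the corollary is a bookkeeping step assembling the convex-hull inclusion (already argued in the surrounding text) with the invariance of Rademacher complexity under convexification and with the single-neuron estimate of Lemma~\ref{locallipchitzrad}. The only point deserving a word of care is the arithmetic showing $a_i^2+\|w_i\|_2^2+b_i^2\le mD$ for every $i$, which follows since $N(\pi_m)$ is the average of these $m$ nonnegative quantities and is bounded by $D$; this is what forces the scale $mD$ (rather than $D$) in the definition of $S_{mD}$ and hence in the final bound.
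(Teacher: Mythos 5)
Your argument is correct and is essentially the paper's own proof, just spelled out in more detail: the paper also passes from $F_{m,D}$ to $\mathrm{conv}(S_{mD})$ via the inclusion recorded in the preceding paragraph, uses the invariance of Rademacher complexity under convexification, and then invokes Lemma~\ref{locallipchitzrad}. Your added remark verifying $a_i^2+\|w_i\|_2^2+b_i^2\le mD$ from $N(\pi_m)\le D$ is the same observation the paper leaves implicit in the sentence preceding the corollary.
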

\begin{proof}
    Note that
    $$\mathrm{Rad}(F_{m,D},\{X_1,\cdots,X_n\}) \le \mathrm{Rad}(\text{conv}(S_{mD}),\{X_1,\cdots,X_n\}) =\mathrm{Rad}(S_{mD},\{X_1,\cdots,X_n\}).$$ Now use Lemma \ref{locallipchitzrad}.
\end{proof}

Finally, we introduce two lemmas that assist our proof. Let $U_Q$ be the uniform distribution on $Q.$
\begin{lemma}\label{L2 estimate}
    Let $Z=L^2(Q).$ Fix $0<\gamma\ll1$ and set $\epsilon_n=\gamma n^{-1/d}$ for $n\in\mathbb{N}$. Then,
    \begin{align*}
        \mathbb{E}_{X_i \overset{\mathrm{iid}}{\sim} U_Q}\sup_{\phi\in B^Z}\Big[\frac{1}{n}\sum_{i=1}^{n}\fint_{B'_{\epsilon_n}(X_i)}\phi(x)dx-\int_Q
    \phi(x)dx\Big] \le \sqrt{\frac{1+a_d\gamma^d}{b_d\gamma^d}}
    \end{align*}
    holds where $a_d$ and $b_d$ are positive constant depending only on $d$.
\end{lemma}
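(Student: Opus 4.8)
The plan is to observe that, for fixed sample points $X_1,\dots,X_n$, the bracketed quantity is a bounded linear functional of $\phi$ on $Z=L^2(Q)$, so the supremum over $B^Z$ is exactly the $L^2$-norm of its Riesz representer, and then to estimate the expectation of that norm by a direct second-moment computation. Concretely, since $0<\gamma\ll1$ forces $\epsilon_n=\gamma n^{-1/d}<1/2$, the ball $B_{\epsilon_n}$ embeds in the torus $\mathbb R^d/\mathbb Z^d$, and the canonical projection onto $Q$ is measure-preserving, so every set $B'_{\epsilon_n}(X_i)$ has the same Lebesgue measure $w_d\epsilon_n^d=w_d\gamma^d/n$, where $w_d$ denotes the volume of the unit ball in $\mathbb R^d$. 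Writing $g(x):=\frac1n\sum_{i=1}^n\frac{\mathbf 1_{B'_{\epsilon_n}(X_i)}(x)}{w_d\epsilon_n^d}-1$, which is a bounded function and hence lies in $L^2(Q)$, one has $\frac1n\sum_i\fint_{B'_{\epsilon_n}(X_i)}\phi\,dx-\int_Q\phi\,dx=\langle\phi,g\rangle_{L^2(Q)}$ for every $\phi$, so $\sup_{\phi\in B^Z}[\cdots]=\|g\|_{L^2(Q)}$; by concavity of the square root, $\mathbb E\|g\|_{L^2(Q)}\le\big(\mathbb E\|g\|_{L^2(Q)}^2\big)^{1/2}=\big(\int_Q\mathbb E[g(x)^2]\,dx\big)^{1/2}$, the interchange being justified by Tonelli.

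Next I would compute $\mathbb E[g(x)^2]$ pointwise. Setting $h_i(x):=\mathbf 1_{B'_{\epsilon_n}(X_i)}(x)/(w_d\epsilon_n^d)$, the symmetry $x\in B'_{\epsilon_n}(X_i)\iff X_i\in B'_{\epsilon_n}(x)$ on the torus together with $X_i\sim U_Q$ on a domain of measure one gives $\mathbb E[h_i(x)]=1$ and $\mathbb E[h_i(x)^2]=1/(w_d\epsilon_n^d)$, hence $\mathrm{Var}(h_i(x))=1/(w_d\epsilon_n^d)-1$. Since the $h_i$ are i.i.d. and $g(x)=\frac1n\sum_i(h_i(x)-1)$ is centered, $\mathbb E[g(x)^2]=\frac1n\mathrm{Var}(h_1(x))=\frac1n\big(\frac1{w_d\epsilon_n^d}-1\big)=\frac1{w_d\gamma^d}-\frac1n$. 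Integrating over $Q$ (of measure $1$) yields $\mathbb E\|g\|_{L^2(Q)}^2=\frac1{w_d\gamma^d}-\frac1n\le\frac1{w_d\gamma^d}$, and therefore $\mathbb E\|g\|_{L^2(Q)}\le\sqrt{1/(w_d\gamma^d)}\le\sqrt{(1+a_d\gamma^d)/(b_d\gamma^d)}$ upon choosing, for instance, $b_d=w_d$ and any fixed $a_d>0$, which is the asserted bound.

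I do not expect a genuine obstacle; the steps requiring the most care are (i) justifying that all projected balls $B'_{\epsilon_n}(X_i)$ have volume exactly $w_d\epsilon_n^d$, which rests on $\epsilon_n<1/2$ so that the torus ball is embedded, together with measure-preservation of the projection $\mathbb R^d/\mathbb Z^d\to Q$, and (ii) the symmetry identity and the i.i.d. variance bookkeeping in the second-moment step. One also uses $g\in L^2(Q)$ (it is a bounded simple function) to invoke the Riesz representation and identify the supremum with $\|g\|_{L^2(Q)}$, and Tonelli to exchange $\mathbb E$ with $\int_Q$.
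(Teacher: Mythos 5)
Your argument is correct and self-contained. The paper itself does not reprove this lemma; it simply cites Lemma~3.3 of Wojtowytsch--E (2021), so you are supplying the omitted proof. The steps you take are the natural ones and almost certainly coincide with the cited argument: identify the bracketed quantity as $\langle \phi, g\rangle_{L^2(Q)}$ for the explicit centered density
$g = \frac{1}{n}\sum_i \frac{\mathbf{1}_{B'_{\epsilon_n}(X_i)}}{w_d\epsilon_n^d} - 1$, note that the supremum over $B^Z$ is exactly $\|g\|_{L^2(Q)}$, apply Jensen to pass to the second moment, Tonelli to swap $\mathbb{E}$ and $\int_Q$, and compute the pointwise variance using the torus symmetry $x\in B'_{\epsilon_n}(X_i)\iff X_i\in B'_{\epsilon_n}(x)$ and the measure-preservation of the projection (both valid because $\gamma\ll 1$ forces $\epsilon_n<1/2$). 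The final bookkeeping $\mathbb{E}\|g\|_{L^2}^2 = \frac{1}{w_d\gamma^d}-\frac{1}{n}\le \frac{1}{w_d\gamma^d}$ and the observation that $\sqrt{1/(w_d\gamma^d)}\le\sqrt{(1+a_d\gamma^d)/(b_d\gamma^d)}$ with $b_d=w_d$ and any $a_d>0$ are both fine; the lemma only asserts the existence of dimension-dependent constants, and your explicit constant is in fact sharper than the stated form. One minor point worth making explicit in a write-up: for fixed $x$ the random variables $h_i(x)$ are i.i.d.\ because they are deterministic functions of the i.i.d.\ $X_i$, which is what justifies $\mathrm{Var}(g(x))=\frac{1}{n}\mathrm{Var}(h_1(x))$. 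No gaps.
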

The proof of Lemma \ref{L2 estimate} can be found in the proof of \cite[Lemma 3.3]{wojtowytsch2021kolmogorov}.
\begin{lemma}\label{ball integral control}
    Let $F$ be a a subset of $C(Q)$.  Then for any $0<\epsilon<1,$ we have
    \begin{align*}
        \mathbb{E}_{X_i \overset{\mathrm{iid}}{\sim} U_Q}\sup_{f\in F}\big[\frac{1}{n}\sum_{i=1}^{n}\fint_{B'_{\epsilon}(X_i)}f(x)dx-\int_Qf(x)dx] \le \mathbb{E}_{X_i \overset{\mathrm{iid}}{\sim} U_Q}\sup_{f\in F}\big[\frac{1}{n}\sum_{i=1}^{n}f(X_i)-\int_Qf(x)dx].
    \end{align*}
\end{lemma}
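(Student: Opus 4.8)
The idea is to exploit the torus structure hidden in the definition of $B'_\epsilon(\cdot)$: since $B'_\epsilon(x)$ is the image in $Q$ of the metric ball $B_\epsilon(x)$ in $\mathbb{R}^d/\mathbb{Z}^d$, averaging $f$ over $B'_\epsilon(X_i)$ amounts to evaluating the periodic extension of $f$ at $X_i$ perturbed by an independent uniform draw from a torus ball, and translation invariance of the torus then lets this perturbation be reabsorbed into the sampling of $X_i$. First I would pass to the torus: let $\tilde f$ be the function on $\mathbb{R}^d/\mathbb{Z}^d$ corresponding to $f\in C(Q)$ under the measure-preserving identification of the torus with the fundamental domain $Q$, so that $\int_Q f\,dx=\int_{\mathbb{R}^d/\mathbb{Z}^d}\tilde f$ and $\fint_{B'_\epsilon(x)}f\,dy=\fint_{B_\epsilon(x)}\tilde f\,dz$, the latter ball now taken in the torus metric. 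Because translations are isometries of the torus, $\mathrm{vol}(B_\epsilon(x))$ equals a constant $v_\epsilon$ independent of $x$, and the change of variables $z=x\oplus u$ gives
\begin{align*}
\fint_{B'_\epsilon(x)}f(y)\,dy=\mathbb{E}_{U\sim\mathrm{Unif}(B_\epsilon(0))}\big[\tilde f(x\oplus U)\big],
\end{align*}
where $\mathrm{Unif}(B_\epsilon(0))$ is the uniform law on the torus ball about the origin.

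Next I would introduce i.i.d.\ copies $U_1,\dots,U_n\sim\mathrm{Unif}(B_\epsilon(0))$, independent of $X_1,\dots,X_n$, so that $\tfrac1n\sum_i\fint_{B'_\epsilon(X_i)}f-\int_Q f=\mathbb{E}_{U_{1:n}}\big[\tfrac1n\sum_i\tilde f(X_i\oplus U_i)-\int_Q f\big]$; taking $\sup_{f\in F}$ and then the outer $\mathbb{E}_{X_{1:n}}$, and pulling the $U$-expectation out through the supremum via the elementary bound $\sup_f\mathbb{E}[\,\cdot\,]\le\mathbb{E}[\sup_f\,\cdot\,]$, yields
\begin{align*}
\mathbb{E}_{X_{1:n}}\sup_{f\in F}\Big[\tfrac1n\sum_{i=1}^n\fint_{B'_\epsilon(X_i)}f-\int_Q f\Big]\le\mathbb{E}_{X_{1:n}}\mathbb{E}_{U_{1:n}}\sup_{f\in F}\Big[\tfrac1n\sum_{i=1}^n\tilde f(X_i\oplus U_i)-\int_Q f\Big].
\end{align*}
Finally I would use the distributional identity $(X_i\oplus U_i)_{i=1}^n\overset{d}{=}(X_i)_{i=1}^n$ on $Q^n$: conditioned on $U_{1:n}$, each $X_i\oplus U_i$ is uniform on the torus by translation invariance of the uniform measure and the coordinates stay independent, hence the conditional --- and therefore unconditional --- law is that of $n$ i.i.d.\ uniform points on $Q$, while $\tilde f(X_i\oplus U_i)$ equals $f$ at the corresponding point of $Q$. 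Substituting collapses the right-hand side above to $\mathbb{E}_{X_i\overset{\mathrm{iid}}{\sim}U_Q}\sup_{f\in F}\big[\tfrac1n\sum_i f(X_i)-\int_Q f\big]$, which is the claim.

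The main obstacle I anticipate is the measure-theoretic bookkeeping on the torus rather than any analytic subtlety: one must verify that $\mathrm{vol}(B'_\epsilon(x))$ is genuinely independent of $x$ and that the change-of-variables identity is valid for every $\epsilon\in(0,1)$ --- a range in which the ball can wrap around the torus, so that a naive Euclidean-ball parametrization would not push the uniform measure to the uniform measure on the torus ball, and one should instead argue directly with torus balls and translation invariance of the uniform (Haar) measure. The remaining ingredients --- the ``$\sup$ of average $\le$ average of $\sup$'' step and the distributional identity --- are immediate, modulo the standard measurability conventions for suprema inside expectations.
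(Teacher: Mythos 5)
Your proof is correct and follows essentially the same route as the paper. The paper translates all $n$ points by a single torus shift $z$ and averages over $z\in B'_{\epsilon}(0)$, whereas you perturb each $X_i$ by an independent $U_i\sim\mathrm{Unif}(B_\epsilon(0))$; beyond that cosmetic difference, the three ingredients --- the change of variables $\fint_{B'_\epsilon(x)}f\,dy=\fint_{B'_\epsilon(0)}f(x+z)\,dz$ via translation invariance on the flat torus, Jensen's inequality to move the ball-average outside the supremum, and the distributional identity that torus-translates of i.i.d.\ uniform points remain i.i.d.\ uniform --- coincide exactly with those used in the paper.
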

\begin{proof}
    In this proof, we interpret $X_i+z$ as a shift on the flat torus. Note that for a fixed $z$, $X_i$ and $X_i+z$ have the same distribution. Therefore we have
    \allowdisplaybreaks
    \begin{align*}
        &\mathbb{E}_{X_i \overset{\mathrm{iid}}{\sim} U_Q}\sup_{f\in F}\Big[\frac{1}{n}\sum_{i=1}^{n}\fint_{B'_{\epsilon}(X_i)}f(x)dx-\int_Qf(x)dx\Big] \\
        =&\mathbb{E}_{X_i \overset{\mathrm{iid}}{\sim} U_Q}\sup_{f\in F}\Big[\frac{1}{n}\sum_{i=1}^{n}\fint_{B'_{\epsilon}(0)}f(X_i+z)dz-\int_Qf(x)dx\Big] \\
        =&\mathbb{E}_{X_i \overset{\mathrm{iid}}{\sim} U_Q}\sup_{f\in F}\Big[\fint_{B'_{\epsilon}(0)}\frac{1}{n}\sum_{i=1}^{n}\big(f(X_i+z)-\int_Qf(x)dx\big)dz\Big]\\
        \le &\mathbb{E}_{X_i \overset{\mathrm{iid}}{\sim} U_Q}\fint_{B'_{\epsilon}(0)}\sup_{f\in F}\Big[\frac{1}{n}\sum_{i=1}^{n}\big(f(X_i+z)-\int_Qf(x)dx\big)\Big]dz \\
        = &\fint_{B'_{\epsilon}(0)}\mathbb{E}_{X_i \overset{\mathrm{iid}}{\sim} U_Q}\sup_{f\in F}\Big[\frac{1}{n}\sum_{i=1}^{n}\big(f(X_i+z)-\int_Qf(x)dx\big)\Big]dz\\
        = &\mathbb{E}_{X_i \overset{\mathrm{iid}}{\sim} U_Q}\sup_{f\in F}\Big[\frac{1}{n}\sum_{i=1}^{n}f(X_i)-\int_Qf(x)dx\Big].
    \end{align*}
\end{proof}

To prove Theorem \ref{local}, we begin with a lemma which is similar to Lemma \ref{poor} for locally Lipschitz activation functions that satisfy $L_t=O(t^\delta)$. We prove that for fixed $m\in \mathbb{N}$, there exists $\phi\in C^r(Q)$ which is poorly approximated by shallow neural networks with width $m$.
\begin{lemma}\label{loclipclaim}
    Let $\sigma:\mathbb{R}\rightarrow\mathbb{R}$ be a locally Lipschitz function with $L_t=O(t^\delta)$ for some $\delta\ge 0.$ Fix $m\in \mathbb{N}.$ Assume $r<\frac{d}{2}.$ Then there exists $\phi \in C^r(Q)$ such that
    \begin{align*}
        \limsup_{t\rightarrow\infty}\big(t^\gamma \inf_{f\in F_{m,t}}\|\phi-f\|_{L^2(Q)}\big)=\infty.
    \end{align*}
    holds for every $\gamma>\frac{(2+\delta)r}{d-2r}.$
\end{lemma}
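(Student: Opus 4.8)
The plan is to run the argument of Theorem~\ref{poor}, but invoking Lemma~\ref{poor1} in place of Lemma~\ref{poor2}: the family $F_{m,t}$ of width-$m$ networks with second moment at most $t$ is not a normed ball, so the rescaling trick behind Lemma~\ref{poor2} is unavailable, whereas Lemma~\ref{poor1} only requires a sequence of subsets $\{X_k\}$. I would take $Y=C^r(Q)$ with the norm of Lemma~\ref{numerical_cod}, $Z=L^2(Q)$, $W=\mathbb{R}$; here $Y$ is a Banach space and $Y\hookrightarrow Z$ since $Q$ is compact. Fix a radius constant $\theta=\theta_{d,r}\in(0,\tau]$ with $\theta\ll1$ ($\tau$ from Lemma~\ref{numerical_cod}) and set $\epsilon_n=\theta n^{-1/d}$. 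Pick a super-exponentially increasing sequence $\{n_k\}$ and write $q_k:=n_k^{[\sqrt{k}]}$ for the sequence denoted $\{m_k\}$ in Lemma~\ref{poor1} (relabelled to avoid the clash with the width $m$). The subsets in Lemma~\ref{poor1} will be $X_k:=F_{m,t_k}$, with $t_k\to\infty$ chosen below.

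The operators are $A_n(f)=\frac1n\sum_{i=1}^{n}\fint_{B'_{\epsilon_n}(X_n^i)}f\,dx$ and $A(f)=\int_Qf\,dx$ on $L^2(Q)$, for points $\{X_n^i\}$ chosen so that for every $n$ the discrepancy $\sup_{\phi\in B^Z}[A_n(\phi)-A(\phi)]$ is bounded by a constant $C_Z=C_{d,\theta}$ independent of $n$ — available from Lemma~\ref{L2 estimate} and a Markov-type selection, exactly as in Lemma~\ref{suitablepts} — and so that for $n=q_k$ the $F_{m,t_k}$-discrepancy is small. For the latter I would chain Lemma~\ref{ball integral control}, the standard symmetrization inequality, and Corollary~\ref{Loc_rademacher_control} to obtain
\[
\mathbb{E}_{X_i\sim U_Q}\ \sup_{f\in F_{m,t_k}}\Big[\tfrac1{q_k}\sum_{i=1}^{q_k}\fint_{B'_{\epsilon_{q_k}}(X_i)}f\,dx-\int_Qf\,dx\Big]\ \le\ \frac{L_{\sqrt{m\,t_k(d+1)}}\,m\,t_k\sqrt{d+1}}{\sqrt{q_k}},
\]
and then use $L_x=O(x^\delta)$ to bound the right side by $C_{d,\delta}\,m^{1+\delta/2}t_k^{\,1+\delta/2}/\sqrt{q_k}$. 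Setting
\[
t_k\ :=\ \kappa_{m,d,r,\delta}\,\Big(\tfrac{q_k^{\,1/2-r/d}}{n_k}\Big)^{\frac{2}{2+\delta}}
\]
for an appropriate $\kappa_{m,d,r,\delta}>0$ makes this at most $\tfrac{K_{\theta,d,r}}{8}\,q_k^{-r/d}/n_k$, where $K_{\theta,d,r}$ is the constant of Lemma~\ref{numerical_cod}; a Markov/union-bound step then yields, for each $k$, one set of $q_k$ points satisfying both requirements simultaneously (for $n\notin\{q_k\}$ only the $L^2$ bound is needed).

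With these operators, Lemma~\ref{numerical_cod} (valid for \emph{any} points) gives $\|A_n-A\|_{L(C^r,\mathbb{R})}\ge K_{\theta,d,r}n^{-r/d}$, so in the language of Lemma~\ref{poor1} we have $\beta=r/d$, and $\alpha=\tfrac12>\beta$ since $r<d/2$; together with $\|A_n-A\|_{L(L^2,\mathbb{R})}\le C_Z$ and $\sup_{f\in F_{m,t_k}}|(A_{q_k}-A)(f)|\le\tfrac{K_{\theta,d,r}}{8}q_k^{-r/d}/n_k$, these are exactly the hypotheses of Lemma~\ref{poor1}. The lemma then produces $\phi\in B^Y$, i.e. $\|\phi\|_{C^r}\le1$, with $\limsup_{k\to\infty}\big[(q_k^{1/2-r/d}/n_k)^{\gamma'}\inf_{f\in F_{m,t_k}}\|f-\phi\|_{L^2(Q)}\big]=\infty$ for every $\gamma'>\tfrac{2r}{d-2r}$. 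Since $q_k^{1/2-r/d}/n_k=(t_k/\kappa_{m,d,r,\delta})^{(2+\delta)/2}$, substituting and writing $\gamma=\tfrac{2+\delta}{2}\gamma'$ converts this into $\limsup_k[t_k^{\gamma}\inf_{f\in F_{m,t_k}}\|f-\phi\|_{L^2(Q)}]=\infty$ for all $\gamma>\tfrac{(2+\delta)r}{d-2r}$; finally $t_k=\kappa_{m,d,r,\delta}\,n_k^{([\sqrt{k}](1/2-r/d)-1)\frac{2}{2+\delta}}\to\infty$ and $t\mapsto\inf_{f\in F_{m,t}}\|f-\phi\|_{L^2(Q)}$ is non-increasing, so the claimed $\limsup_{t\to\infty}$ follows.

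The main obstacle — the only point where the argument genuinely departs from Theorem~\ref{poor} — is the discrepancy estimate for $F_{m,t}$: the Contraction Lemma cannot be applied to the full Barron ball, so one routes through $F_{m,D}\subset\mathrm{conv}(S_{mD})$ and controls $\mathrm{Rad}(S_{mD},\cdot)$ using only the local growth $L_x=O(x^\delta)$. The resulting bound scales like $t_k^{\,1+\delta/2}$ in the radius rather than $t_k$, which forces the exponent $\tfrac{2}{2+\delta}$ in the definition of $t_k$ and is exactly what degrades the rate from the $\delta=0$ value $\tfrac{2r}{d-2r}$ to $\tfrac{(2+\delta)r}{d-2r}$; folding all of the $m,d,r,\delta$ dependence into a single constant $\kappa_{m,d,r,\delta}$ while still meeting the $L^2$ bound with the same points is routine but requires care.
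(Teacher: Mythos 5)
Your proposal is correct and follows essentially the same route as the paper: $Y=C^r(Q)$, $Z=L^2(Q)$, $X_k=F_{m,t_k}$ fed into Lemma~\ref{poor1} with $\alpha=1/2$, $\beta=r/d$; the $L^2$ and $F_{m,t_k}$ discrepancies controlled in expectation via Lemma~\ref{L2 estimate}, Lemma~\ref{ball integral control}, and Corollary~\ref{Loc_rademacher_control} followed by a Markov selection of points; the $C^r$ lower bound holding for arbitrary points by Lemma~\ref{numerical_cod}; and the scaling $t_k \asymp \bigl(m_k^{1/2-r/d}/n_k\bigr)^{2/(2+\delta)}$ that converts the Lemma~\ref{poor1} rate $\gamma'>\frac{2r}{d-2r}$ into $\gamma>\frac{(2+\delta)r}{d-2r}$. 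The one detail you gloss over is that $L_{\sqrt{m t_k(d+1)}}\le C\bigl(\sqrt{m t_k(d+1)}\bigr)^{\delta}$ only holds once $t_k$ exceeds the threshold implicit in $L_x=O(x^\delta)$; the paper handles this by re-indexing the super-exponentially increasing sequence so that $t_k$ clears the threshold for all $k$, whereas you implicitly rely on $t_k\to\infty$ and the $\limsup$ caring only about large $k$ — both fixes are fine, but it is worth stating explicitly to avoid the apparent circularity between defining $t_k$ and invoking the Lipschitz bound used in that definition.
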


\begin{proof}
Since the approximators have the same number of neurons, their union does not form a vector space. Therefore, instead of applying Lemma \ref{poor2}, we utilize Lemma \ref{poor1}.

Let $Y=C^r(Q)$ equipped with the norm defined in Lemma \ref{numerical_cod}, $Z=L^2(Q)$, and $W=\mathbb{R}$. Given that $L_t=O(t^\delta)$, there exist $T=T_\sigma>0$ and $C=C_\sigma>0$ such that $L_t\le Ct^\delta$ for $t\ge T$. Define $A\in L(Z,W)$ be defined as in (\ref{operators}) in the proof of Theorem \ref{poor}. For any $n\in\mathbb{N}$, choose $\epsilon_n=\gamma n^{-1/d}$, as done in the proof of Theorem \ref{poor}. 

Note that for any super-exponentially increasing sequence $\{n_k\}_{k\ge1}$ and $m_k=n_k^{[\sqrt{k}]}$, we have
\begin{align}\label{inftylimit}
\lim_{k\rightarrow\infty}\frac{m_k^{\frac{1}{2}-\frac{r}{d}}}{n_k}=\lim_{k\rightarrow\infty}n_k^{(\frac{1}{2}-\frac{r}{d})[\sqrt{k}]-1}=\infty.
\end{align}
Therefore, there exists $k_0\in\mathbb{N}$ such that
\begin{equation}\label{inequality}
T^2\le \Big(\frac{m_k^{\frac{1}{2}-\frac{r}{d}}K_{\gamma,d,r}}{24n_kC m^{1+\frac{\delta}{2}}(d+1)^{\frac{1}{2}+\frac{\delta}{2}}}\Big)^{\frac{1}{1+\frac{\delta}{2}}} 
\end{equation}
for all $k\ge k_0,$ where $K_{\gamma,d,r}$ is the constant in Lemma \ref{numerical_cod}. Now, define $n'_k=n_{k+k_0}$. Then $n'_k$ is also a super-exponentially increasing sequence because it is strictly increasing with $n'_1=n_{k_0+1}>n_1\ge2$ and
\begin{align*}
    \sum_{l>k}\frac{1}{n'_l}=\sum_{l>k+k_0}\frac{1}{n_l}\le\frac{2}{n_{k+k_0}^{k+k_0+1}}\le\frac{2}{n_{k+k_0}^{k+1}}=\frac{2}{(n'_k)^{k+1}}
\end{align*}
holds for all $k\in\mathbb{N}.$ Define $m'_k=(n'_k)^{[\sqrt{k}]}.$ Then for any $k\ge1,$
\begin{equation*}
T^2\le \Big(\frac{(m_k')^{\frac{1}{2}-\frac{r}{d}}K_{\gamma,d,r}}{24n_k'C m^{1+\frac{\delta}{2}}(d+1)^{\frac{1}{2}+\frac{\delta}{2}}}\Big)^{\frac{1}{1+\frac{\delta}{2}}} 
\end{equation*}
holds. From this observation, we can choose a super-exponentially increasing sequence $\{n_k\}_{k\ge1}$ and $m_k=n_k^{[\sqrt{k}]}$ such that inequality (\ref{inequality}) holds for all $k\ge1.$

Choose a super-exponentially increasing sequence $\{n_k\}_{k\ge1}$ and $m_k=n_k^{[\sqrt{k}]}$ that satisfy (\ref{inequality}) for all $k\ge1$. Define time scales $\{t_k\}_{k\ge1}$ as $t_k=\Big(\frac{m_k^{\frac{1}{2}-\frac{r}{d}}K_{\gamma,d,r}}{24n_kC m^{1+\frac{\delta}{2}}(d+1)^{\frac{1}{2}+\frac{\delta}{2}}}\Big)^{\frac{1}{1+\frac{\delta}{2}}}.$ From (\ref{inequality}), $t_k\ge T^2$ holds for all $k\ge1.$ Now, let us find appropriate $\{A_n\}_{n\ge1} \in L(Z,W)$ for Lemma \ref{poor1}.

When $n\not\in\{m_k\}_{k\ge1}$, define $A_n\in L(Z,W)$ as in the proof of Theorem \ref{poor}. Then we have
\begin{align*}
    \|A_n-A\|_{L(Y,W)} \ge K_{\gamma,d,r} n^{-r/d}, \quad \|A_n-A\|_{L(Z,W)} \le 3C_{d,\gamma}
\end{align*}
for all $n\not\in\{m_k\}_{k\ge1}.$ The constants $K_{\gamma,d,r}$ and $C_{d,\gamma}$ are are idential to those used in the proof of Theorem \ref{poor}.

When $n=m_k,$ we construct the linear operators $\{A_{m_k}\}_{k\ge1}$ as follows. Since $t_k\ge T^2$ for all $k\ge1$, $\sqrt{mt_k(d+1)}\ge T$ hold, which implies $L_{\sqrt{mt_k(d+1)}}\le C(\sqrt{mt_k(d+1)})^\delta$. From Lemma \ref{ball integral control} and Corollary \ref{Loc_rademacher_control},
\allowdisplaybreaks
\begin{align}\label{expectationbound}
    &\mathbb{E}_{X_i \overset{\mathrm{iid}}{\sim} U_Q}\sup_{f\in F_{m,t_k}}\big[\frac{1}{m_k}\sum_{i=1}^{m_k}\fint_{B'_{\epsilon_{m_k}}(X_i)}f(x)dx-\int_Qf(x)dx] \nonumber \\
    \le &\mathbb{E}_{X_i \overset{\mathrm{iid}}{\sim} U_Q}\sup_{f\in F_{m,t_k}}\big[\frac{1}{m_k}\sum_{i=1}^{m_k}f(X_i)-\int_Qf(x)dx] \nonumber \\
    \le &2\times\mathbb{E}_{X_i \overset{\mathrm{iid}}{\sim} U_Q} \mathrm{Rad}(F_{m,t_k},\{X_1,\cdots,X_{m_k}\}) \nonumber \\
    \le& 2\times\frac{L_{\sqrt{mt_k(d+1)}}\times mt_k\sqrt{d+1}}{2\sqrt{m_k}} \nonumber \\
    \le &\frac{C(\sqrt{mt_k(d+1)})^\delta mt_k\sqrt{d+1}}{\sqrt{m_k}} \nonumber \\
    =&C m^{1+\frac{\delta}{2}}(d+1)^{\frac{1}{2}+\frac{\delta}{2}}\frac{t_k^{1+\frac{\delta}{2}}}{\sqrt{m_k}}=\frac{K_{\gamma,d,r}m_k^{-\frac{r}{d}}}{24n_k}
\end{align}
holds. The second inequality holds because the expected value of the representativeness is bounded by twice the expected Rademacher complexity~\cite[Lemma 26.2]{shalev2014understanding}. Now with Lemma \ref{L2 estimate} and (\ref{expectationbound}), by using a simple probabilistic argument using the Markov inequality, there exist $m_k$ points $\{X^1_{m_k},\cdots,X^{m_k}_{m_k}\} \subset Q$ such that
\begin{align*}
        &\sup_{\phi \in B^Y} \Big[ \frac{1}{m_k} \sum_{i=1}^{m_k} \fint_{B'_{\epsilon_{m_k}}(X^i_{m_k})} \phi dx-\int_{Q} \phi dx \Big] \ge K_{\gamma,d,r} m_k^{-r/d}, \\
        &\sup_{\phi \in B^Z} \Big[ \frac{1}{m_k} \sum_{i=1}^{m_k} \fint_{B'_{\epsilon_{m_k}}(X^i_{m_k})} \phi dx-\int_{Q} \phi dx \Big] \le 3\sqrt{\frac{1+a_d\gamma^d}{b_d\gamma^d}},\\
        &\sup_{f\in F_{m,t_k}} \Big[\frac{1}{m_k}\sum_{i=1}^{m_k}\fint_{B'_{\epsilon_{m_k}}(X^i_{m_k})}f(x)dx-\int_Qf(x)dx \Big]  \le \frac{K_{\gamma,d,r}m_k^{-\frac{r}{d}}}{8n_k}    
\end{align*}
hold, where the first inequality is due to Lemma \ref{numerical_cod}. Now using these points $\{X^1_{m_k},\cdots,X^{m_k}_{m_k}\} \subset Q$, we define $\{A_{m_k}\}_{k\ge1}\subset L(Z,W)$ as
\begin{align*}
    A_{m_k}(\phi)=\frac{1}{m_k} \sum_{i=1}^{m_k} \fint_{B'_{\epsilon_{m_k}}(X^i_{m_k})} \phi dx.
\end{align*}

With the constructed linear operators $\{A_n\}_{n\ge1}$, by setting $X_k=F_{m,t_k}$ for $k\ge1$, one can easily verify that the conditions in Lemma \ref{poor1} are satisfied with $\alpha=\frac{1}{2}, \beta=\frac{r}{d}, c_Y=K_{\gamma,d,r}$ and $C_Z=\max\big\{3\sqrt{\frac{1+a_d\gamma^d}{b_d\gamma^d}},3C_{d,\gamma}\big\}.$ Now from Lemma \ref{poor1}, there exists $\phi\in B^Y$ such that for every $\gamma>\frac{r/d}{1/2-r/d},$ we have
\begin{align}\label{loc_first}
\limsup_{k\rightarrow\infty}\Big[ \big(\frac{m_k^{1/2-r/d}}{n_k}\big)^\gamma \inf_{f\in F_{m,t_k}} \|f-\phi\|_{L^2(Q)} \Big]=\infty.
\end{align}
Note that $\frac{m_k^{\frac{1}{2}-\frac{r}{d}}}{n_k}=At_k^{1+\frac{\delta}{2}}$ where $A=\frac{24C m^{1+\frac{\delta}{2}}(d+1)^{\frac{1}{2}+\frac{\delta}{2}}}{K_{\gamma,d,r}}$ is a constant that does not depend on $k$. Hence, (\ref{loc_first}) can be written as
\begin{align}\label{loc_second}
\limsup_{k\rightarrow\infty}\Big[t_k^{\gamma(1+\frac{\delta}{2})} \inf_{f\in F_{m,t_k}} \|f-\phi\|_{L^2(Q)} \Big]=\infty
\end{align}
for every $\gamma>\frac{r/d}{1/2-r/d}.$ From (\ref{inftylimit}), $\lim_{k\rightarrow\infty}t_k=\infty$ holds, and with (\ref{loc_second}), we conclude
\begin{align*}
\limsup_{t\rightarrow\infty}\big(t^\gamma \inf_{f\in F_{m,t}}\|\phi-f\|_{L^2(Q)}\big) \ge \limsup_{k\rightarrow\infty}\big( t_k^{\gamma} \inf_{f\in F_{m,t_k}} \|f-\phi\|_{L^2(Q)} \big)=\infty.
\end{align*}
for every $\gamma>(1+\frac{\delta}{2})\times\frac{r/d}{1/2-r/d}=\frac{(2+\delta)r}{d-2r}$. This finishes the proof of Lemma \ref{loclipclaim}.
\end{proof}

We now present the proof of Theorem \ref{local}.
\begin{proof}
    Choose any $\phi\in C^r(Q)$ that satisfies Lemma \ref{loclipclaim}. Let $\pi_m^0$ be the initial parameter distribution at time $t=0$, and let $\pi^t_m$ denote the evolution of $\pi_m^0$ under the Wasserstein gradient flow at time $t>0$. By Lemma \ref{Second moment evolution}, there exists a positive constant $K=K_{\pi^0_m,\phi}$ such that $N(\pi^t_m)\le Kt$ holds for all $t\ge1.$ Note that the population risk at time $t$, $R(\pi^t_m),$ is equal to $\frac{1}{2}\|\phi-f_{\pi^t_m}\|^2_{L^2(Q)}.$ Therefore by Lemma \ref{loclipclaim},
    \allowdisplaybreaks
    \begin{align*}
        \limsup_{t\rightarrow\infty}\big[t^\gamma R(\pi^t_m)\big]&=\frac{1}{2}\limsup_{t\rightarrow\infty}\big(t^\gamma \|\phi-f_{\pi^t_m}\|_{L^2(Q)}^2\big) \\
        &\ge\frac{1}{2} \limsup_{t\rightarrow\infty}\big(t^\gamma\inf_{N(\pi_m)\le Kt}\|\phi-f_{\pi_m}\|_{L^2(Q)}^2\big) \\
        &= \frac{1}{2} \limsup_{t\rightarrow\infty}\big(t^\gamma \inf_{f\in F_{m,Kt}}\|\phi-f\|_{L^2(Q)}^2\big)\\
        &= \frac{1}{2}(\frac{1}{K})^\gamma \times\limsup_{t\rightarrow\infty}\big(t^\gamma \inf_{f\in F_{m,t}}\|\phi-f\|_{L^2(Q)}^2\big)=\infty
    \end{align*}
    holds for every $\gamma>2\times\frac{(2+\delta)r}{d-2r}=\frac{(4+2\delta)r}{d-2r}.$
\end{proof}

\end{document}